\crefname{equation}{}{}
\newtheorem{theorem}{Theorem}
\newtheorem{lemma}[theorem]{Lemma}
\newtheorem{fact}[theorem]{Fact}
\newtheorem{remk}[theorem]{Remark}
\newtheorem{exmp}[theorem]{Example}
\def\FullBox{\hbox{\vrule width 8pt height 8pt depth 0pt}}
\def\qed{\ifmmode\qquad\FullBox\else{\unskip\nobreak\hfil
\penalty50\hskip1em\null\nobreak\hfil\FullBox
\parfillskip=0pt\finalhyphendemerits=0\endgraf}\fi}
\def\qedsketch{\ifmmode\Box\else{\unskip\nobreak\hfil
\penalty50\hskip1em\null\nobreak\hfil$\Box$
\parfillskip=0pt\finalhyphendemerits=0\endgraf}\fi}
\newcommand{\lr}[1]{\left (#1\right)} 
\newcommand{\lrs}[1]{\left [#1 \right]} 
\newcommand{\lrc}[1]{\left \{#1\right\}} 
\newcommand{\lra}[1]{\left |#1\right|} 
\newcommand{\ip}[1]{\left \langle #1 \right \rangle}
\let\P\undefined
\NewDocumentCommand{\P}{o}{\mathbb P{\IfValueT{#1}{\lr{#1}}}}
\NewDocumentCommand{\E}{o}{\mathbb E\IfValueT{#1}{\lrs{#1}}}
\DeclareMathOperator{\V}{Var}
\NewDocumentCommand{\Var}{o}{\V\IfValueT{#1}{\lrs{#1}}}
\NewDocumentCommand{\1}{o}{\mathds 1{\IfValueT{#1}{\lr{#1}}}}
\let\emptyset\varnothing
\renewcommand{\hat}{\widehat}
\renewcommand{\tilde}{\widetilde}
\newcommand{\innerprod}[2]{\langle #1, #2\rangle}
\newcommand{\Reg}{\overline{Reg}_T}
\newcommand{\Regdrift}{\overline{Reg}_T^{drift}}
\newcommand{\dmax}{d_{\mathrm{max}}}
\newcommand{\sigmamax}{\sigma_{\mathrm{max}}}
\newcommand{\hatsigmamax}{\hat\sigma_{\mathrm{max}}}
\title{A Best-of-both-worlds Algorithm for Bandits with Delayed Feedback with Robustness to Excessive Delays}
\author{%
  Saeed Masoudian\\
  University of Copenhagen\\
  \texttt{saeed.masoudian@di.ku.dk} \\
   \And
   Julian Zimmert \\
   Google Research \\
   \texttt{zimmert@google.com} \\
   \AND
   Yevgeny Seldin\\
  University of Copenhagen\\
  \texttt{seldin@di.ku.dk} \\
}
\date{}
\begin{document}

\maketitle

\begin{abstract}

We propose a new best-of-both-worlds algorithm for bandits with variably delayed feedback. In contrast to prior work, which required prior knowledge of the maximal delay $\dmax$ and had a linear dependence of the regret on it, our algorithm can tolerate arbitrary excessive delays up to order $T$ (where $T$ is the time horizon). The algorithm is based on three technical innovations, which may all be of independent interest: (1) We introduce the first implicit exploration scheme that works in best-of-both-worlds setting. (2) We introduce the first control of distribution drift that does not rely on boundedness of delays. The control is based on the implicit exploration scheme and adaptive skipping of observations with excessive delays. (3) We introduce a procedure relating standard regret with drifted regret that does not rely on boundedness of delays. At the conceptual level, we demonstrate that complexity of best-of-both-worlds bandits with delayed feedback is characterized by the amount of information missing at the time of decision making (measured by the number of outstanding observations) rather than the time that the information is missing (measured by the delays).


\end{abstract}

\section{Introduction}

\begin{table*}[t]
\caption{Comparison to state-of-the-art. The following notation is used: $T$ is the time horizon, $K$ is the number of arms, $i$ indexes the arms, $\Delta_i$ is the suboptimality gap or arm $i$, $\sigmamax$ is the maximal number of outstanding observations, $D=\sum_{t=1}^T d_t$ is the total delay, $\Scal \subseteq [T]$ is a set of skipped rounds, $\bar \Scal = [T]\setminus\Scal$ is the set of non-skipped rounds, $D_{\bar\Scal}=\sum_{t\in\bar\Scal} d_t$ is the total delay in the \emph{non}-skipped rounds, and $\dmax$ is the maximal delay. We have $\min_{\Scal}\lr{|\Scal|+\sqrt{D_{\bar \Scal}}}\leq \sqrt{D}$ and $\sigmamax\leq\dmax$, and in some cases $\min_{\Scal}\lr{|\Scal|+\sqrt{D_{\bar \Scal}}}\ll \sqrt{D}$ and $\sigmamax\ll\dmax$. 
}
\label{tab:results}
\centering
\smallskip
\begin{tabular}{l l l l}
\hline
Paper & Key results\\
\hline
\citet{joulani2013} & Stochastic bound: $\mathcal{O}\lr{\sum_{i:\Delta_i>0} \lr{\frac{\log T}{\Delta_i} + \sigmamax \Delta_i}}$\\
\hline
\citet{zimmert2020} & Adversarial bound \\
&~~~~~~~~~~without skipping: $\mathcal{O}\lr{\sqrt{KT} + \sqrt{D\log K}}$\\
&~~~~~~~~~~with skipping:$~~~~~~\mathcal{O}\lr{\sqrt{KT}+\min_{\Scal}\lr{|\Scal|+\sqrt{D_{\bar \Scal}\log K}}}$\\
&~~~~~~~~~~(\citet{masoudian2022} provide a matching lower bound)\\
\hline
\citet{masoudian2022} & Best-of-both-worlds bound, stochastic part\\
&$\mathcal{O}\lr{\sum_{i \neq i^*}\lr{\frac{\log T}{\Delta_i} + \frac{\sigmamax}{\Delta_{i}\log K}} + \dmax K^{1/3}\log K}$\\
The results assume oracle&Best-of-both-worlds bound, adversarial part\\
knowledge of $\dmax$&$\mathcal{O}\lr{\sqrt{TK} + \sqrt{D\log K} + \dmax K^{1/3} \log K}$\\

\hline
Our paper  
& Best-of-both-worlds bound, stochastic part\\
&$\Ocal\lr{\sum_{i \neq i^*} \lr{\frac{\log T}{\Delta_i}  + \frac{\sigmamax}{\Delta_{i}\log K}} + K\sigmamax + S^*}$, where\\
& $S^* = \Ocal\lr{\min\lr{\dmax K^{\frac23} \log K,\min_{\Scal} \lrc{|\Scal| + \sqrt{ D_{\bar \Scal} K^{\frac23}\log K }}}}$\\
&Best-of-both-worlds bound, adversarial part\\
&$\Ocal\lr{\sqrt{KT} + \min_{\Scal} \lrc{|\Scal| + \sqrt{ D_{\bar \Scal} \log K }} + S^* + K \sigmamax }$\\
\end{tabular}
\end{table*}

Delayed feedback is an ubiquitous challenge in real-world applications. Study of multiarmed bandits with delayed feedback has started at least four decades ago in the context of adaptive clinical trials \citep{Sim77,Eic88}, the same problem that has earlier motivated introduction of the bandit model itself \citep{thompson1933}. We focus on robustness to delay outliers and to the loss generation mechanism. In practice occasional delay outliers are common (e.g., observations that never arrive). Robustness to the loss generation mechanism implies that the algorithm does not need to know whether the losses are stochastic or adversarial, but still provides regret bounds that match the optimal stochastic rates if the losses happen to be stochastic, while guaranteeing the adversarial rates if they are not (so-called best-of-both-worlds regret bounds). Such algorithms are important from a practical viewpoint, because the loss generation mechanism can rarely assumed to be stochastic, but it is still desirable to have tighter regret bounds if it happens to be. From the theoretical perspective both forms of robustness are interesting and challenging, requiring novel analysis tools and yielding better understanding of the problems.

\citet{joulani2013} have studied multiarmed bandits with delayed feedback under the assumption that the rewards are stochastic and the delays are sampled from a fixed distribution. They provided a modification of the UCB1 algorithm for stochastic bandits with non-delayed feedback \citep{auer2002a}. They have shown that the regret of the modified algorithm is $O\lr{\sum_{i:\Delta_i>0} \lr{\frac{\log T}{\Delta_i} + \sigmamax \Delta_i}}$, where $i$ indexes the arms, $\Delta_i$ is the suboptimality gap of arm $i$, $T$ is the time horizon (unknown to the algorithm), and $\sigmamax$ is the maximal number of outstanding observations. (An observation is counted as outstanding at round $t$ if it originates from round $t$ or earlier, but due to delay it was not revealed to the algorithm by the end of round $t$. The number of outstanding observations $\sigma_t$ at round $t$ is the number of actions that have already been played, but their outcome was not observed yet. We also call $\sigma_t$ the [running] count of outstanding observations. The maximal number of outstanding observations $\sigmamax$ is the maximal value that $\sigma_t$ takes and is unknown to the algorithm.) The result implies that in the stochastic setting the delays introduce an additive term in the regret bound, proportional to the maximal number of outstanding observation.

In the adversarial setting, multiarmed bandits with delayed feedback were first analyzed under the assumption of uniform delays \citep{neu2010online,neu2014online}. For this setting \citet{cesa2019delay} have shown an $\Omega(\sqrt{KT}+\sqrt{dT\log K})$ lower bound and an almost matching upper bound, where $K$ is the number of arms and $d$ is a fixed delay. The algorithm of \citeauthor{cesa2019delay} is a modification of the EXP3 algorithm of \citet{auer2002b}. \citeauthor{cesa2019delay} used a fixed learning rate that is tuned based on the knowledge of $d$. The analysis is based on control of the drift of the distribution over arms played by the algorithm from round $t$ to round $t+d$. \citet{thune2019} and \citet{bistritz2019} provided algorithms for variable adversarial delays, but under the assumption that the delays are known ``at action time'', meaning that the delay $d_t$ is known at time $t$, when the action is taken, rather that at time $t+d_t$, when the observation arrives. The advanced knowledge of delays was used to tune the learning rate and control the drift of played distribution from round $t$, when an action is played, to round $t+d_t$, when the observation arrives. Alternatively, an advance knowledge of the cumulative delay up to the end of the game could be used for the same purpose. Finally, \citet{zimmert2020} derived an algorithm for the adversarial setting that required no advance knowledge of delays and matched the lower bound of \citet{cesa2019delay} within constants. The algorithm and analysis of \citeauthor{zimmert2020} avoid explicit control of the distribution drift and are parameterized by running counts of the number of outstanding observations $\sigma_t$, which is an empirical quantity that is observed at time $t$ (``at the time of action'').

\citet{masoudian2022} attempted to extend the algorithm of \citet{zimmert2020} to the best-of-both-worlds setting. 
The stochastic part of the analysis of \citeauthor{masoudian2022} is based on a direct control of the distribution drift. The control is achieved by damping the learning rate to make sure that the played distribution on arms is not changing too much from round $t$, when an action is played, to round $t+d_t$, when the loss is observed. Highly varying delays cannot be treated with this approach, because fast learning rates limit the range $d_t$ for which the drift is under control, while slow learning rates prevent learning. Therefore, \citeauthor{masoudian2022} had to reintroduce the assumption that that the maximal delay $\dmax$ is known, and used it to tune the learning rate. Unfortunately, damping of the learning rate to control the drift over $\dmax$ rounds made $\dmax$ show up additively in the bound, meaning that potential presence of even a single delay of order $T$ made both the stochastic and the adversarial bounds linear in the time horizon. We emphasise that the linear dependence of the regret on $\dmax$ is real and not an artefact of the analysis, because it comes from damped learning rate.


We introduce a different best-of-both-worlds modification of the algorithm of \citet{zimmert2020} that is fully parameterized by the running count of outstanding observations and requires no advance knowledge of delays or the maximal delay $\dmax$. Our algorithm is based on a careful augmentation of the algorithm of \citeauthor{zimmert2020} with implicit exploration (described below), followed by application of a skipping technique (also described below) as a tool to limit the time span over which we need to control the distribution shift. 


Implicit exploration was introduced by \citet{Neu2015} to control the variance of importance-weighted loss estimates in adversarial bandits. But the exploration parameters add up linearly to the regret bound, making it highly challenging to design a scheme for best-of-both-worlds setting. The implicit exploration schedule of \citeauthor{Neu2015} leads to $\Omega(\sqrt{T})$ regret bound and, therefore, unsuitable for that. \citet{jin2022} introduced a different schedule for adversarial Markov decision processes with delayed feedback. However, it is unknown whether their schedule can work in a stochastic analysis. We introduce a novel schedule and show that it works in best-of-both-worlds setting.

Skipping was introduced by \citet{thune2019} as a way to limit the dependence of an algorithm on a small number of excessively large delays. The idea is that it is ``cheaper'' to skip a round with an excessively large delay and bound the regret in the corresponding round by 1, than to include it in the core analysis. \citeauthor{thune2019} have assumed prior knowledge of delays, but \citet{zimmert2020} have perfected the technique by basing it on a running count of outstanding observations. In both works skipping was an optional add-on aimed to improve regret bounds in case of highly unbalanced delays. In our work skipping becomes an indispensable part of the algorithm, because, apart from making the algorithm robust to a few excessively large delays, it also limits the time span over which the control of distribution drift is needed.

In \cref{tab:results} we compare our results to state of the art. In a nutshell, we replace terms dependent on $\dmax$ by terms dependent on $\sigmamax$, and terms dependent on the square root of the total cumulative delay $D = \sum_{t=1}^T d_t$, by terms dependent on the number of skipped rounds $|\Scal|$ and a square root of the cumulative delay $D_{\bar\Scal}=\sum_{t\in\bar\Scal}d_t$ in the non-skipped rounds $\bar \Scal$ (those with the smaller delay). This yields robustness to excessive delays, because neither $\sigmamax$ nor $\min_{\Scal}\lr{\lra{\Scal} + \sqrt{D_{\bar \Scal}}}$ depend on the magnitude of delay outliers. By contrast, both the stochastic and the adversarial regret bounds of \citet{masoudian2022} become linear in $T$ in presence of a single delay of order $T$. 


There are also additional benefits. It has been shown that $\sigmamax \leq \dmax$, and in some cases $\sigmamax \ll \dmax$ \citep{joulani2013,masoudian2022}. For example, if the first observation has delay $T$, and the remaining observations have zero delay, then $\dmax = T$, but $\sigmamax = 1$. We also have that $\min_{\Scal}\lr{|\Scal|+\sqrt{D_{\bar \Scal}}} \leq \sqrt{D}$, because $\Scal = \emptyset$ is part of the minimization on the left, and in some cases $\min_{\Scal}\lr{|\Scal|+\sqrt{D_{\bar \Scal}}} \ll \sqrt{D}$. For example, if the delays in the first $\sqrt{T}$ rounds are of order $T$, and the delays in the remaining rounds are zero, then $\min_{\Scal}\lr{|\Scal|+\sqrt{D_{\bar \Scal}}} = \Ocal\lr{\sqrt{T}}$, but $\sqrt{D} = \Omega\lr{T^{3/4}}$ \citep{thune2019}. Therefore, bounds that exploit skipping are preferable over bounds that do not, and for some problem instances the improvement is significant. In Appendix~\ref{sec:skipping-benefit} we show that bounds with an additive term $\dmax$, including the results of \citet{masoudian2022}, cannot benefit from skipping, in contrast to ours.

The following list highlights our main contributions.

1. We provide the first best-of-both-worlds algorithm for bandits with delayed feedback that is robust to delay outliers. It improves both the stochastic and the adversarial regret bounds relative to the work of \citet{masoudian2022}, which lacks such robustness. For some problem instances the improvement is dramatic, e.g., in presence of a single delay of order $T$ both the stochastic and the adversarial regret bounds of \citeauthor{masoudian2022} are of order $T$, whereas our bounds are unaffected.

2. We provide an efficient technique to control the distribution drift under highly varying delays.

3. We provide the first implicit exploration scheme that works in best-of-both-worlds setting. 

4. We provide a procedure relating drifted regret to normal regret in presence of delay outliers.

5. At the conceptual level, we show that best-of-both-worlds regret depends on the amount of information missing at the time of decision making (the number of outstanding observations) rather than the time that the information is missing (the delays). It was shown to be the case for the stochastic and adversarial regimes in isolation \citep{joulani2013,zimmert2020}, but we are the first to show that it is also the case for best-of-both-worlds. 

\section{Problem setting}
We study the problem of multi-armed bandit with variable delays. In each round $t=1,2,\ldots$, the learner picks an action $I_t$ from a set of $K$ arms and immediately incurs a loss $\ell_{t, I_t}$ from a loss vector $\ell_t \in [0,1]^K$. However, the incurred loss is observed by the learner only after a delay of $d_t$, at the end of round $t+d_t$. The delays are arbitrary and chosen by the environment. We use $\sigma_t$ to denote the number of outstanding observations at time $t$ defined as $\sigma_t = \sum_{s \leq t} \1(s + d_s > t)$ and $\sigmamax = \max_{t \in [T]}{\sigma_t}$ to be the maximal number of outstanding observations. We consider two regimes for generation of losses by the environment: oblivious adversarial and stochastic.

We use pseudo-regret to compare the expected total loss of the learner's strategy to that of the best fixed action in hindsight. Specifically, the pseudo-regret is defined as:
\begin{align*}
    \overline{Reg}_T = \EE\left[\sum_{t=1}^{T} \ell_{t,I_t}\right] - \min_{i \in [K]} \EE \left[\sum_{t=1}^{T}  \ell_{t,i} \right] = \EE\left[\sum_{t=1}^{T} \left(\ell_{t,I_t} - \ell_{t,i_T^*}\right) \right],
\end{align*}
where $i_T^* = \min_{i \in [K]} \EE \left[\sum_{t=1}^{T}  \ell_{t,i} \right]$ is the best action in hindsight. In the oblivious adversarial setting, the losses are assumed to be deterministic and independent of the actions taken by the algorithm. As a result, the expectation in the definition of $i_T^*$ can be omitted and the pseudo-regret definition coincides with the expected regret. Throughout the paper we assume that $i^*_T$ is unique. This is a common simplifying assumption in best-of-both-worlds analysis \citep{zimmert2021}. Tools for elimination of this assumption can be found in \citet{ito2021}.

\section{Algorithm}

\begin{algorithm}[t]
\caption{Best-of-both-worlds algorithm for bandits with delayed feedback}
\label{alg:FTRL-delay}
\DontPrintSemicolon
\LinesNumberedHidden
\KwInit{$\Scal_0 = \emptyset$, $\Dcal_0 = 0$, and $\hat{L}_0^{obs} = \mathbf{0}$, where $\mathbf{0}$ is the zero vector in $\RR^K$}\;
\For{$t= 1,2,\ldots$}{
\emph{\color{gray}// Playing an arm and receiving observations (except from skipped rounds)}\;
Set $x_t = \arg\min_{x\in\Delta^{K-1}} \langle\Lhat_{t-1}^{obs},x\rangle + F_t(x)$ \emph{~~~~~~~~~~~~~~~~~~~~~~~~~~~~~~~~~~~~~~~~~~~~\color{gray}// $F_t$ is defined in \eqref{eq:regularizer}}\;
Sample $I_t \sim x_t$ \;

\For{$s: (s+d_s = t)  \wedge (s \notin \Scal_{t-1})$}{
        Observe $(s,\ell_{s,I_s})$\;
        $\hat L_t^{obs} = \hat L_{t-1}^{obs} + \hat\ell_s$ ~~~~~~~~~~~~~~~~~~~~~~~~~~~~~~~~~~~~~~~~~~~~~~~~~~~~~~~~~~~~~~~~~~~~~~~~~~~~~~~~~~~~\emph{\color{gray}// $\hat\ell_s$ is defined in \eqref{eq:lossestimation}}\;  
}
\emph{\color{gray}// Counting ``active'' outstanding observations and updating the skipping threshold}\;
$\mbox{Set } \hat\sigma_t = \sum_{s \in [t-1] \setminus \Scal_{t-1}} \1(s + d_s > t)$\hspace*{\fill}\;
$\mbox{Update } \Dcal_t = \Dcal_{t-1} + \hat\sigma_t$\;
$\mbox{Set } \dmax^t = \sqrt{\Dcal_{t} / \lr{49 K^{\frac{2}{3}} \log K}}$\;
\emph{\color{gray}// Skipping observations with excessive delays (by Lemma \ref{lem:elimination} at most one is skipped at a time)}\;
\For{$s \in [t-1] \setminus \Scal_{t-1}$}{
    \If{$\min\lrc{d_s, t-s} \geq \dmax^t$\label{alg:if-line}}{
        $\Scal_t = \Scal_{t-1} \cup \lrc{s}$ ~~~~~~~~~~~~~~~~~\color{gray}// If the waiting time $t-s$ exceeds $\dmax^t$, then $s$ is skipped\;
    }
    \Else{$\Scal_t = \Scal_{t-1}$}
}
}
\end{algorithm}

The algorithm is a best-of-both-worlds modification of the adversarial FTRL algorithm with hybrid regularizer by \citet{zimmert2020}. It is provided in Algorithm \ref{alg:FTRL-delay} display. The modification includes biased loss estimators (implicit exploration) and adjusted skipping threshold. 
The algorithm maintains a set of skipped rounds $\Scal_t$ (initially empty), a cumulative count of ``active'' outstanding observations (those that have not been skipped yet), and a vector of cumulative observed loss estimates $\Lhat^{obs}_t$ from non-skipped rounds. At round $t$ the algorithm constructs an FTRL distribution $x_t$ over arms using regularizer $F_t$ defined in equation \eqref{eq:regularizer} below, and samples an arm according to $x_t$. Then it receives the observations that arrive at round $t$, except those that come from the skipped rounds, and updates the vector $\Lhat_t^{obs}$ of cumulative loss estimates. The loss estimates $\hat \ell_t$ are defined below in equation \eqref{eq:lossestimation}. Then it counts the number of ``active'' outstanding observations $\hat \sigma_t$ (those that belong to non-skipped rounds), updates the cumulative count of outstanding observations $\Dcal_t$, and computes the skipping threshold $\dmax^t = \sqrt{\frac{\Dcal_{t}}{49K^{2/3}\log K }}$. Finally, it adds rounds $s$ for which the observation has not arrived yet and the waiting time $(t-s)$ exceeds the skipping threshold $\dmax^t$ to the set of skipped rounds $\Scal_t$. Lemma~\ref{lem:elimination}, which is an adaptation of \citet[Lemma 5]{zimmert2020} to our skipping rule, shows that at most one round $s$ is skipped at a time (at most one index $s$ satisfies the if-condition for skipping in Line~\ref{alg:if-line} of the algorithm for a given $t$).

We use implicit exploration to control importance-weighted loss estimates. The idea of using implicit exploration is inspired by the works of \citet{Neu2015} and \citet{jin2022}, but its parametrization and purpose are different from prior work. To the best of our knowledge, it is the first time implicit exploration is used for best-of-both-worlds bounds. For any $s, t \in [T]$ with $s\leq t$ we define implicit exploration terms $\lambda_{s,t} = e^{-\frac{\Dcal_{t}}{\Dcal_{t} - \Dcal_s}}$.
Our biased importance-weighted loss estimators are defined by
\begin{equation}\label{eq:lossestimation}
    \hat\ell_{t,i} = \frac{\ell_{t,i}\1[I_t = i]}{\max\lrc{x_{t,i}, \lambda_{t,t+\hat{d}_t}}},
\end{equation}
where $\hat{d}_{s} = \min\big(d_s, \min\lrc{(t-s):t-s \geq \dmax^t}\big)$  denotes the time that the algorithm waits for the observation from round $s$. It is the minimum of the delay $d_s$, and the time $(t-s)$ to the first round when the waiting time exceeds the skipping threshold $\dmax^t$.

Similar to \citet{zimmert2020}, we use a hybrid regularizer based on a combination of the negative Tsallis entropy and the negative entropy, with separate learning rates
\begin{equation}\label{eq:regularizer}
F_t(x) =  -2\eta_t^{-1}\lr{\sum_{i = 1}^K \sqrt{x_i}} + \gamma_t^{-1} \lr{\sum_{i = 1}^K x_i (\log x_i -1)},
\end{equation}
where the learning rates are  $\eta_{t}^{-1} = \sqrt{t}$ and  $\gamma_{t}^{-1} = \sqrt{\frac{49\Dcal_t}{\log K}}$.
The update rule for $x_t$ is
\begin{equation}\label{eq:updaterule}
x_{t} = \nabla \Fbar_t^*(-\Lhat_t^{obs}) = 
\arg\min_{x\in\Delta^{K-1}}\langle\Lhat_t^{obs},x\rangle + F_t(x),
\end{equation}
where $\Lhat_t^{obs} = \sum_{s = 1}^{t-1} \hat{\ell}_s \1(s + d_s < t) \1(s \notin \Scal_{t-1})$ is the cumulative importance-weighted loss estimate of observations that have arrived by time $t$ and have not been skipped. We use $\Scal^* = \Scal_T$ to denote the final set of skipped rounds at time $T$.

\section{Regret Bounds}
The following theorem provides best-of-both-worlds regret bounds for Algorithm \ref{alg:FTRL-delay}. A proof is provided in Section~\ref{sec:analysis} and a bound on $S^*$ can be found in Appendix \ref{sec:s-starbound}.
\begin{theorem}\label{theorem:main} 
The pseudo-regret of Algorithm \ref{alg:FTRL-delay} for any sequence of delays and losses 
satisfies
\[
\overline{Reg}_T 
= \Ocal\bigg(\sqrt{KT} + \min_{\Scal \subseteq [T]} \lrc{|\Scal| + \sqrt{\Dcal_{\bar \Scal} \log K }} + S^* + K \hatsigmamax\bigg),
\]
where $\hatsigmamax=\max_{t\in[T]}\lrc{\hat \sigma_t}$ is the maximal number of outstanding observations after skipping and 
\[S^* = \Ocal\lr{\min\lr{\dmax K^{1/3} \log K~, \min_{\Scal \subseteq [T]} \lrc{|\Scal| + \sqrt{\Dcal_{\bar \Scal} K^{\frac23}\log K }}}}.\]
Furthermore, if the losses are stochastic, the pseudo-regret also satisfies
\[
\overline{Reg}_T = \Ocal\lr{\sum_{i \neq i^*} \lr{\frac{\log T}{\Delta_i} + \frac{\hatsigmamax}{\Delta_{i}\log K}} + K\hatsigmamax + S^*}.
\]
\end{theorem}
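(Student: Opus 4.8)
The plan is to run a single Follow-the-Regularized-Leader (FTRL) analysis that produces the adversarial bound, and then specialize it to the stochastic regime via a self-bounding argument. The starting point is the standard FTRL decomposition of the regret into a \emph{stability} (or penalty) term and a \emph{drift} term arising from the fact that $x_t$ is computed from $\Lhat_{t-1}^{obs}$, which is missing the loss estimates of all outstanding observations, rather than from the full cumulative loss. Because our loss estimators $\hat\ell_{t,i}$ are biased (implicit exploration), I would first account for the implicit-exploration bias: the sum $\sum_t \langle x_t, \ell_t - \hat\ell_t\rangle$ in expectation is controlled by $\sum_t \sum_i x_{t,i}\lambda_{t,t+\hat d_t}$-type terms, and the novel schedule $\lambda_{s,t}=e^{-\Dcal_t/(\Dcal_t-\Dcal_s)}$ is designed so that this sums to $\Ocal(\sqrt{\Dcal_{\bar\Scal}\log K})$ rather than $\Omega(\sqrt T)$ as in Neu's original schedule. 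I expect this bias bookkeeping, together with the matching negative term that implicit exploration contributes to the stability analysis, to be one of the more delicate pieces.

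Next I would handle the three structural terms. (i) \textbf{Stability/penalty:} with the hybrid regularizer $F_t$ combining negative Tsallis entropy (learning rate $\eta_t^{-1}=\sqrt t$) and negative entropy (learning rate $\gamma_t^{-1}=\sqrt{49\Dcal_t/\log K}$), the Tsallis part yields the $\Ocal(\sqrt{KT})$ term as in Zimmert--Seldin, while the entropy part is what absorbs the delay cost; summing the per-round penalties $\gamma_t^{-1}$-differences gives $\Ocal(\sqrt{\Dcal_T\log K})$, and after skipping this becomes $\Ocal(\sqrt{\Dcal_{\bar\Scal}\log K})$. (ii) \textbf{Drift:} the gap between $x_t$ and the distribution FTRL would play with all observations up to $t$ is bounded by invoking the new drift-control lemma, which — crucially — no longer relies on boundedness of delays but instead on the fact that, thanks to the skipping threshold $\dmax^t=\sqrt{\Dcal_t/(49K^{2/3}\log K)}$ and to implicit exploration, every outstanding non-skipped observation contributes a controlled amount; the total drift over non-skipped rounds is $\Ocal(\sqrt{\Dcal_{\bar\Scal}\log K}+K\hatsigmamax)$. (iii) \textbf{Skipping cost:} each skipped round contributes at most $1$ to the regret, and the procedure relating drifted regret to standard regret (the fourth contribution of the paper) converts the "drifted" analysis back to $\overline{Reg}_T$ at an additional cost of $S^*$; the two expressions for $S^*$ follow from, respectively, the trivial bound $|\Scal|\le \dmax K^{2/3}\log K$ forced by the threshold and from optimizing over $\Scal$. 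Collecting (i)--(iii) and the exploration bias gives the adversarial bound $\Ocal(\sqrt{KT}+\min_\Scal\{|\Scal|+\sqrt{\Dcal_{\bar\Scal}\log K}\}+S^*+K\hatsigmamax)$.

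For the stochastic bound I would use the now-standard best-of-both-worlds self-bounding technique. Under the uniqueness assumption on $i_T^*$, the Tsallis-entropy part of the regret admits a lower bound of the form $c\sum_{t}\sqrt{x_{t,i}}$ for $i\neq i^*$, and simultaneously the regret is lower-bounded by $\sum_t\sum_{i\neq i^*}x_{t,i}\Delta_i$ (up to the skipping and exploration corrections already isolated). Writing the adversarial upper bound in a form where the delay-dependent terms are also expressed through $\sum_t\sqrt{x_{t,i}}$ or $\sum_t x_{t,i}$, I would combine the upper and lower bounds via the inequality $a\sqrt z - b z \le a^2/(4b)$ (applied term-by-term over arms, with $z=\sum_t x_{t,i}$), which turns $\sqrt{KT}$-type terms into $\sum_{i\neq i^*}\frac{\log T}{\Delta_i}$ and the $\sqrt{\Dcal_{\bar\Scal}\log K}$-type terms into $\sum_{i\neq i^*}\frac{\hatsigmamax}{\Delta_i\log K}$; here one uses that $\Dcal_{\bar\Scal}\le T\hatsigmamax$ (or more precisely that the relevant per-round drift is governed by $\hat\sigma_t\le\hatsigmamax$). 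The additive $K\hatsigmamax$ and $S^*$ terms are regime-independent and carry over unchanged. The main obstacle, I expect, is step (ii): making the drift control work with the implicit-exploration floor $\max\{x_{t,i},\lambda_{t,t+\hat d_t}\}$ in the denominator and with a data-dependent, time-varying learning rate $\gamma_t$, while ensuring the resulting bound is expressible in the self-bounding-friendly form needed for the stochastic specialization — this is precisely where the three technical innovations advertised in the introduction have to interlock.
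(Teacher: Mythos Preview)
Your high-level skeleton (FTRL penalty/stability, drift control via skipping plus implicit exploration, self-bounding for the stochastic regime) matches the paper's architecture, but two of the load-bearing steps are misidentified in a way that would block the proof.

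\textbf{The implicit-exploration bias is not $\Ocal(\sqrt{\Dcal_{\bar\Scal}\log K})$.} You claim the schedule $\lambda_{s,t}=e^{-\Dcal_t/(\Dcal_t-\Dcal_s)}$ makes $\sum_t\lambda_{t,t+\hat d_t}$ sum to $\Ocal(\sqrt{\Dcal_{\bar\Scal}\log K})$. The paper's Summation Bound (\cref{lem:summation}) instead shows $\sum_t(\lambda_{t,t+\hat d_t}+\lambda_{t,t+\hat d_t+\sigmamax^t})=\Ocal(\hatsigmamax)$, so the total bias is $\Ocal(K\hatsigmamax)$. This distinction is not cosmetic: in the stochastic regime $\sqrt{\Dcal_{\bar\Scal}\log K}$ can be $\Theta(\sqrt{T})$, which would destroy the $\log T$ rate, whereas $K\hatsigmamax$ is delay-dependent only. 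Getting the $\Ocal(\hatsigmamax)$ bound is precisely the content of ``the first implicit exploration scheme that works in best-of-both-worlds,'' and it requires a careful telescoping argument on $\Dcal_t$ (splitting at the time where $\sqrt{\Dcal_{T_0}}\approx\hatsigmamax/(K^{1/3}\log K)$ and using $\hat\sigma_s\le\dmax^s$); it is not a consequence of the penalty calculation you describe.

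\textbf{The drifted-to-standard regret step needs a concrete mechanism.} Your item (ii) treats drift as ``the gap between $x_t$ and the distribution FTRL would play with all observations,'' and item (iii) says the conversion costs just $S^*$. In the paper the stochastic analysis bounds a separate quantity, the \emph{drifted regret} $\Regdrift=\E[\sum_t\langle x_t,\hat\ell_t^{obs}\rangle-\hat\ell^{obs}_{t,i^*}]$, where the inner sum counts each round with multiplicity $\upsilon_t$ (the number of arrivals at $t$). Relating this to $\overline{Reg}_T$ (Lemma~\ref{lem:driftedregret}) requires two ingredients you do not have: (a) the Drift Control Lemma in its pointwise form $x_{t,i}\le 4\max(x_{s,i},\lambda_{s,t})$ for all $t-s\le\dmax^t$, which lets you push $\sum_i\Delta_i x_{t,i}$ forward by up to $\dmax^t$ rounds at the price of a $\lambda$ term; and (b) a Greedy Rearrangement procedure (Algorithm~\ref{alg:rearranging}) that spreads arrivals so that each round has at most one, together with a combinatorial lemma (\cref{lem:greedy-rearrangement}) showing every arrival moves by at most $\sigmamax^t$. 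Without (b) you cannot convert the $\upsilon_t$-weighted sum into $\overline{Reg}_T$ under unbounded delays; simply paying $S^*$ does not suffice, because even non-skipped rounds can have arbitrarily clustered arrivals. This rearrangement is also where the second $\lambda$-sum $\sum_t\lambda_{t,t+\hat d_t+\sigmamax^t}$ enters, and why the Summation Bound must control both sums.
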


\citet{masoudian2022} provide an $\Omega\lr{\sqrt{KT} + \min_{\Scal \subset [T]} \lrc{|\Scal| + \sqrt{\Dcal_{\bar\Scal} \log K }}}$ regret lower bound for adversarial environments with variable delays, which is matched within constants by the algorithm of \citep{zimmert2020} for adversarial environments. Our algorithm matches the lower bound within a multiplicative factor of $K^{\frac13}$ on the delay-dependent term, which is the price we pay for obtaining a best-of-both-worlds guarantee. It is an open question whether this factor can be reduced.



In the stochastic regime, assuming that the delays in the first $\sigmamax$ rounds are of order $T$, and that the losses come from Bernoulli distributions with bias close to $\frac12$, a trivial regret lower bound is $\Omega\lr{\sigmamax \frac{\sum_{i \neq i*} \Delta_i}{K} + \sum_{i \neq i^*}\frac{\log T}{\Delta_i}}$. This bound is almost matched by the algorithm of \citet{joulani2013} for the stochastic regime only. 
Our bound 
has some extra terms, most notably $\sum_{i\neq i^*}\frac{\hatsigmamax}{\Delta_i \log K}$ and $S^*$. It is an open question whether these terms are inevitable or can be reduced.

Theorem~\ref{theorem:main} provides three major improvements relative to the results of \citet{masoudian2022}: (1) it requires no advance knowledge of $\dmax$; (2) it replaces terms dependent on $\dmax$ by terms dependent on $\hatsigmamax$, which never exceeds $\dmax$, and in some cases may be significantly smaller; and (3) it makes skipping possible and beneficial, making the algorithm robust to a small number of excessively large delays and replacing $\sqrt{D\log K}$ term with $\min_{\Scal \subseteq [T]} \lrc{|\Scal| + \sqrt{\Dcal_{\bar \Scal} K^{\frac23}\log K }}$, which is never much larger, but in some cases significantly smaller.

\section{Analysis}
\label{sec:analysis}
In this section, we present a proof of Theorem \ref{theorem:main}. We begin with the stochastic part of the bound in Section \ref{sec:stochastic_analysis}, followed by the adversarial part in Section \ref{sec:adversarial_analysis}.

\subsection{Stochastic Analysis}\label{sec:stochastic_analysis}
We start by defining the drifted regret $\Regdrift = \EE\lrs{\sum_{t=1}^{T} \lr{\innerprod{x_t}{\hat\ell_{t}^{obs}} - \hat\ell_{t,i_T^*}^{obs}}},$
where $\hat\ell_t^{obs} = \sum_{s=1}^{t} \hat\ell_s \1(s + \hat{d}_{s} = t)\1[s\notin\Scal_t]$ is the cumulative vector of losses received at time $t$. Lemma~\ref{lem:driftedregret} is the first major contribution establishing a relationship between $\Regdrift$ and the actual regret $\overline{Reg}_T$. 

\begin{lemma}[Drift of the Drifted Regret]\label{lem:driftedregret}
 Let $\sigmamax^t = \max_{s \in [t]}\lrc{\hat\sigma_s}$. Then
\begin{align*}
\Regdrift  \geq \frac{1}{4}\overline{Reg}_T - 2K\sum_{t = 1}^T \lr{\lambda_{t,t+\hat{d}_t} + \lambda_{t,t+\hat{d}_t + \sigmamax^t}} - \frac{\sigmamax}{4} - S^*,
\end{align*}
where $S^*$ is the total number of rounds skipped by the algorithm.
\end{lemma}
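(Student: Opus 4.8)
The plan is to relate the "drifted regret" $\Regdrift$ to the ordinary pseudo-regret $\overline{Reg}_T$ by carefully accounting for three sources of discrepancy: (i) the bias introduced by the implicit-exploration clipping in the loss estimators $\hat\ell_{t,i}$, (ii) the fact that in $\Regdrift$ the loss vector $\hat\ell_t^{obs}$ is "time-shifted" — losses are credited at their arrival/skipping time $t+\hat d_t$ rather than their generation time $t$ — so the FTRL iterate $x_t$ is compared against losses it did not yet see, and (iii) the rounds that are skipped entirely, which contribute at most $1$ each to the true regret but nothing to $\hat\ell^{obs}$, hence the additive $S^*$.

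\textbf{Step 1 (unbias the estimators).} First I would take conditional expectations to compare $\E[\hat\ell_{t,i} \mid x_t]$ with the true loss $\ell_{t,i}$. Because of the $\max\{x_{t,i},\lambda_{t,t+\hat d_t}\}$ in the denominator, we get $\E[\hat\ell_{t,i}\mid x_t] = \ell_{t,i}\,\frac{x_{t,i}}{\max\{x_{t,i},\lambda_{t,t+\hat d_t}\}}$, which underestimates $\ell_{t,i}$ by at most $\ell_{t,i}\,\lambda_{t,t+\hat d_t} \le \lambda_{t,t+\hat d_t}$ coordinatewise; in particular $\innerprod{x_t}{\E[\hat\ell_t]} \ge \E[\innerprod{x_t}{\ell_t}] - K\lambda_{t,t+\hat d_t}$-type slack only on the comparator side, while on the played side the bias is favorable (it only decreases the estimated loss of the played arm). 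Summing over $t$ produces the first block of the $2K\sum_t\lambda_{t,t+\hat d_t}$ term. The comparator term $\hat\ell_{t,i_T^*}$ needs the reverse inequality, bounding how much the estimated loss of $i_T^*$ can be below its true loss, again by $\lambda$.

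\textbf{Step 2 (control the time shift / drift).} This is where I expect the real work. In $\Regdrift$ the estimate $\hat\ell_s$ generated at round $s$ appears in the sum at time $t = s+\hat d_s$, paired with the iterate $x_{s+\hat d_s}$, whereas ordinary regret pairs $\hat\ell_s$ with $x_s$. So I would write $\innerprod{x_{s+\hat d_s}}{\hat\ell_s} = \innerprod{x_s}{\hat\ell_s} + \innerprod{x_{s+\hat d_s}-x_s}{\hat\ell_s}$ and bound the drift term $\innerprod{x_{s+\hat d_s}-x_s}{\hat\ell_s}$. The iterate $x_s$ to $x_{s+\hat d_s}$ changes only because of observations that arrived in between; crucially, at time $s$ those are exactly observations from rounds that were still outstanding, and after skipping there are at most $\hatsigmamax$ of them. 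Here the implicit-exploration clip $\lambda_{t,t+\hat d_t}$ and the damped Tsallis/entropy learning rates $\eta_t,\gamma_t$ must be used to show each such pairwise change is small: the key algebraic fact should be a bound of the form $|\innerprod{x_{s+\hat d_s}-x_s}{\hat\ell_s}| \lesssim K(\lambda_{s,s+\hat d_s} + \lambda_{s,s+\hat d_s+\sigmamax^s}) + (\text{lower-order in }\sigmamax)$, which is precisely the second $\lambda$-block plus the $\sigmamax/4$ term. The definition $\lambda_{s,t} = e^{-\Dcal_t/(\Dcal_t-\Dcal_s)}$ is engineered so that shifting the index by $\hatsigmamax$ worth of active outstanding observations multiplies the exponent controllably; I would lean on monotonicity of $\Dcal_t$ and a telescoping/convexity estimate of $\sum_t \Dcal_t^{-1/2}$-type quantities coming from $\gamma_t^{-1} \propto \sqrt{\Dcal_t}$.

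\textbf{Step 3 (account for skipping) and assembly.} For rounds $s \in \Scal^*$, the true per-round regret is at most $1$, so dropping them from $\Regdrift$ costs at most $|\Scal^*| = S^*$; Lemma~\ref{lem:elimination} (at most one skip per round) keeps this clean and also guarantees $\hat d_s$ is well-defined. Combining Steps 1–3, rearranging, and absorbing the favorable-direction biases, I would arrive at $\Regdrift \ge \tfrac14\overline{Reg}_T - 2K\sum_t(\lambda_{t,t+\hat d_t}+\lambda_{t,t+\hat d_t+\sigmamax^t}) - \tfrac{\sigmamax}{4} - S^*$; the constant $\tfrac14$ (rather than $1$) is the slack I would deliberately leave to absorb the several $O(1)$-factor losses in Steps 1–2 rather than tracking them tightly. \textbf{The hard part} is Step 2: making the drift bound depend only on the post-skipping count $\hatsigmamax$ (equivalently $\sigmamax^t$) and on the implicit-exploration terms, with no residual dependence on the actual delay magnitudes $d_t$ — this is exactly the "control of distribution drift without boundedness of delays" advertised in the introduction, and it hinges on the interplay between the skipping threshold $\dmax^t = \sqrt{\Dcal_t/(49K^{2/3}\log K)}$, the clip $\lambda$, and the entropy learning rate $\gamma_t$.
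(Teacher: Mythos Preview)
Your three-step outline (unbias, control time shift, account for skips) is reasonable, but Step~2 contains a genuine gap that makes the plan unworkable as stated.

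You propose to decompose $\langle x_{s+\hat d_s}, \hat\ell_s\rangle = \langle x_s, \hat\ell_s\rangle + \langle x_{s+\hat d_s}-x_s, \hat\ell_s\rangle$ and bound the drift term, thereby passing from the arrival-time iterate $x_{s+\hat d_s}$ back to the generation-time iterate $x_s$. For a \emph{lower} bound on $\Regdrift$ this requires a \emph{lower} bound on $x_{s+\hat d_s,i}$ in terms of $x_{s,i}$. The Drift Control Lemma (Lemma~\ref{lem:main}), which you never invoke explicitly, supplies only the opposite direction: $x_{t,i}\le 4\max(x_{s,i},\lambda_{s,t})$ for $t\ge s$. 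Nothing prevents $x_{s+\hat d_s,i}$ from being much smaller than $x_{s,i}$ (observations arriving in between can drive arm $i$'s probability toward zero), so your drift term can be of order $-1$ per round and the bound degenerates. As a smaller point, in Step~1 you have the favorable and unfavorable sides reversed: the comparator bias is free since $\hat\ell_{t,i^*}^{obs}$ is subtracted, whereas the played-arm bias is the costly one---and controlling it (the $\star$ term in the paper) already requires Lemma~\ref{lem:main}.

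The paper circumvents the backward-drift problem entirely. After the unbias step it arrives at $R_T=\E\big[\sum_t \upsilon_t \sum_i \Delta_i x_{t,i}\big]$, where $\upsilon_t$ is the number of arrivals at round $t$; the remaining discrepancy with $\overline{Reg}_T=\E\big[\sum_t \sum_i \Delta_i x_{t,i}\big]$ is purely that $\upsilon_t$ may be $0$ or $>1$. The key device, absent from your plan, is the \emph{Greedy Rearrangement} (Algorithm~\ref{alg:rearranging} and Lemma~\ref{lem:greedy-rearrangement}): colliding arrivals are pushed \emph{forward} to the first unoccupied slot, which a counting argument shows lies at most $\sigmamax^t$ rounds ahead. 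Lemma~\ref{lem:main} is then applied in its natural forward direction via the pushforward inequality $\sum_i \Delta_i x_{t,i}\ge \tfrac14\sum_i\Delta_i x_{t+r,i}-K\lambda_{t,t+r}$ for $r\in[0,\dmax^t]$. The factor $\tfrac14$ is exactly the constant $4$ from Lemma~\ref{lem:main}, not deliberately left slack; the second $\lambda$-block $\lambda_{t,t+\hat d_t+\sigmamax^t}$ arises because the rearranged slot lies in $[t+\hat d_t,\,t+\hat d_t+\sigmamax^t]$; and the $\sigmamax/4$ term counts the at most $\sigmamax$ rounds that remain with zero arrivals after rearrangement.
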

In prior work on bounded delays the relation between $\Regdrift$ and $\overline{Reg}_T$ was achieved by shifting all the arrivals by $\dmax$, leading to an additive term of order $\dmax$. This approach fails for unbounded delays, because a single delay of order $T$ prevents shifting and leads to linear regret. We address the challenge by introducing a procedure to rearrange the arrivals (Algorithm~\ref{alg:rearranging} below) and advanced control of the drift (\cref{lem:main} below). A proof of \cref{lem:driftedregret} is provided at the end of the section.

\begin{algorithm}
\caption{Greedy Rearrangement}
\label{alg:rearranging}
\DontPrintSemicolon
\LinesNumberedHidden
\KwInit{$\upsilon_t^{new} = 0$ for all $t = 1, \ldots, T+\dmax^T$}\;
\For{$t= 1,\ldots,T$}{
\For{$s = 1,\ldots, t: s + \hat{d}_s = t$}{
    Find the first round $\pi(s) \in [t, t+\dmax^t]$ such that $\upsilon_{\pi(s)}^{new}= 0$ \;
    Move the arrival from round $s$ to round $\pi(s)$ and update $\upsilon_{\pi(s)}^{new} = 1$
}
}
\end{algorithm}

The drift control lemma (Lemma \ref{lem:main}) is the second major contribution of the paper. Prior work on bounded delays controlled the drift by slowing the learning rate in accordance with $\dmax$. This does not work for highly varying delays, because slow learning rates prevent learning, whereas fast learning rates fail to control the drift.
\cref{lem:main} relies on 
implicit exploration terms 
in the loss estimators 
in equation \eqref{eq:lossestimation} and on skipping of excessive delays, leaving the learning rates intact. 

\begin{lemma}[Drift Control Lemma]\label{lem:main}
    Let $\dmax^t$ be the skipping threshold at time $t$. Then, for any $i \in [K]$ and $s, t \in [T]$, where $s \leq t$ and $t  - s \leq \dmax^t$, we have 
 \[
 x_{t,i} \leq 4 \max(x_{s,i}, \lambda_{s,t}).
 \]
\end{lemma}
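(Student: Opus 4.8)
The plan is to track how the FTRL iterate $x_t$ evolves from round $s$ to round $t$ as the cumulative loss estimate $\Lhat^{obs}$ grows and the regularizer $F_t$ stiffens, and to use the implicit exploration floor $\lambda_{s,t}$ to absorb the increments that cannot be controlled coordinatewise. Recall that $x_{t,i} = \nabla\Fbar_t^*(-\Lhat_t^{obs})_i$, and for the hybrid regularizer the first-order optimality condition gives an explicit relation: writing $\psi_t$ for the (dual) normalizing variable, $x_{t,i}$ is determined by $\eta_t^{-1} x_{t,i}^{-1/2} + \gamma_t^{-1}\log x_{t,i} = \psi_t - \Lhat^{obs}_{t,i}$. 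The first step is to compare this identity at time $t$ with the one at time $s$. Two things change: (i) the accumulated losses $\Lhat^{obs}_{t,i} - \Lhat^{obs}_{s,i}$ increase (losses are non-negative), and (ii) the learning rates $\eta_t^{-1} = \sqrt t$ and $\gamma_t^{-1} = \sqrt{49\Dcal_t/\log K}$ increase. Increasing the loss on coordinate $i$ can only \emph{decrease} $x_{t,i}$ relative to $x_{s,i}$ (up to the shift in $\psi$), so the genuine danger is that $x_{t,i}$ becomes \emph{larger} than $x_{s,i}$ — which happens only when the normalizer $\psi$ moves, i.e.\ when mass flows away from other coordinates onto $i$. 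So the heart of the argument is to bound how much the normalizer can shift over the window $[s,t]$.

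The key quantitative input is the constraint $t - s \le \dmax^t$ together with the definition $\lambda_{s,t} = e^{-\Dcal_t/(\Dcal_t - \Dcal_s)}$ and $\dmax^t = \sqrt{\Dcal_t/(49K^{2/3}\log K)}$. The second step is to show that, in this regime, $\Dcal_t - \Dcal_s$ is small relative to $\Dcal_t$; concretely $\Dcal_t - \Dcal_s = \sum_{r=s+1}^{t}\hat\sigma_r$, and since each $\hat\sigma_r$ counts active outstanding observations, $\Dcal_t - \Dcal_s$ is at most $(t-s)$ times the current count, which the skipping rule keeps controlled — this is where the skipping mechanism earns its keep and replaces the old "slow the learning rate by $\dmax$" argument. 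From this one extracts a lower bound on $\lambda_{s,t}$ (it is not exponentially tiny) and, correspondingly, an upper bound on the multiplicative change $\gamma_t^{-1}/\gamma_s^{-1} = \sqrt{\Dcal_t/\Dcal_s}$. The third step is then a case split on coordinate $i$: if $x_{s,i} \ge \lambda_{s,t}$, we argue $x_{t,i} \le 4 x_{s,i}$ by bounding the normalizer drift using the entropy part of the regularizer (a standard "stability of FTRL under monotone cost increments" estimate, of the flavor already used in \citet{zimmert2020}, but now with the learning-rate change folded in); if $x_{s,i} < \lambda_{s,t}$, then $x_{s,i}$ is so small that even after any plausible mass transfer $x_{t,i}$ cannot exceed $4\lambda_{s,t}$, because pushing $x_{t,i}$ up to $4\lambda_{s,t}$ would require the dual variable to move by an amount the bounded change in $\gamma_t^{-1}$ and in the cumulative losses simply does not permit. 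Combining the two cases gives $x_{t,i} \le 4\max(x_{s,i},\lambda_{s,t})$.

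I expect the main obstacle to be making the normalizer-drift estimate tight enough to land the constant $4$ while only using $t - s \le \dmax^t$ and nothing about the raw delays: one has to simultaneously handle the growth of the Tsallis term (which governs small $x_{t,i}$ via the $x^{-1/2}$ factor) and the entropy term (which governs the log), and show their combined effect over a window of length $\dmax^t$ is benign precisely because $\dmax^t$ was \emph{defined} as $\sqrt{\Dcal_t/(49K^{2/3}\log K)}$ — the constant $49$ and the $K^{2/3}$ are presumably tuned so that $\gamma_t^{-1}(\dmax^t) \cdot (\text{something})$ stays below a threshold. A secondary subtlety is that $x_s$ is computed with $\Lhat_s^{obs}$, which excludes observations not yet arrived by round $s$; so "the losses only increase from $s$ to $t$" must be stated carefully in terms of which loss estimates have been incorporated, and one must make sure no estimate is ever removed (it is not — $\Scal$ only grows, and skipped rounds' estimates were never added). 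Once the monotonicity bookkeeping is pinned down, the rest is a controlled-perturbation computation on the two-term regularizer.
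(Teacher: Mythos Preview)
Your high-level instincts are right --- the danger is the normalizer drift, and the skipping threshold is calibrated so that $\Dcal_t - \Dcal_s$ stays a bounded fraction of $\Dcal_t$ --- but the proposal has a genuine gap in how you plan to bound that drift.

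The issue is the loss-shift contribution. Between rounds $s$ and $t$ the cumulative loss vector absorbs the estimates $\hat\ell_r$ for all $r$ with $s\le r+\hat d_r < t$, and each such estimate satisfies $\hat\ell_{r,j}\le 1/\max(x_{r,j},\lambda_{r,r+\hat d_r})$. This can be \emph{huge}; there is no a priori bound on the increments and your claim that ``the bounded change in $\gamma_t^{-1}$ and in the cumulative losses simply does not permit'' a large move is not justified. The normalizer shift is controlled (via KKT) by the weighted average $\frac{\sum_j f''(x_{s,j})^{-1}\hat\ell_{r,j}}{\sum_j f''(x_{s,j})^{-1}}$, and since $f''(x_{s,j})^{-1}$ scales like $x_{s,j}$, bounding this requires controlling the ratio $x_{s,j}/\max(x_{r,j},\lambda_{r,r+\hat d_r})$ for every such $r$. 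But that ratio is exactly an instance of the drift lemma you are trying to prove, applied to a different pair. The paper resolves this circularity by an \emph{induction on valid pairs $(t,s)$}: one assumes the bound for all pairs $(t',s')$ with $t'<t$, and for $(t,s')$ with $s'>s$, and uses those to bound the loss-increment contribution at the current pair (this is where the implicit exploration floor for the \emph{intermediate} rounds $r$, not just $\lambda_{s,t}$, enters). Your proposal has no inductive structure and no mechanism to relate $x_{s,j}$ to $x_{r,j}$ for intermediate $r$, so the normalizer bound would not close.

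A second, more minor point: the paper's decomposition is not the case split $x_{s,i}\gtrless\lambda_{s,t}$ you propose, but rather the introduction of an intermediate point $\tilde x=\nabla\bar F_s^*(-\hat L_{t-1}^{obs})$ that isolates the regularizer change ($x_t$ vs.\ $\tilde x$, same losses, different learning rates --- here your $\Dcal_t/\Dcal_s$ computation and the definition of $\lambda_{s,t}$ do the work and yield a factor $2$) from the loss shift ($\tilde x$ vs.\ $x_s$, same regularizer --- here the inductive argument yields another factor $2$). Separating the two effects is what makes the constants land; handling them simultaneously as you suggest would be considerably messier even once the induction is in place.
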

The proof is based on introduction of an intermediate variable $\tilde x_s = \nabla\bar F_s^*(-\hat L_{t-1}^{obs})$, which is based on the regularizer from round $s$ and the loss estimate from round $t$. It exploits the implicit exploration term $\lambda_{s,t}$ to show that $\frac{x_{t,i}}{\max(\xtil_{i}, \lambda_{s,t})} \leq 2$ and skipping to show that $\frac{\xtil_{i}}{x_{s,i}} \leq 2$. The latter implies that $\frac{\max(\xtil_{i}, \lambda_{s,t})}{\max(x_{s,i},\lambda_{s,t})} \leq 2$, and in combination with the former completes the proof. The details of the two steps are provided in Appendix \ref{sec:driftcontrol-proof}. 

Given Lemmas~\ref{lem:driftedregret} and \cref{lem:main}, we apply standard FTRL analysis, similar to \citet{masoudian2022}, to obtain an upper bound for $\Regdrift$. Specifically, in Appendix \ref{sec:drifted-regret-analysis} we show that
\begin{align}\label{eq:driftregretbound}
\Regdrift &\leq \EE\bigg[ a\sum_{t=1}^T \sum_{i \neq i^*}  \eta_t x_{t,i}^{1/2} + b\sum_{t=1}^T \sum_{i \neq i^*} \gamma_{t+\hat{d}_t} (\upsilon_{t+\hat{d}_t}-1) x_{t,i}\Delta_i + c\sum_{t=2}^{T}\sum_{i=1}^K  \frac{\hat\sigma_t \gamma_t x_{t,i}\log(1/x_{t,i})}{\log K}\bigg]\notag\\
&~~~~+ \Ocal\lr{K \sum_{t=1}^T \lambda_{t,t+\hat{d}_t}},
\end{align}
where $a,b,c \geq 0$ are constants and $\upsilon_t = \sum_{s = 1}^t \1\lr{s + \hat{d}_s = t}$ is the number of arrivals at time $t$ (if a round $s$ is skipped at time $t$ it counts as an ``empty'' arrival with loss estimate set to zero). By combining \eqref{eq:driftregretbound} with \cref{lem:driftedregret}, we obtain
\begin{align}\label{eq:combinationarbmain}
 \Reg &\leq \EE\bigg[2a\sum_{t=1}^T \sum_{i \neq i^*}  \eta_t x_{t,i}^{1/2} + 2b\sum_{t=1}^T \sum_{i \neq i^*} \gamma_{t+\hat{d}_t} (\upsilon_{t+\hat{d}_t}-1) x_{t,i}\Delta_i + 2c\sum_{t=2}^{T}\sum_{i=1}^K  \frac{\hat\sigma_t \gamma_t x_{t,i}\log(1/x_{t,i})}{\log K}\bigg]\notag\\
    &~~~~+  \Ocal\lr{K \sum_{t=1}^T \lr{\lambda_{t,t+\hat{d}_t} + \lambda_{t,t+\hat{d}_t + \sigmamax^t}} + \sigmamax + S^*}.
\end{align}
Then we apply a self-bounding analysis, similar to \citet{masoudian2022}, and get 
\begin{align*}
\Reg &= \Ocal\Bigg(\sum_{i \neq i^*} \lr{\frac1{\Delta_i} \log(T) + \frac{\sigmamax}{\Delta_{i}\log K}} + \sigmamax + K \sum_{t=1}^T \lr{\lambda_{t,t+\hat{d}_t} + \lambda_{t,t+\hat{d}_t + \sigmamax^t}} + S^*\Bigg).
\end{align*}

The details of the self-bounding analysis are provided in Appendix \ref{sec:self-bounding}. 

The stochastic analysis is completed by the following lemma, which bounds the sum of implicit exploration terms above. It constitutes the third key result of the paper and shows that the bias from implicit exploration does not deteriorate neither the stochastic nor the adversarial bound. The proof is based on a careful study of the evolution of $\Dcal_t$ throughout the game, and is deferred to Appendix \ref{sec:summation-proof}. 

\begin{lemma}[Summation Bound]\label{lem:summation}
    For all $s \in [T]$, let $\Dcal_{s} = \sum_{r = 1}^{s} \hat\sigma_r$ and $\lambda_{s,t} = e^{-\frac{\Dcal_{t}}{\Dcal_{t} - \Dcal_s}}$, then 
    \[
    \sum_{t=1}^T \lr{\lambda_{t,t+\hat{d}_t} + \lambda_{t,t+\hat{d}_t + \sigmamax^t}} = \Ocal(\hatsigmamax ). 
    \]
\end{lemma}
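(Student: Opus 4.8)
The plan is to bound each summand by a quantity that decays fast in the effective delay $\hat d_t$ (this is where skipping enters) and then control the resulting sum by tracking the evolution of $\Dcal_t$. First I would isolate two structural facts. \textbf{(Skipping bound.)} As long as round $t$'s observation has neither arrived nor been skipped, its waiting time has never reached the current threshold; evaluating this at round $t+\hat d_t-1$ gives $\hat d_t-1\le \dmax^{t+\hat d_t}=\sqrt{\Dcal_{t+\hat d_t}/(49K^{2/3}\log K)}$, hence $\Dcal_{t+\hat d_t}\ge 49K^{2/3}\log K\,(\hat d_t-1)^2$. \textbf{(Counting bound.)} Round $t$ is an active outstanding observation at each of $t+1,\dots,t+\hat d_t-1$, so $\hat d_t-1\le \Dcal_{t+\hat d_t}-\Dcal_t=\sum_{r=t+1}^{t+\hat d_t}\hat\sigma_r\le \hat d_t\,\hatsigmamax$. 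Combining the two, $\Dcal_{t+\hat d_t}/(\Dcal_{t+\hat d_t}-\Dcal_t)=\Omega(K^{2/3}\log K\cdot\hat d_t/\hatsigmamax)$, so $\lambda_{t,t+\hat d_t}\le\exp(-\Omega(K^{2/3}\log K\cdot\hat d_t/\hatsigmamax))$; equivalently, using $\Dcal_{t+\hat d_t}\gtrsim K^{2/3}\log K\,\hat d_t^2$, one also gets $\lambda_{t,t+\hat d_t}\le\exp(-\Omega(K^{1/3}\sqrt{\log K\cdot\Dcal_{t+\hat d_t}}/\hatsigmamax))$. Rounds with $\hat d_t=0$ contribute $0$, and $\hat d_t=1$ rounds are deferred to the last step.

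For the summation the naive idea of grouping by the value of $\hat d_t$ fails — there can be $\Theta(T)$ rounds with a fixed small delay — so I would instead exploit monotonicity of $\Dcal$. Write $\lambda_{t,t+\hat d_t}=\exp(-1/(1-\Dcal_t/\Dcal_{t+\hat d_t}))$ and use the elementary inequality $e^{-1/(1-\theta)}\le (2/e)^2(1-\theta)^2$ to get $\lambda_{t,t+\hat d_t}\le (2/e)^2\big((\Dcal_{t+\hat d_t}-\Dcal_t)/\Dcal_{t+\hat d_t}\big)^2$. Expanding $(\Dcal_{t+\hat d_t}-\Dcal_t)^2=\big(\sum_{r\in(t,t+\hat d_t]}\hat\sigma_r\big)^2$ as a double sum over $r\le r'$ and exchanging the order of summation, the task reduces to bounding, for each fixed pair $r\le r'$, the quantity $\sum_{t:\,t<r\le r'\le t+\hat d_t}\Dcal_{t+\hat d_t}^{-2}$. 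The rounds appearing here are exactly those still outstanding throughout the whole block $[r,r']$; listing them most-recent-first as $\tau_1>\tau_2>\cdots$, the $i$-th one has waited at least $i$ rounds, so the skipping bound gives $\Dcal_{\tau_i+\hat d_{\tau_i}}\gtrsim K^{2/3}\log K\cdot i^2$, while trivially $\Dcal_{\tau_i+\hat d_{\tau_i}}\ge\Dcal_{r'}$ and there are at most $\hatsigmamax$ of them. Feeding these three facts in makes the sum over $i$ converge, and a telescoping estimate of the type $\tfrac{a-b}{\sqrt a}\le 2(\sqrt a-\sqrt b)$ collapses the remaining sum over $r'$ to the claimed $\Ocal(\hatsigmamax)$. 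Converting the a-priori $\hatsigmamax$-fold overlap of ``outstanding through $[r,r']$'' into something summable — which is precisely where the quadratic growth of $\Dcal$ forced by the constant $49$ in the skipping threshold is used — is the part I expect to be most delicate.

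Finally I would dispose of the remaining rounds. A round with $\hat d_t=1$ and $\lambda_{t,t+1}>0$ has $\Dcal_{t+1}-\Dcal_t=\hat\sigma_{t+1}\ge 1$, and whenever $\lambda_{t,t+1}$ is bounded away from $0$ one has $\Dcal_{t+1}\le 2\hat\sigma_{t+1}$; such rounds therefore all lie below a cumulative-count level of order $\hatsigmamax$, and since each forces $\hat\sigma_{t+1}\ge 1$ at a distinct index $t+1$, a direct charging argument (with \cref{lem:elimination} capping the number of skips per round) bounds their number, and hence their contribution, by $\Ocal(\hatsigmamax)$. The term $\sum_t\lambda_{t,t+\hat d_t+\sigmamax^t}$ is handled identically: the window merely lengthens from $\hat d_t$ to $\hat d_t+\sigmamax^t\le\hat d_t+\hatsigmamax$, which changes the exponents and the $\Dcal$-reciprocal bounds only by constant factors once $\Dcal_{t+\hat d_t}\gtrsim\hatsigmamax^2/(K^{2/3}\log K)$ (the range in which the estimate is active), and the complementary small-$\Dcal$ rounds are again counted directly.
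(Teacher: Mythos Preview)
Your structural observations (the skipping bound $\Dcal_{t+\hat d_t}\gtrsim K^{2/3}\log K\cdot\hat d_t^2$, the counting bound $\Dcal_{t+\hat d_t}-\Dcal_t\le\hat d_t\hatsigmamax$, and the idea of tracking the evolution of $\Dcal_t$) are all correct and are indeed the building blocks of the proof. However, the core summation step has a genuine gap that your sketch does not close.

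The polynomial inequality $e^{-1/(1-\theta)}\le(2/e)^2(1-\theta)^2$ is too weak to yield $\Ocal(\hatsigmamax)$. After your double-sum expansion, your ``three facts'' give, for the inner sum over $i$, at best $\sum_i\Dcal_{\tau_i+\hat d_{\tau_i}}^{-2}\lesssim\min\bigl(\hatsigmamax/\Dcal_{r'}^2,\;c^{-1/2}\Dcal_{r'}^{-3/2}\bigr)$ with $c=49K^{2/3}\log K$ (split at $i\approx\sqrt{\Dcal_{r'}/c}$). Reinserting this into the outer double sum and using $\sum_{r\le r'}\hat\sigma_r=\Dcal_{r'}$ leaves you with $\hatsigmamax\sum_{r'}\hat\sigma_{r'}/\Dcal_{r'}$ on the range $\Dcal_{r'}\gtrsim c\hatsigmamax^2$, and that harmonic-type sum is $\Theta(\log(\Dcal_T/\hatsigmamax^2))$, not $\Ocal(1)$. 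No fixed power $e^{-x}\le C_k x^{-k}$ escapes this: the obstruction is that $\int dx/x$ diverges, and your telescoping identity $(a-b)/\sqrt a\le 2(\sqrt a-\sqrt b)$ only helps for the small-$\Dcal$ range. Your sketch says ``telescoping over $r'$'' but does not address the sum over $r$ (equivalently over the gap $\delta=r'-r$), and the bound $\hat d_{\tau_i}\ge i$ you invoke discards the $\delta$-dependence $\hat d_{\tau_i}\ge\delta+i$ that would be needed to make that sum finite---and even with the sharper bound the logarithm persists.

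The paper's proof avoids this by \emph{not} using a polynomial bound. It splits the horizon at $T_0$ defined by $\sqrt{\Dcal_{T_0}}=\hatsigmamax/(K^{1/3}\log K)$; before $T_0$ the crude bound $e^{-x}\le 1/x$ suffices because $\hat\sigma_s\le\sqrt{\Dcal_s/c}$ keeps the sum small, while after $T_0$ it uses the sub-polynomial bound $e^{-x}\le 1/(x\log^2 x)$, whose extra $\log^2$ factor is precisely what makes $\int dx/(x\log^2 x)$ converge. Both parts are then handled by a \emph{single} swap of summation (not a double sum) followed by the integral comparison lemma. This $\log^2$ trick---absent from your proposal---is the crux. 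Your small-delay and $\sigmamax^t$-extension paragraphs are fine, but the main argument needs this missing ingredient.
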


\subsubsection*{\textbf{Proof of Lemma~\ref{lem:driftedregret} (Drift of the Drifted Regret)
}}
We start with the definition of the drifted regret.
\begin{align}
\Regdrift = \EE\lrs{\sum_{t=1}^{T} \lr{\innerprod{x_t}{\hat\ell_{t}^{obs}} - \hat\ell_{t,i_T^*}^{obs}}} &\notag
= \sum_{t=1}^T \sum_{\substack{s+\hat{d}_s = t\\ s \notin \Scal_t}} \sum_{i=1}^K \EE\lrs{\frac{\ell_{s,i}x_{s,i}x_{t,i}}{\max\lrc{x_{s,i}, \lambda_{s,t}}} - \frac{\ell_{s,i_T^*}x_{s,i_T^*}x_{t,i}}{\max\lrc{x_{s,i_T^*}, \lambda_{s,t}}}}\notag\\
&\geq  \sum_{t=1}^T \sum_{\substack{s+\hat{d}_s = t\\ s \notin \Scal_t}} \sum_{i=1}^K \EE\lrs{\frac{\ell_{s,i}x_{s,i}x_{t,i}}{\max\lrc{x_{s,i}, \lambda_{s,t}}} - \ell_{s,i_T^*}x_{t,i}}\notag\\
&
\geq  \sum_{t=1}^T \sum_{s+\hat{d}_s = t} \sum_{i=1}^K \EE\bigg[\underbrace{\frac{\ell_{s,i}x_{s,i}x_{t,i}}{\max\lrc{x_{s,i}, \lambda_{s,t}}}}_{\star} - \ell_{s,i_T^*}x_{t,i}\bigg] - \Scal^*\label{eq:driftbound}.
\end{align}
Note that when taking the expectation, we rely on the fact that $\hat\ell_{s}$ with $s+\hat d_s=t$ does not affect $x_t$. If $\max\lrc{x_{s,i}, \lambda_{s,t}} = x_{s,i}$, then $\star = \ell_{s,i}x_{t,i}$, otherwise
\begin{align}\label{eq:star}
\star = \ell_{s,i}x_{t,i}  - \frac{\ell_{s,i}x_{t,i} \lr{\lambda_{s,t} - x_{s,i}}}{\lambda_{s,t}} \geq \ell_{s,i}x_{t,i}  - \frac{4\lambda_{s,t}{\lr{\lambda_{s,t} -x_{s,i}}}}
{\lambda_{s,t}}\geq \ell_{s,i}x_{t,i}  - 4\lambda_{s,t},
\end{align}
where the first inequality uses $x_{t,i} \leq 4\max(x_{s,i}, \lambda_{s,t}) = 4\lambda_{s,t}$ by Lemma \ref{lem:main}, and $\ell_{s,i} \geq 1$, and the second inequality follows by $x_{s,i} \geq 0$.
Plugging \eqref{eq:star} into \eqref{eq:driftbound} gives
\begin{align}
\Regdrift &\geq \sum_{t=1}^T \sum_{s+\hat{d}_s = t} \sum_{i=1}^K \EE\lrs{\lr{\ell_{s,i}x_{t,i}  - 4\lambda_{s,t} - \ell_{s,i_T^*}x_{t,i}}} - S^*\notag\\
&\geq  \underbrace{\EE\lrs{\sum_{t=1}^T \sum_{s+\hat{d}_s = t} \sum_{i=1}^K \Delta_i x_{t,i}}}_{R_T} -  4K\sum_{t=1}^T \sum_{s+\hat{d}_s = t} \EE\lrs{\lambda_{s,t}} - S^* .\label{eq:decomposeddrift}
\end{align}
It suffices to give a lower bound for $R_T$ in terms of the actual regret $\overline{Reg}_T$. The difference between $R_T$ and $\overline{Reg}_T$ is that $\overline{Reg}_T = \EE\lrs{\sum_{t=1}^T \sum_{i=1}^K \Delta_i x_{t,i}}$, whereas in $R_T$ the sum $\sum_{i = 1}^K \Delta_i x_{t,i}$ is multiplied by the number of arrivals $\upsilon_t = \sum_{s = 1}^t \1\lr{s + \hat{d}_s = t}$ at time $t$, and $\upsilon_t$ might be larger than one or zero due to delays. 

Our main idea here is to leverage the drift control lemma to provide a lower bound for $R_T$ in terms of $\overline{Reg}_T$. Specifically, by Lemma \ref{lem:main} for all $r \in [0, \dmax^t]$, we have $\max(x_{t,i}, \lambda_{t,t+r}) \geq \frac{1}{4}x_{t+r,i}$, which implies $x_{t,i} \geq \frac{1}{4}x_{t+r,i} -  \lambda_{t,t+r}$. Thus, we obtain the following bound for any $r \in [0, \dmax^t]$
\begin{equation}\label{eq:pushforward}
\sum_{i = 1}^K \Delta_i x_{t,i} \geq \frac{1}{4} \sum_{i = 1}^K \Delta_i x_{t+r,i} - K \lambda_{t,t+r}.
\end{equation}

In Algorithm \ref{alg:rearranging} we provide a greedy procedure to rearrange the arrivals by postponing some arrivals to future rounds to create a \emph{hypothetical} rearranged sequence with at most one arrival at each round. Colliding arrivals are postponed to the first available (unoccupied) slot in the future. In Lemma~\ref{lem:greedy-rearrangement} below we show that arrival originally received at time $t$ stays in the $[t,t+\sigmamax^t]$ interval (note that $\sigmamax^t \leq d_{\max}^t$).
When an observation from round $s$ is postponed from arriving at round $t$ to arriving at round $t+r$ for $r\in[0,d_{\max}^t]$, by \eqref{eq:pushforward} it is equivalent to replacing $\sum_{i = 1}^K \Delta_i x_{t,i} $ by $\frac{1}{4} \sum_{i = 1}^K \Delta_i x_{t+r,i} - K \lambda_{t,t+r}$ in $R_T$. Note that Algorithm~\ref{alg:rearranging} may push an arrival to a round larger than $T$, which is equivalent to replacing $\sum_{i = 1}^K \Delta_i x_{t,i}$ by zero. 

Let $\upsilon_t^{new}$ for all $t \in [T+\dmax^T]$ be the total arrivals at time $t$ after the rearrangement, and let $\pi(t)$ be the round to which we have mapped round $t$ for all $t \in [T]$. Then for any rearrangement
\begin{align}\label{eq:rearrange}
R_T = \EE\lrs{\sum_{t=1}^T \upsilon_{t} \sum_{i = 1}^K \Delta_i x_{t,i}} \geq \EE\lrs{\sum_{t=1}^T \frac14\upsilon_{t}^{new} \sum_{i = 1}^K \Delta_i x_{t,i} - K\sum_{t=1}^T \lambda_{t, \pi(t)}}.
\end{align}

The following lemma provides properties of the rearrangement procedure.

\begin{lemma}\label{lem:greedy-rearrangement}
    Let $\sigmamax^t = \max_{s \in [t]}\lrc{\hat\sigma_s}$. Then Algorithm \ref{alg:rearranging} ensures for any $t \in [T+\dmax^T]$ that $\upsilon_{t}^{new} \in \lrc{0, 1}$. Furthermore, for any round $t \in [T]$ it keeps all the arrivals at time $t$ in the interval $[t, t+\sigmamax^t]$, such that 
    $\forall s \leq t: s+\hat{d}_s = t \Rightarrow \pi(s) - t \leq \sigmamax^t$.
\end{lemma}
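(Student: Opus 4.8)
The plan is to recognize Algorithm~\ref{alg:rearranging} as a greedy single-server schedule and read off both claims from a Lindley-type recursion for its backlog. Write $r_s := s + \hat d_s$ for the round at which the observation of round $s$ is presented to the rearrangement, and let $\upsilon_t := |\{s : r_s = t\}|$ be the number of presentations at round $t$. Algorithm~\ref{alg:rearranging} processes presentations in nondecreasing order of $r_s$ and places each one in the first still-empty slot $\ge r_s$. The claim $\upsilon_t^{new}\in\{0,1\}$ is then immediate, provided the search always succeeds, since each slot is set to $1$ at most once and only when it was $0$; so the lemma reduces to controlling $\pi(s)-r_s$.

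First I would prove, by induction on the outer-loop index $m$ of Algorithm~\ref{alg:rearranging}, the structural invariant that after round $m$ is processed the occupied slots in $[m+1,\infty)$ form the contiguous block $\{m+1,\dots,m+B_m\}$, where $B_m := |\{s : r_s\le m < \pi(s)\}|$ is the backlog. The induction step is short: if the invariant holds after round $m-1$, then at round $m$ the first free slot $\ge m$ is $m+B_{m-1}$, the $\upsilon_m$ presentations at round $m$ receive slots $m+B_{m-1},\dots,m+B_{m-1}+\upsilon_m-1$, which yields $B_m=(B_{m-1}+\upsilon_m-1)^+$ and re-establishes contiguity. In particular, every presentation at round $t$ satisfies $\pi(s)-t\le B_{t-1}+\upsilon_t-1\le B_t$. (For the induction to make sense one must simultaneously check that the search at round $t$ succeeds inside $[t,t+\dmax^t]$, i.e.\ that $B_{t-1}+\upsilon_t-1\le\dmax^t$; this follows once $B_t\le\sigmamax^t\le\dmax^t$ is established.)

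The remaining work is $B_t\le\sigmamax^t$. Solving the Lindley recursion gives $B_t=\max_{0\le j\le t}\big(\sum_{k=j+1}^{t}\upsilon_k-(t-j)\big)$; writing $\sigma'_m := |\{s\le m: s+\hat d_s>m\}|$ for the number of observations still in flight at the end of round $m$, the identity $\sum_{k\le m}\upsilon_k = |\{s : r_s\le m\}| = m-\sigma'_m$ turns the bracket into $\sigma'_j-\sigma'_t$, so $B_t\le\max_{0\le j\le t}\sigma'_j$. It thus suffices to show $\sigma'_j\le\sigmamax^t$ for all $j\le t$, which I would obtain by comparing $\sigma'_j$ with the running count $\hat\sigma_j$ of Algorithm~\ref{alg:FTRL-delay}: an observation of a round $s<j$ still in flight at the end of round $j$ has $s+d_s>j$ (since $\hat d_s\le d_s$ and $s+\hat d_s>j$) and has not been skipped by round $j-1$ (had it been skipped at some $r\le j-1$, its waiting time $r-s$ would be $<j-s$, contradicting $s+\hat d_s>j$), hence it is counted in $\hat\sigma_j$; the round-$j$ observation itself and the at most one round that can be skipped at round $j$ (Lemma~\ref{lem:elimination}) are the only sources of slack. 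Chaining $\pi(s)-t\le B_t\le\max_{j\le t}\sigma'_j\le\sigmamax^t\le\dmax^t$ then proves the interval claim and guarantees the greedy search never fails, completing also $\upsilon_t^{new}\in\{0,1\}$.

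The main obstacle is this last comparison: bringing the bound down to exactly $\sigmamax^t$ rather than $\sigmamax^t+O(1)$ is delicate because of the index conventions ($\hat\sigma_j$ is summed over $[j-1]\setminus\Scal_{j-1}$ and so omits both the round-$j$ observation and any round about to be skipped at round $j$), and because in the low-delay regime $\dmax^t$ is itself small, leaving essentially no slack; here the monotonicity of $\Dcal_t$ (hence of $\dmax^t$ and of $\sigmamax^t$ in $t$), the inequality $\sigmamax^t\le\dmax^t$, and the ``at most one skip per round'' property of Lemma~\ref{lem:elimination} must all be used together. The Lindley recursion, the contiguity invariant, and the counting identity are routine once this picture is in place.
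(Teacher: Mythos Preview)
Your Lindley-recursion framing is correct and is a genuinely different route from the paper's argument. The paper proceeds by a direct combinatorial induction on the arrival time $t$: it lets $t_1\ge t$ be the last slot already occupied when round $t$ is processed (this is your $t+B_{t-1}-1$), traces the contiguous occupied block back to the earliest contributing arrival time $t_0$ (in your language, the last $j<t$ with $B_j=0$), and bounds $t_1-t$ by counting arrivals in $[t_0,t-1]$ against outstanding observations. Your closed form $B_t=\max_{j\le t}\bigl(\sigma'_j-\sigma'_t\bigr)$ packages the same content without the two-index bookkeeping of the paper's display (11)--(13), and the contiguity invariant you state is exactly what the paper uses implicitly when it says ``since the algorithm is greedy, all rounds $t,\dots,t_1-1$ must also be occupied.''

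Where both arguments land is the comparison of the effective-delay outstanding count $\sigma'_j=|\{s\le j:s+\hat d_s>j\}|$ with the algorithm's $\hat\sigma_j$, and this is precisely the obstacle you flag. The paper's proof in fact works throughout with the quantity $\sum_{r<t}\mathbf 1(r+\hat d_r\ge t)$, i.e.\ your $\sigma'_{t-1}$, and silently identifies it with $\hat\sigma_t$ in the last line; so the ``delicate'' step you isolate is not something you are missing relative to the paper---it is the same soft spot. Your route already gives $\pi(s)-t\le B_{t-1}+\upsilon_t-1\le\max_{j\le t-1}\sigma'_j$ (since every arrival at $t$ other than possibly $s=t$ is in flight at $t-1$, so $\upsilon_t-1\le\sigma'_{t-1}$), and $\sigma'_j\le\hat\sigma_j+\mathbf 1(\hat d_j>0)$, so the worst-case slack is a single unit. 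Every downstream use of the lemma (bounding $\lambda_{t,\pi(t)}$ in Lemma~\ref{lem:driftedregret} and counting zero-arrival rounds) depends only on the order, so this is harmless; but it is worth noting that the paper does not close this $+1$ any more explicitly than you do.
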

We provide a proof of the lemma in Appendix \ref{sec:rearrangement}. As a corollary, after the Greedy Rearrangement (Algorithm \ref{alg:rearranging}) the number of rounds with zero arrivals is at most $\sigmamax^T$. This is because there will be no arrivals after $T+\sigmamax^T$ and $\sum_{t=1}^{T+\sigmamax^T} \upsilon_t^{new} = \sum_{t=1}^T \upsilon_t = T$, which implies there are at most $\sigmamax^T$ zero arrivals as each round receives at most one arrival. Therefore
\begin{align}
\EE\bigg[\sum_{t=1}^T \upsilon_{t}^{new} \sum_{i = 1}^K \Delta_i x_{t,i}\bigg] &= \overline{Reg}_T - \EE\lrs{\sum_{t=1}^T \1(\upsilon_{t}^{new} = 0)\sum_{i = 1}^K \Delta_i x_{t,i}} \notag\\
&\leq \overline{Reg}_T - \EE\lrs{\sum_{t=1}^T \1(\upsilon_{t}^{new}= 0)}
\leq \overline{Reg}_T - \EE\lrs{\sigmamax^T} \leq  \overline{Reg}_T - \sigmamax, \label{eq:Rbound}
\end{align}
where the first equality uses the definition of  $\overline{Reg}_T = \EE[\sum_{t=1}^T  \sum_{i = 1}^K \Delta_i x_{t,i}] $ and that $\forall t \in [T]: \upsilon_{t}^{new} \in \lrc{0,1}$.

Since $\forall t \in [T]: \pi(t) \leq t + \hat{d}_t + \sigmamax^t$, we have $\lambda_{t, \pi(t)} \leq \lambda_{t, t + \hat{d}_t + \sigmamax^t}$. Together with \eqref{eq:Rbound}, \eqref{eq:rearrange}, and \eqref{eq:decomposeddrift} it completes the proof. 
    

\subsection{Adversarial Analysis}\label{sec:adversarial_analysis}

The adversarial analysis is similar to the analysis of \citet[Theorem 2]{zimmert2020}. In \cref{sec:proof-lemma8} we show that
\[
\Reg = \Ocal\lr{\sqrt{KT} + \min_{\Scal \subseteq [T]} \lrc{|\Scal| + \sqrt{\Dcal_{\bar \Scal} \log K }} + S^* + K\sum_{t=1}^T \lambda_{t,t+\hat{d}_t}  },
\]
where the first two terms originate from the analysis of \citeauthor{zimmert2020} due to structural similarity of the algorithm, $S^*$ is due to adjusted skipping threshold, and $K\sum_{t=1}^T \lambda_{t,t+\hat{d}_t}$ is due to implicit exploration bias and is bounded by \cref{lem:summation}. The proof is completed by the following bound on $S^*$, which is shown in \cref{sec:s-starbound}.
\begin{lemma}
\label{lem:numberofskips}
We have 
$S^* = \Ocal\lr{\min\lr{\dmax K^{\frac23} \log K~, \min_{\Scal \subseteq [T]} \lrc{|\Scal| + \sqrt{\Dcal_{\bar \Scal} K^{\frac23}\log K }}}}.$
\end{lemma}

\section{Discussion}

We have successfully addressed the challenge of handling varying and potentially unbounded delays in best-of-both-worlds setting. The success was based on three technical innovations, which may be interesting in their own right: (1) A relation between the drifted and the standard regret under unbounded delays (given by \cref{lem:driftedregret}, Algorithm~\ref{alg:rearranging}, and \cref{lem:greedy-rearrangement}); (2) A novel control of distribution drift based on implicit exploration and skipping that does not alter the learning rates and exhibits efficiency under highly varying delays (\cref{lem:main}); and (3) An implicit exploration scheme applicable in best-of-both-worlds setting (\cref{lem:summation}). 





\bibliography{references,refs}


\appendix

\section{Details of the Drifted Regret Analysis}\label{sec:drifted-regret-analysis}

In this section we prove the bound on drifted regret in equation \eqref{eq:driftregretbound}. The derivation is same as the one by \citet{masoudian2022}, however, for the sake of completeness we reproduce it here. 
The analysis follows the standard FTRL approach, decomposing the drifted pseudo-regret into \emph{penalty} and \emph{stability} terms as

\[
   \Regdrift = \EE\lrs{\underbrace{\sum_{t=1}^T {\innerprod{x_t}{\hat\ell_{t}^{obs}} + \bar F_t^*(-\hat{L}^{obs}_{t+1}) - \bar F_t^*(-\hat{L}^{obs}_{t})}}_{stability}} + 
    \EE\lrs{\underbrace{\sum_{t=1}^T \bar F_t^*(-\hat{L}^{obs}_{t}) - \bar F_t^*(-\hat{L}^{obs}_{t+1}) - \ell_{t,i_T^*}}_{penalty}}.
\]

The penalty term is bounded by the following inequality, derived by \citet{abernethy2015}
\begin{equation}\label{eq:standard-penalty}
penalty \leq \sum_{t=2}^{T}\lr{ F_{t-1}(x_{t}) - F_{t}(x_{t})} +  F_T(\mathrm{e}_{i_T^*}) - F_1(x_{1}),
\end{equation}

where $\mathrm{e}_{i_T^*}$ represents the unit vector in $\mathbb{R}^K$ with the $i_T^*$-th element being one and zero elsewhere. This leads to the following bound for penalty term 
\begin{align}
penalty \leq \Ocal\lr{\sum_{t=2}^{T} \sum_{i\neq i^*} \eta_{t} x_{t,i}^{\frac{1}{2}} + \sum_{t=2}^{T}\sum_{i=1}^K  \frac{\sigma_t \gamma_t x_{t,i}\log(1/x_{t,i})}{\log K}} \label{eq:penaltybound},
\end{align}
where we substitute the explicit form of the regularizer into \eqref{eq:standard-penalty} and exploit the properties $\eta_t^{-1} - \eta_{t-1}^{-1} = \mathcal{O}(\eta_t)$, $\gamma_t^{-1} - \gamma_{t-1}^{-1} = \mathcal{O}(\sigma_t \gamma_t/\log K)$, and $x_{t,i_T^*}^{\frac{1}{2}} - 1 \leq 0$.

For the stability term, following a similar analysis as presented by \citet[Lemma 5]{masoudian2022}, but incorporating implicit exploration terms, for any $\alpha_t \leq \gamma_t^{-1}$ we obtain 
\begin{equation*}
stability  \leq \sum_{t=1}^T \sum_{i=1}^K  2f_t^{''}(x_{t,i})^{-1} (\hat\ell_{t,i}^{obs}-\alpha_t)^2.
\end{equation*}
Let $A_t = \lrc{s \leq t: s + \hat d_s = t}$, then due to the choice of skipping threshold, $\alpha_t = \sum_{s \in A_t} \bar\ell_{s,t}$ satisfies the condition $\alpha_t \leq \gamma_t^{-1}$, where $\bar\ell_{s,t} = \frac{\sum_{i = 1}^K f_t^{''}(x_{t,i})^{-1} \hat\ell_{s,i}}{\sum_{i = 1}^K f_t^{''}(x_{t,i})^{-1}} = \frac{ f_t^{''}(x_{t,I_s})^{-1} \hat\ell_{s,I_s}}{\sum_{i = 1}^K f_t^{''}(x_{t,i})^{-1}}$. Thus we have
\begin{align*}
stability  &\leq \sum_{t=1}^T \sum_{i=1}^K  2f_t^{''}(x_{t,i})^{-1} \lr{\sum_{s \in A_t} \hat\ell_{s,i}- \bar\ell_{s,t}}^2\\
&= \underbrace{\sum_{t=1}^T \sum_{i=1}^K \sum_{s \in A_t} 2f_t^{''}(x_{t,i})^{-1} \lr{\hat\ell_{s,i}- \bar\ell_{s,t}}^2}_{S_1}\\
&+ \underbrace{\sum_{t=1}^T \sum_{i=1}^K \sum_{r,s \in A_t, r \neq s} 2f_t^{''}(x_{t,i})^{-1} \lr{\hat\ell_{s,i}- \bar\ell_{s,t}}\lr{\hat\ell_{r,i}- \bar\ell_{r}}}_{S_2}
\end{align*}

For brevity we define $z_{t,i} = f_t^{''}(x_{t,i})^{-1}$ and $m_{s,i}^t = \max\lrc{x_{s,i}, \lambda_{s,t}}$ for any $s \leq t$ and $i \in [K]$. We begin bounding $S_1$ by replacing definition of loss estimators from \eqref{eq:lossestimation} and get
\begin{align}
    \EE[S_1] &= \sum_{t=1}^T \sum_{i=1}^K \sum_{s \in A_t} 2\EE\lrs{z_{t,i} \lr{\frac{\ell_{s,I_s}\1[I_s = i]}{m_{s,i}^t}- \frac{ z_{t,I_s} \ell_{s,I_s}}{m_{s,I_s}^t\sum_{j = 1}^K z_{t,j}}}^2}\notag\\
    &\leq \sum_{t=1}^T \sum_{i=1}^K \sum_{s \in A_t} 2\EE\lrs{z_{t,i} \lr{\frac{\1[I_s = i]}{m_{s,i}^t}- \frac{ z_{t,I_s}}{m_{s,I_s}^t\sum_{j = 1}^K z_{t,j}}}^2}\notag\\
    &= \sum_{t=1}^T \sum_{s \in A_t} 2\underbrace{ \sum_{i=1}^K \EE\lrs{z_{t,i} \lr{ \frac{\1[I_s = i]}{{m_{s,i}^t}^2} - \frac{z_{t,I_s} \1[I_s = i]}{m_{s,i}^t m_{s,I_s}^t \sum_{j = 1}^K z_{t,j}}}}}_{S_{1}^1}\notag\\
    &+ \sum_{t=1}^T  \sum_{s \in A_t} 2\underbrace{\EE\lrs{\lr{ \frac{z_{t,I_s}^2}{{m_{s,I_s}^t}^2 (\sum_{j = 1}^K z_{t,j})} - \sum_{i=1}^K \frac{z_{t,I_s} z_{t,i} \1[I_s = i]}{m_{s,i}^t m_{s,I_s}^t \sum_{j = 1}^K z_{t,j}}}}}_{S_{1}^2}\notag
\end{align}
Where the first inequality uses $\ell_{s,I_s} \leq 1$. We show that $S_{1}^2$ has negative contribution to $S_1$ by taking expectation w.r.t. $I_s$ as the following
\begin{align*}
    S_{1}^2 = \sum_{t=1}^T \sum_{s \in A_t} \EE\lrs{ \sum_{i=1}^K\frac{ z_{t,i}^2 x_{s,i}}{{m_{s,i}^t}^2 (\sum_{j = 1}^K z_{t,j})} - \sum_{i=1}^K \frac{z_{t,i}^2 x_{s,i}}{{m_{s,i}^t}^2 \sum_{j = 1}^K z_{t,j}}} = 0
\end{align*}

Thus we only need to bound $S_{1}^1$, for which we take expectation w.r.t. $I_s$ and separate $i^*$ from the other arms to get
\begin{align*}
S_{1}^1 &= \sum_{i=1}^K \EE\lrs{z_{t,i} \lr{ \frac{\1[I_s = i]}{{m_{s,i}^t}^2} - \frac{z_{t,I_s} \1[I_s = i]}{m_{s,i}^t m_{s,I_s}^t \sum_{j = 1}^K z_{t,j}}}}\\
     &\leq \sum_{i \neq i^*} \EE\lrs{ \frac{z_{t,i}x_{s,i}}{{m_{s,i}^t}^2}} + \EE\lrs{\frac{z_{t,i^*}x_{s,i^*}}{{m_{s,i^*}^t}^2} - \frac{z_{t,i^*}^2 x_{s,i^*}}{{m_{s,i^*}^t}^2 \sum_{j = 1}^K z_{t,j}}}\\
&\leq \sum_{i \neq i^*} \EE\lrs{ 4\eta_t x_{s,i}^{1/2}} + \EE\lrs{\frac{x_{s,i^*}}{{m_{s,i^*}^t}^2}\times z_{t,i^*} \lr{1 - \frac{z_{t,i^*}}{\sum_{j = 1}^K z_{t,j}}}} \\
&\leq  \sum_{i \neq i^*} 4\EE\lrs{ \eta_t x_{s,i}^{1/2}} +  \EE\lrs{\frac{x_{s,i^*}}{{m_{s,i^*}^t}^2}\times \eta_{t} x_{t,i^*}^{3/2} \lr{1 - \frac{x_{t,i^*}^{3/2}}{ (1-x_{t,i^*})^{3/2} + x_{t,i^*}^{3/2}}}}\\
&\leq  \sum_{i \neq i^*} 4\EE\lrs{ \eta_t x_{s,i}^{1/2}} +  \EE\lrs{\frac{\eta_{t}x_{s,i^*}x_{t,i^*}^{3/2}}{{m_{s,i^*}^t}^2}\times  \lr{\frac{(1-x_{t,i^*})^{3/2}}{2^{-1/2}}}}\\
&\leq  \sum_{i \neq i^*} 4\EE\lrs{ \eta_t x_{s,i}^{1/2}} +  \EE\lrs{4\sqrt{2}\eta_{t}\sum_{i\neq i^*} x_{t,i}}\\
&\leq \sum_{i \neq i^*} 4\EE\lrs{ \eta_t x_{s,i}^{1/2}} +  \EE\lrs{16\sqrt{2}\eta_{t}\sum_{i\neq i^*} (x_{s,i} + \lambda_{s,t})}\\
&\leq \Ocal\lr{\EE\lrs{\eta_{s}\sum_{i\neq i^*} x_{s,i}^{1/2}} + \EE\lrs{K\lambda_{s,t}}},
\end{align*}

where the second inequality uses $z_{t,i} = f_t^{''}(x_{t,i})^{-1} \leq \eta_t x_{t,i}^{3/2}$ along $x_{t,i} \leq m_{s,i}^t$ from Lemma \ref{lem:main}, the third inequality is due the fact that $z_{t,i^*} \lr{1 - \frac{z_{t,i^*}}{\sum_{j = 1}^K z_{t,j}}}$ is an increasing function in terms of both $z_{t,i^*}$ and $\sum_{i\neq i^*} z_{t,i}$ and we substitute $z_{t,i^*} \leq \eta_t x_{t,i^*}^{3/2}$ and $ \sum_{j\neq i^*} z_{t,j} \leq \sum_{j\neq i^*} \eta_t x_{t,j}^{3/2} \leq \eta_t (1-x_{t,i^*})^{3/2}$, the fourth inequality is due to $(1-a)^{3/2} + a^{3/2} \leq 2^{-1/2}$, the fifth and the sixth inequalities rely on Lemma \ref{lem:main}, and finally the last inequality is followed by $\forall i: x_{s,i} \leq x_{s,i}^{1/2}$ and that $\eta_t \leq \eta_s$.
Combining bounds for $S_{1}^1$ and $S_{1}^2$ gives the following bound for $S_1$
\begin{equation}
    \EE[S_1] \leq \Ocal\lr{\sum_{t=1}^T \sum_{i\neq i^*} \eta_{t} \EE[x_{t,i}^{1/2}] + \sum_{t=1}^T K \lambda_{t,t+\hat{d}_t} }\label{eq:S1bound} 
\end{equation}
For $S_2$, we take expectation with respect to $I_s$, $I_r$, and randomness of losses, all separately to get
\begin{align}
    \EE[S_2] &= \sum_{t=1}^T \sum_{i=1}^K \sum_{r,s \in A_t, r \neq s} 2\EE\lrs{z_{t,i} \lr{\hat\ell_{s,i}- \bar\ell_{s}}\lr{\hat\ell_{r,i}- \bar\ell_{s}}}\notag\\
    &= \sum_{t=1}^T \sum_{i=1}^K \sum_{r,s \in A_t, r \neq s} 2\EE\lrs{z_{t,i} \lr{\frac{\mu_i x_{s,i}}{m_{s,i}^t} - \frac{\sum_{j = 1}^K z_{t,j} \mu_j x_{s,j}/m_{s,j}^t}{\sum_{j = 1}^K z_{t,j}}}\lr{\frac{\mu_i x_{r,i}}{m_{r,i}^t} - \frac{\sum_{j = 1}^K z_{t,j} \mu_j x_{r,j}/m_{r,j}^t}{\sum_{j = 1}^K z_{t,j}}}}\label{eq:S2bound1}.
\end{align}
For simplicity we define $\epsilon_{s,i}^t = \mu_i - \frac{\mu_i x_{s,i}}{m_{s,i}^t}$ for any $s \leq t$ and any $i \in [K]$, for which we have the following bounds
\[
0 \leq \epsilon_{s,i}^t \leq \frac{\lambda_{s,t}}{m_{s,i}^t}.
\]
We then continue from \ref{eq:S2bound1} and bound it as the following
\begin{align}
&\EE[S_2] \notag\\
&=  \sum_{t=1}^T \sum_{\substack{ r \neq s\\ r,s \in A_t}} \sum_{i=1}^K 2\EE\lrs{z_{t,i} \lr{\mu_i - \frac{\sum_{j = 1}^K z_{t,j} \mu_j}{\sum_{j = 1}^K z_{t,j}} - \epsilon_{s,i}^t  + \frac{\sum_{j = 1}^K z_{t,j} \epsilon_{s,j}^t}{\sum_{j = 1}^K z_{t,j}}}\lr{\mu_i - \frac{\sum_{j = 1}^K z_{t,j} \mu_j}{\sum_{j = 1}^K z_{t,j}} - \epsilon_{r,i}^t + \frac{\sum_{j = 1}^K z_{t,j} \epsilon_{r,j}^t}{\sum_{j = 1}^K z_{t,j}}}}\notag\\
&\leq \sum_{t=1}^T \sum_{\substack{ r \neq s\\ r,s \in A_t}} 2\EE\lrs{ \underbrace{\sum_{i=1}^K z_{t,i} \lr{\mu_i - \frac{\sum_{j = 1}^K z_{t,j} \mu_j}{\sum_{j = 1}^K z_{t,j}}}^2}_{S_{2}^1} +  \underbrace{\sum_{i=1}^K z_{t,i} \epsilon_{s,i}^t \epsilon_{r,i}^t + 2 z_{t,i} (\epsilon_{s,i}^t + \epsilon_{r,i}^t)}_{S_{2}^2}  + \underbrace{\frac{(\sum_{i = 1}^K z_{t,i} \epsilon_{s,i}^t)(\sum_{i = 1}^K z_{t,i} \epsilon_{r,i}^t)}{\sum_{i = 1}^K z_{t,i}}}_{S_{2}^3}},\label{eq:S2parts}
\end{align}
where the inequality holds because we ignore the negative terms after multiplication and that $|(\mu_i - \frac{\sum_{j = 1}^K z_{t,j} \mu_j}{\sum_{j = 1}^K z_{t,j}})| \leq 1$.
We need to bound each part from \eqref{eq:S2parts}. We start with $S_2^1$,
\begin{align}
    S_{2}^1 &= \sum_{i=1}^K z_{t,i} \lr{\mu_i - \frac{\sum_{j = 1}^K z_{t,j} \mu_j}{\sum_{j = 1}^K z_{t,j}}}^2\notag\\
    &= \sum_{i = 1}^K z_{t,i} \mu_i^2 - \frac{\lr{\sum_{i = 1}^K z_{t,i} \mu_i}^2}{\sum_{i = 1}^K z_{t,i}}\notag\\
    &\leq \sum_{i = 1}^K z_{t,i} \mu_i^2 - \frac{\lr{\sum_{i = 1}^K z_{t,i} \mu_{i^*}}^2}{\sum_{i = 1}^K z_{t,i}}\notag\\
    &\leq \sum_{i = 1}^K z_{t,i} (\mu_i^2 - \mu_{i^*}^2)\notag\\
    &\leq   \sum_{i\neq i^*} 2 \gamma_{t} x_{t,i} \Delta_i \label{eq:S21bound}
\end{align}
We bound $S_{2}^2$ as
\begin{align}
    S_{2}^2 &= \sum_{i=1}^K z_{t,i} \epsilon_{s,i}^t \epsilon_{r,i}^t + 2 z_{t,i} (\epsilon_{s,i}^t + \epsilon_{r,i}^t) \notag\\
    &\leq \sum_{i=1}^K z_{t,i}\frac{\epsilon_{s,i}^t + \epsilon_{r,i}^t}{2} + 2 z_{t,i}(\epsilon_{s,i}^t + \epsilon_{r,i}^t) \notag\\
    &\leq \frac{5}{2} \sum_{i=1}^K \frac{z_{t,i}\lambda_{s,t}}{m_{s,i}^t} + \frac{z_{t,i}\lambda_{r,t}}{m_{r,i}^t}\notag\\
    &\leq  \frac{5}{2}K\gamma_t (\lambda_{s,t} + \lambda_{r,t})\label{eq:S22bound}, 
\end{align}
where the last inequality holds because $z_{t,i} \leq \gamma_{t} x_{t,i}$ and that $x_{t,i} \leq 4 m_{s,i}^t, 4m_{r,i}^t$ from Lemma \ref{lem:main}.\\
It remains to give upper bound for $S_{2}^3$ as
\begin{align}
   S_{2}^3 &= \frac{(\sum_{i = 1}^K z_{t,i} \epsilon_{s,i}^t)(\sum_{i = 1}^K z_{t,i} \epsilon_{r,i}^t)}{\sum_{i = 1}^K z_{t,i}}\notag\\
   &\leq \frac{(\sum_{i = 1}^K z_{t,i} \lambda_{s,t}/m_{s,i}^t)(\sum_{i = 1}^K z_{t,i} \lambda_{r,t}/m_{r,i}^t)}{\sum_{i = 1}^K z_{t,i}}\notag\\
   &\leq \frac1{2} K \gamma_{t} (\lambda_{s,t} + \lambda_{r,t}),\label{eq:S23bound}
\end{align}
where the second inequality rely on $z_{t,i} \leq \gamma_{t} x_{t,i}$, $\lambda_{s,t} \leq m_{s,i}^t$, $\lambda_{r,t} \leq m_{r,i}^t$, and  $x_{t,i} \leq 4 m_{s,i}^t, x_{t,i} \leq 4m_{r,i}^t$ from Lemma \ref{lem:main}.
It is suffices to plug bounds in \eqref{eq:S21bound}, \eqref{eq:S22bound}, and \eqref{eq:S23bound} to obtain
\begin{align}
\EE[S_2] &\leq \sum_{t=1}^T \sum_{i \neq i^*} 4 \Delta_i \gamma_t\EE[ x_{t,i}]\upsilon_t(\upsilon_t-1) + 6\sum_{t=1}^T K \gamma_{t+\hat{d}_t} (\upsilon_{t+\hat{d}_t} -1) \lambda_{t,t+\hat{d}_t}\notag\\
&\leq \sum_{t=1}^T \sum_{i \neq i^*} \sum_{s \in A_t} 4 \Delta_i \gamma_t\EE[ x_{s,i} + \lambda_{s,t}](\upsilon_t-1) + 6\sum_{t=1}^T K \gamma_{t+\hat{d}_t} (\upsilon_{t+\hat{d}_t} -1) \lambda_{t,t+\hat{d}_t} \notag\\
&\leq \sum_{t=1}^T \sum_{i \neq i^*} \sum_{s \in A_t} 4 \Delta_i \gamma_t\EE[ x_{s,i}](\upsilon_t-1) + 10\sum_{t=1}^T K \gamma_{t+\hat{d}_t} (\upsilon_{t+\hat{d}_t} -1) \lambda_{t,t+\hat{d}_t} \notag\\
&\leq \Ocal\lr{\sum_{t=1}^T \sum_{i \neq i^*}   \gamma_{t+\hat{d}_t} \Delta_i \EE[x_{t,i}](\upsilon_{t+\hat{d}_t}-1) + K\sum_{t = 1}^T \lambda_{t,t+\hat{d}_t}}, \label{eq:S2bound}
\end{align}
where the third inequality uses Lemma \ref{lem:main} and the last inequality holds because of the skipping that ensures $\gamma_{t+\hat{d}_t} (\upsilon_{t+\hat{d}_t}-1) \leq 1$. Now, it is sufficient to combine the bounds for $S_1$ and $S_2$ in \eqref{eq:S1bound} and \eqref{eq:S2bound} and get
\begin{equation}
\EE[stability] \leq \Ocal\lr{\sum_{t=1}^T \sum_{i \neq i^*} \eta_t \EE[x_{t,i}^{1/2}] + \sum_{t=1}^T \sum_{i \neq i^*} \gamma_{t+\hat{d}_t} \EE[x_{t,i}](\upsilon_{t+\hat{d}_t} -1)  + K\sum_{t=1}^T \lambda_{t, t+\hat{d}_t}}\label{eq:stabilitybound}.
\end{equation}
Combining the stability bound from \eqref{eq:stabilitybound} and the penalty bound from \eqref{eq:penaltybound} concludes the proof. 

\section{Proof of the Drift Control Lemma}\label{sec:driftcontrol-proof}

In this section we provide a proof of Lemma~\ref{lem:main}. We start with a few auxiliary results, and then prove the lemma.
 
 \subsection{Auxiliary results for the proof of the key lemma}
For the proof we use two facts and a lemma from \citet{masoudian2022}, and a new lemma. Recall that $f_t(x) = -2\eta_t^{-1}\sqrt{x} + \gamma_t^{-1} x (\log x -1)$.
  
\begin{fact}\citep[Fact 15]{masoudian2022}\label{fact:0}
 $f_t^{'}(x)$ is a concave function.

\end{fact}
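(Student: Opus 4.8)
The plan is to establish the claim by a short computation of derivatives, since $f_t$ is manifestly smooth on $(0,\infty)$ — it is a nonnegative combination of $-\sqrt{x}$ and $x(\log x - 1)$, both $C^\infty$ away from the origin — and concavity of $f_t'$ is equivalent to $(f_t')'' = f_t''' \le 0$ on the domain of interest. So the first step is simply to differentiate three times: from $f_t(x) = -2\eta_t^{-1}\sqrt{x} + \gamma_t^{-1}x(\log x - 1)$ one obtains $f_t'(x) = -\eta_t^{-1}x^{-1/2} + \gamma_t^{-1}\log x$, then $f_t''(x) = \tfrac12\eta_t^{-1}x^{-3/2} + \gamma_t^{-1}x^{-1}$, and finally $f_t'''(x) = -\tfrac34\eta_t^{-1}x^{-5/2} - \gamma_t^{-1}x^{-2}$.

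The key observation, and the second step, is that both summands of $f_t'''(x)$ are nonpositive, with the first strictly negative: the learning rates satisfy $\eta_t^{-1} = \sqrt{t} > 0$ and $\gamma_t^{-1} = \sqrt{49\Dcal_t/\log K} \ge 0$, and $x > 0$ on the interior of the simplex, so $-\tfrac34\eta_t^{-1}x^{-5/2} < 0$ while $-\gamma_t^{-1}x^{-2} \le 0$. Hence $(f_t')''(x) = f_t'''(x) < 0$ for every $x \in (0,\infty)$, which is exactly the assertion that $f_t'$ is (strictly) concave; in particular it is concave on $(0,1]$, the range of simplex coordinates, which is all the downstream arguments require.

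There is essentially no obstacle here — the only thing to watch is sign bookkeeping: one must check that the contribution of the negative-Tsallis part (the term $-\tfrac34\eta_t^{-1}x^{-5/2}$ coming from the third derivative of $-2\eta_t^{-1}\sqrt{x}$) and the contribution of the negative-entropy part (the term $-\gamma_t^{-1}x^{-2}$ coming from the third derivative of $\gamma_t^{-1}x(\log x-1)$) have the same sign, so that there is no cancellation and no case distinction. Since both are manifestly nonpositive for all admissible values of the learning rates, the conclusion is immediate.
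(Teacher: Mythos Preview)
Your proof is correct: the third derivative computation is right, both summands of $f_t'''$ are nonpositive for $x>0$ with the Tsallis part strictly negative, so $f_t'$ is concave on $(0,\infty)$. The paper does not actually prove this fact itself --- it simply cites it from \citet[Fact~15]{masoudian2022} --- and your direct third-derivative argument is exactly the standard (and presumably the original) justification.
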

\begin{fact}\citep[Fact 16]{masoudian2022}\label{fact:derivative2ndinverse}
 $f_t^{''}(x)^{-1}$ is a convex function.
\end{fact}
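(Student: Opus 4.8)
The plan is a direct computation of the second derivative of $g(x) := f_t''(x)^{-1}$ in the variable $x$, followed by checking that it is nonnegative; there is no conceptual subtlety here, only algebraic bookkeeping. From $f_t(x) = -2\eta_t^{-1}\sqrt{x} + \gamma_t^{-1}x(\log x - 1)$ one gets $f_t'(x) = -\eta_t^{-1}x^{-1/2} + \gamma_t^{-1}\log x$ and hence $f_t''(x) = \tfrac12\eta_t^{-1}x^{-3/2} + \gamma_t^{-1}x^{-1}$. Writing $a = \tfrac12\eta_t^{-1} > 0$ and $b = \gamma_t^{-1} > 0$ and clearing the negative powers of $x$,
\[
g(x) \;=\; \frac{1}{a x^{-3/2} + b x^{-1}} \;=\; \frac{x^{3/2}}{a + b\sqrt{x}},
\]
which is smooth and strictly positive on $(0,\infty)$ (in particular on the relevant range $(0,1]$ of simplex coordinates), so convexity amounts to $g'' \ge 0$ there.

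First I would apply the quotient rule once, with $D(x) := a + b\sqrt{x}$ and $D'(x) = \tfrac{b}{2}x^{-1/2}$; after simplifying the numerator this gives $g'(x) = (\tfrac32 a\sqrt{x} + bx)/D(x)^2$. Differentiating a second time, the denominator becomes $D(x)^3 > 0$ and the numerator is $(\tfrac34 a x^{-1/2} + b)D(x) - 2(\tfrac32 a\sqrt{x} + bx)D'(x)$. Expanding this, the two $b^2\sqrt{x}$ contributions cancel, and what remains is $\tfrac34 a^2 x^{-1/2} + \tfrac14 ab$, which is strictly positive for all $x > 0$ since $a,b > 0$. Hence $g''(x) > 0$, so $g = f_t''(\cdot)^{-1}$ is (strictly) convex, which is what the statement claims; it extends to a convex function on $[0,1]$ by continuity ($g(0^+)=0$) if one wishes to include the boundary of the simplex.

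I expect the only delicate point to be keeping the algebra honest in the second differentiation — it is easy to drop a factor of $\tfrac12$ or $\tfrac32$ — and in particular the cancellation of the $b^2\sqrt{x}$ terms, which is precisely what leaves the numerator as a manifestly nonnegative expression; I would verify that cancellation explicitly. A tempting shortcut is the substitution $u = \sqrt{x}$, writing $g = u^3/(a+bu)$ and doing polynomial division (one indeed finds $\tfrac{d^2}{du^2}g = \tfrac{2}{b}(1 - a^3/(a+bu)^3) > 0$), but convexity in $u$ does not by itself give convexity in $x$ under the nonlinear reparametrization $x = u^2$, so this route still reduces to essentially the same work; I would therefore just carry out the computation directly in $x$.
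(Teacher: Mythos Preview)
Your computation is correct: with $a=\tfrac12\eta_t^{-1}$, $b=\gamma_t^{-1}$ one indeed gets $g''(x)=\bigl(\tfrac34 a^2 x^{-1/2}+\tfrac14 ab\bigr)/D(x)^3>0$ on $(0,\infty)$, and the cancellation of the $b^2\sqrt{x}$ terms is exactly as you describe. The paper itself does not give a proof of this fact --- it simply cites it from \citet[Fact~16]{masoudian2022} --- so your direct second-derivative calculation is a complete and appropriate way to supply one.
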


 \begin{lemma}\citep[Lemma 17]{masoudian2022}\label{lemma:aux1}
 Fix $t$ and $s$ with $t \geq s$, and assume that there exists $\alpha$, such that $x_{t,i} \leq \alpha \max(x_{s,i}, \lambda_{s,t})$  for all $i \in [K]$, and let $f(x) = \lr{-2\eta_t^{-1}\sqrt{x}+\gamma_t^{-1}x(\log x -1)}$, then we have the following inequality 
\[
\frac{\sum_{j=1}^K  f^{''}(x_{t,j})^{-1} \ellhat_{s,j}}{\sum_{j=1}^K  f^{''}(x_{t,j})^{-1}} \leq 2\alpha(K-1)^\frac{1}{3}.
\]
\end{lemma}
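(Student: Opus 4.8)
The plan is to convert the claimed bound into a purely geometric inequality about the FTRL iterate $x_t$ and then settle it with Jensen's inequality plus a short one-variable case analysis. First I would exploit the structure of the loss estimator: $\hat\ell_s$ is supported on the single coordinate $I_s$, and by \eqref{eq:lossestimation} together with $\ell_{s,I_s}\le 1$ and the hypothesis $x_{t,I_s}\le\alpha\max\{x_{s,I_s},\lambda_{s,t}\}$ we get $\hat\ell_{s,I_s}\le 1/\max\{x_{s,I_s},\lambda_{s,t}\}\le\alpha/x_{t,I_s}$. Writing $g(x):=f''(x)^{-1}=x^{3/2}/(\tfrac12\eta_t^{-1}+\gamma_t^{-1}x^{1/2})$, this shows the left-hand side of the lemma is at most $\alpha\cdot g(x_{t,I_s})/\bigl(x_{t,I_s}\sum_{j}g(x_{t,j})\bigr)$, so it suffices to prove that
\[
\frac{g(x_{t,i_0})/x_{t,i_0}}{\sum_{j=1}^K g(x_{t,j})}\ \le\ 2(K-1)^{1/3}
\qquad\text{for every }i_0\in[K].
\]

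The next step is to lower-bound the denominator. Set $p:=x_{t,i_0}$ and $\bar p:=(1-p)/(K-1)$. Since $g=f''(\cdot)^{-1}$ is convex (Fact~\ref{fact:derivative2ndinverse}), applying Jensen's inequality to the $K-1$ coordinates $j\neq i_0$ gives $\sum_{j\neq i_0}g(x_{t,j})\ge(K-1)g(\bar p)$, hence $\sum_j g(x_{t,j})\ge g(p)+(K-1)g(\bar p)$. It is important to peel off the played arm $i_0$ here rather than apply Jensen to all $K$ coordinates: the latter would only produce a $\sqrt K$ factor, whereas separating $i_0$ is what yields the correct exponent $1/3$. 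After this, the reduced inequality becomes $p\bigl(1+(K-1)g(\bar p)/g(p)\bigr)\ge\tfrac12(K-1)^{-1/3}$.

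To finish, I would use two elementary monotonicity facts read off the explicit form of $g$: (i) $x\mapsto g(x)/x$ is non-decreasing, and (ii) $g(y)/g(x)\ge(y/x)^{3/2}$ whenever $y\le x$. If $p\le1/K$ then $\bar p\ge p$, so (i) gives $p\bigl(1+(K-1)g(\bar p)/g(p)\bigr)\ge p\bigl(1+(K-1)\bar p/p\bigr)=p+(1-p)=1$. If $p>1/K$ and $p\ge\tfrac12(K-1)^{-1/3}$ the bound is immediate. In the remaining case $1/K<p<\tfrac12(K-1)^{-1/3}$ one has $\bar p\le p$ and $1-p>\tfrac12$, and (ii) gives
\begin{align*}
p\Bigl(1+(K-1)\tfrac{g(\bar p)}{g(p)}\Bigr)
&\ \ge\ p+\frac{(1-p)^{3/2}}{p^{1/2}(K-1)^{1/2}}\\
&\ \ge\ \frac{(1/2)^{3/2}}{p^{1/2}(K-1)^{1/2}}\ \ge\ \frac{1}{2(K-1)^{1/3}},
\end{align*}
where the last inequality uses $p<\tfrac12(K-1)^{-1/3}$. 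Taking reciprocals yields the reduced inequality, and hence the lemma.

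The argument is short, so the ``main obstacle'' is really a matter of choosing the right relaxations. Conceptually, the one thing one must get right is the Jensen step: resisting a global Jensen and instead isolating the played arm, without which the exponent degrades from $1/3$ to $1/2$. Technically, the hybrid regularizer makes $g$ a genuine ratio rather than a pure power, so to land the sharp constant $2$ one should keep $g$ itself (not a cruder envelope such as $\min\{\eta_t x^{3/2},\tfrac12\gamma_t x\}$) in the first denominator term and pick the threshold $\tfrac12(K-1)^{-1/3}$ in the case split; coarser choices are cleaner to write but lose an extra constant factor.
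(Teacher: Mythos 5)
Your proof is correct. The paper itself does not prove this lemma---it imports it verbatim from \citet{masoudian2022}---so there is no in-paper argument to compare against, but your derivation stands on its own: the reduction to $\frac{g(p)/p}{\sum_j g(x_{t,j})}\le 2(K-1)^{1/3}$ via the single-support structure of $\hat\ell_s$ and the hypothesis is exactly right; Jensen applied to the $K-1$ unplayed coordinates (using the convexity of $f''(\cdot)^{-1}$, i.e.\ Fact~\ref{fact:derivative2ndinverse}) correctly yields $\sum_j g(x_{t,j})\ge g(p)+(K-1)g(\bar p)$; and both monotonicity facts follow immediately from $g(x)=x^{3/2}/(\tfrac12\eta_t^{-1}+\gamma_t^{-1}x^{1/2})$. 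I checked the three cases: the $p\le 1/K$ case gives exactly $1$, and in the middle case the chain $p+\frac{(1-p)^{3/2}}{p^{1/2}(K-1)^{1/2}}\ge\frac{(1/2)^{3/2}}{p^{1/2}(K-1)^{1/2}}\ge\frac{1}{2(K-1)^{1/3}}$ is valid since $p<\tfrac12(K-1)^{-1/3}$ implies $p^{-1/2}>\sqrt{2}\,(K-1)^{1/6}$. Your observation that one must peel off the played arm before applying Jensen is indeed the crux of obtaining the $(K-1)^{1/3}$ rate. The only cosmetic caveat is the mismatch, inherited from the lemma statement itself, between the $\lambda_{s,t}$ in the hypothesis and the $\lambda_{s,s+\hat d_s}$ appearing in the denominator of $\hat\ell_s$ in \eqref{eq:lossestimation}; your argument goes through as long as the hypothesis uses the same exploration term as the estimator, which is how the lemma is invoked in \eqref{eq:driftinduction-assumption}.
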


\begin{lemma}\label{lem:dmax}
    If $t > s$ and $(t - s) \leq \dmax^t$, then
    \[
    \dmax^t \leq \sqrt{2}\dmax^s,
    \]
    which is equivalent to $\Dcal_t \leq 2 \Dcal_{s}$.
\end{lemma}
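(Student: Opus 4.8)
The plan is to reduce everything to a direct estimate of $\Dcal_t-\Dcal_s=\sum_{r=s+1}^{t}\hat\sigma_r$. First I would record the stated equivalence: squaring $\dmax^t=\sqrt{\Dcal_t/(49K^{2/3}\log K)}$ and $\dmax^s=\sqrt{\Dcal_s/(49K^{2/3}\log K)}$ shows that $\dmax^t\le\sqrt{2}\,\dmax^s$ holds iff $\Dcal_t\le 2\Dcal_s$, i.e. iff $\sum_{r=s+1}^{t}\hat\sigma_r\le\Dcal_s$. Two cheap facts will be used throughout: $\Dcal_r$ is nondecreasing in $r$ (each $\hat\sigma_r\ge 0$), hence $\dmax^r$ is nondecreasing in $r$; and $t>s$ forces $t-s\ge 1$, so the hypothesis $t-s\le\dmax^t$ already gives $\dmax^t\ge 1$ (in particular $\Dcal_t>0$, so the statement is not vacuous).

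The heart of the argument is a uniform bound $\hat\sigma_r\le\dmax^t+2$ for every $r\le t$, and this is exactly where the skipping rule enters. Recall $\hat\sigma_r$ counts the rounds $u\in[r-1]\setminus\Scal_{r-1}$ with $u+d_u>r$. Fix such a $u$ with $u\le r-2$. Since $\Scal_{r-2}\subseteq\Scal_{r-1}$, the round $u$ lies in $[r-2]\setminus\Scal_{r-2}$ and was therefore tested for skipping at time $r-1$; as $u\notin\Scal_{r-1}$, that test did not fire, i.e. $\min\{d_u,\,(r-1)-u\}<\dmax^{r-1}$. But $u+d_u>r$ gives $d_u>r-u>(r-1)-u$, so the minimum is attained at $(r-1)-u$, whence $(r-1)-u<\dmax^{r-1}\le\dmax^t$. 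Thus every round counted in $\hat\sigma_r$, except possibly the single freshly played round $u=r-1$ that has not yet been tested, lies in the interval $\bigl((r-1)-\dmax^t,\ r-1\bigr]$, which contains at most $\dmax^t+1$ integers; hence $\hat\sigma_r\le\dmax^t+2\le 3\dmax^t$, using $\dmax^t\ge 1$.

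Summing over $r=s+1,\dots,t$ and invoking $t-s\le\dmax^t$ gives $\Dcal_t-\Dcal_s=\sum_{r=s+1}^{t}\hat\sigma_r\le(t-s)\cdot 3\dmax^t\le 3(\dmax^t)^2=\frac{3\Dcal_t}{49K^{2/3}\log K}$. For $K\ge 2$ the denominator $49K^{2/3}\log K$ exceeds $6$, so $\Dcal_t-\Dcal_s<\frac{1}{2}\Dcal_t$, which rearranges to $\Dcal_t\le 2\Dcal_s$; by the first paragraph this is precisely $\dmax^t\le\sqrt{2}\,\dmax^s$.

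The only delicate point is the localization bound $\hat\sigma_r\le\dmax^t+2$. One must (i) use monotonicity of $\Dcal$ to replace the threshold $\dmax^{r-1}$ actually used at time $r-1$ by the threshold $\dmax^t$ appearing in the statement; (ii) absorb the one freshly played round $u=r-1$, not yet seen by the skip test, into an additive constant; and (iii) notice that an outstanding observation ($u+d_u>r$) automatically has $d_u$ large enough that the minimum in the skip condition is realized by the waiting time $(r-1)-u$ and not by $d_u$ — without this the skip test would carry no information. Everything after that is arithmetic with a generous margin, so the constant $49$ is far larger than needed here. An essentially equivalent variant, which I might present instead, localizes relative to $s$: if $u\le s$ is still active at some time in $(s,t]$, the skip test at time $s$ did not fire for $u$, which (again using $d_u>s-u$) forces $u>s-\dmax^s$, so the pre-$s$ outstanding observations at any such time number at most $\dmax^s+1$, and the same chain of inequalities follows.
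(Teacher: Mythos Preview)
Your proof is correct and follows essentially the same route as the paper: bound $\hat\sigma_r$ by roughly $\dmax^t$ using the skipping rule, sum over $r\in(s,t]$, and use $(t-s)\le\dmax^t$ to get $\Dcal_t-\Dcal_s\le C(\dmax^t)^2=\frac{C}{49K^{2/3}\log K}\,\Dcal_t\le\tfrac12\Dcal_t$. The paper simply asserts $\hat\sigma_r\le\dmax^t$ ``due to skipping'' and uses $C=1$, whereas you carry out the localization argument explicitly (accounting for the not-yet-tested round $r-1$ and the integer count) to get $\hat\sigma_r\le\dmax^t+2\le 3\dmax^t$; the slack in the constant $49$ absorbs the difference.
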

\begin{proof}
It suffices to prove that $\Dcal_t \leq 2 \Dcal_s$, which is equivalent to proving that $(\Dcal_t - \Dcal_s) \leq \frac12\Dcal_t$. We have:
\begin{align*}
   \Dcal_t - \Dcal_s = \sum_{r=s+1}^t \hat\sigma_r \leq (t-s) \dmax^t \leq \lr{\dmax^t}^2 = \frac{\Dcal_t}{49K^{\frac23} \log K} \leq \frac{\Dcal_t}{2},
\end{align*}
where the first inequality holds because due to skipping, for all $r \leq t$ we have $\hat\sigma_r \leq \dmax^t$, and $(t-s)\leq \dmax^t$.
\end{proof}

\subsection{Proof of the Drift Control Lemma}

Now we are ready to provide a proof of Lemma~
\ref{lem:main}. Similar to the analysis of  \citet{masoudian2022}, the proof relies on induction on \emph{valid} pairs $(t,s)$, where a pair $(t,s)$ is considered valid if $s \leq t$ and $(t - s) \leq \dmax^t$. The induction step for pair $(t,s)$ involves proving that $x_{t,i} \leq 4\max(x_{s,i}, \lambda_{s,t})$ for all $i \in [K]$. To establish this, we use the induction assumption for all valid pairs $(t',s')$ such that $s', t' < t$, as well as all valid pairs $(t',s')$, such that $t' = t$ and $s < s' \leq t$. The induction base encompasses all pairs $(t',t')$ for all $t' \in [T]$, where the statement $x_{t',i} \leq 4x_{t',i}$ holds trivially.

To control $\frac{x_{t,i}}{\max(x_{s,i}, \lambda_{s,t})}$ we first introduce an auxiliary variable $\xtil = \bar F_s^*(-\hat L_{t-1}^{obs})$. We then address the problem of drift control by breaking it down into two sub-problems:
\begin{enumerate}
    \item $\frac{x_{t,i}}{\max(\xtil_{i}, \lambda_{s,t})} \leq 2$: the drift due to change of regularizer, 
    \item $\frac{\xtil_{i}}{x_{s,i}} \leq 2$: the drift due to loss shift. 
\end{enumerate}

 \subsubsection*{\textbf{Deviation induced by the change of regularizer}}\label{sec:regularizershift}
 
  The regularizer at round $r$ is defined as 
  \[
  F_r(x) = \sum_{i=1}^K f_r(x_i) = \sum_{i=1}^K \left(-2\eta_r^{-1}\sqrt{x_i}+\gamma_r^{-1}x_i(\log x_i-1)\right).
  \]
   We have $x_t = \nabla\bar F_t^*(-\hat L_{t-1}^{obs})$  and $\xtil = \nabla\bar F_s^*(-\hat L_{t-1}^{obs})$. According to the KKT conditions, there exist Lagrange multipliers $\mu$ and $\mutil$, such that for all $i$:
\begin{align*}
      f_{s}^{'}(\xtil_i) &= -\Lhat_{t-1,i}^{obs} + \mutil,\\
      f_{t}^{'}(x_{t, i}) &= -\Lhat_{t-1,i}^{obs} + \mu.
 \end{align*}
We also know that there exists an index $j$, such that $\xtil_j \geq x_{t,j}$. This leads to the following inequality:
\begin{align*}
    -\Lhat_{t-1,j}^{obs} + \mu = f_{t}^{'}(x_{t,j}) \leq f_{s}^{'}(x_{t,j}) \leq f_{s}^{'}(\xtil_j) = -\Lhat_{t-1,j}^{obs} + \mutil,
\end{align*}
where the first inequality holds because the learning rates are decreasing, and the second inequality is due to the fact that $f_{s}^{'}(x)$ is increasing. This implies that $\mu \leq \mutil$, which gives us the following inequality for all $i$:
\begin{align*}
    f_{t}^{'}(x_{t, i}) = -\frac{1}{\eta_{t}\sqrt{x_{t, i}}}+\frac{\log(x_{t, i})}{\gamma_{t}} \leq -\frac{1}{\eta_{s}\sqrt{\xtil_i}}+\frac{\log(\xtil_i)}{\gamma_{s}} = f_{s}^{'}(\xtil_i).
\end{align*}
Thus, we have two cases, either $-\frac{1}{\eta_{t}\sqrt{x_{t, i}}} \leq -\frac{1}{\eta_{s}\sqrt{\xtil_i}}$ or  $\frac{\log(x_{t, i})}{\gamma_{t}} \leq \frac{\log(\xtil_i)}{\gamma_{s}}$.

\textbf{Case i:} If $-\frac{1}{\eta_{t}\sqrt{x_{t, i}}} \leq -\frac{1}{\eta_{s}\sqrt{\xtil_i}}$ holds, then we have $\frac{x_{t,i}}{\xtil_i} \leq \frac{\eta_s^2}{\eta_t^2} = \frac{t}{s}$. On the other hand, we have
\[ t-s \leq \dmax^t = \sqrt{\frac{\sum_{r=1}^t \hat\sigma_r}{K^{3/2}\log K}} \leq  \sqrt{\frac{t^2/2}{K^{3/2}\log K}} \leq \frac{t}{2},\]
where the second inequality holds because trivially $\hat \sigma_r \leq r$. This implies that $\frac{x_{t,i}}{\xtil_i} \leq 2$.

\textbf{Case ii:} If $\frac{\log(x_{t, i})}{\gamma_{t}} \leq \frac{\log(\xtil_i)}{\gamma_{s}}$, it implies that $x_{t,i} \leq \xtil_i^{\frac{\gamma_t}{\gamma_s}}$. Using $\xtil_i \leq \max(\xtil_i, \lambda_{s,t})$, we get
\begin{align*}
x_{t,i} &\leq \max(\xtil_i, \lambda_{s,t})^{\frac{\gamma_t}{\gamma_s}}\\
&= \max(\xtil_i, \lambda_{s,t}) \times \max(\xtil_i, \lambda_{s,t})^{\frac{\gamma_t}{\gamma_s} -1}\\
&\leq \max(\xtil_i, \lambda_{s,t}) \times   \lambda_{s,t}^{\frac{\gamma_t}{\gamma_s} - 1}\\
&= \max(\xtil_i, \lambda_{s,t}) \times  \lambda_{s,t}^{-\frac{\sqrt{\Dcal_t} - \sqrt{\Dcal_s}}{\sqrt{\Dcal_t}}}\\
&= \max(\xtil_i, \lambda_{s,t}) \times  e^{ \frac{\Dcal_t}{\Dcal_t - \Dcal_s} \times \frac{\sqrt{\Dcal_t} - \sqrt{\Dcal_s}}{\sqrt{\Dcal_t}}}\\
&= \max(\xtil_i, \lambda_{s,t}) \times  e^{\frac{\sqrt{\Dcal_t}}{(\sqrt{\Dcal_t} + \sqrt{\Dcal_s})}} \leq \max(\xtil_i, \lambda_{s,t}) \times e^{\frac{1}{1+\sqrt{\frac{1}{2}}}}\leq  \max(\xtil_i, \lambda_{s,t}) \times  2. 
\end{align*}
Therefore, in both cases we obtain 
\begin{equation}\label{eq:deviationregularizerresult}
    x_{t,i} \leq 2\max(\xtil_i, \lambda_{s,t}).
\end{equation}

\subsubsection*{\textbf{Deviation Induced by the Loss Shift}}\label{sec:lossshift}
The initial steps of the proof of this part are the same as in \citet{masoudian2022}. However, for the sake of completeness, we restate them here. 

Since we have $x_s = \nabla\bar F_{s}^*(-\hat L_{s-1}^{obs})$ and $\xtil = \nabla\bar F_{s}^*(-\hat L_{t-1}^{obs})$, they both share the same regularizer $F_s(x) = \sum_{i=1}^K f_s(x_i)$. For brevity, we drop $s$ from $f_s(x)$. By the KKT conditions $\exists \mu, \mutil$ s.t. $\forall i$:
 \begin{align*}
      f^{'}(x_{s,i}) &= -\hat L_{s-1,i}^{obs} + \mu,\\
      f^{'}(\xtil_{i}) &= -\hat L_{t-1,i}^{obs} + \mutil.
 \end{align*}
Let $\elltil = \hat L_{t-1}^{obs} - \hat L_{s-1}^{obs}$, then by the concavity of $f^{'}(x)$ from Fact \ref{fact:0}, we have 
\begin{align}
    (x_{s,i} - \xtil_{i})f^{''}(x_{s,i}) \leq \underbrace{f^{'}(x_{s,i}) - f^{'}(\xtil_{i})}_{\mu - \mutil + \elltil_i} \leq (x_{s,i} - \xtil_{i})f^{''}(\xtil_{i}).\label{eq:concave}
\end{align}
 Since $f^{''}(x_{s,i}) \geq 0$, from the left side of \eqref{eq:concave} we get $x_{s,i} - \xtil_{i} \leq f^{''}(x_{s,i})^{-1}\lr{\mu - \mutil + \elltil_i}$. Taking summation over all $i$ and using the fact that both vectors $x_s$ and $\xtil$ are probability vectors, we have
 \begin{align}
     0 = \sum_{i=1}^K \lr{x_{s,i} - \xtil_{i}}&\leq \sum_{i=1}^K  f^{''}(x_{s,i})^{-1}\lr{\mu - \mutil + \elltil_i}, \notag\\
     \Rightarrow \mutil - \mu &\leq \frac{\sum_{i=1}^K  f^{''}(x_{s,i})^{-1} \elltil_i}{\sum_{i=1}^K  f^{''}(x_{s,i})^{-1}}\label{eq:mubound}.
 \end{align}
Combining the right hand sides of \eqref{eq:concave} and \eqref{eq:mubound} gives
\[
(\xtil_{i} - x_{s,i}) f^{''}(\xtil_{i}) \leq \mutil - \mu - \elltil_i \leq \frac{\sum_{j=1}^K  f^{''}(x_{s,j})^{-1} \elltil_j}{\sum_{j=1}^K,  f^{''}(x_{s,j})^{-1}},
\]
and by rearrangement we get
\begin{align}
    \xtil_{i} &\leq x_{s,i} + f^{''}(\xtil_{i})^{-1} \times \frac{\sum_{j=1}^K  f^{''}(x_{s,j})^{-1} \elltil_j}{\sum_{j=1}^K  f^{''}(x_{s,j})^{-1}}\notag\\
    &\leq x_{s,i} +  \gamma_s \xtil_{i} \times \frac{\sum_{j=1}^K  f^{''}(x_{s,j})^{-1} \elltil_j}{\sum_{j=1}^K  f^{''}(x_{s,j})^{-1}}\label{eq:mainshift},
\end{align}
 where the last inequality holds because $f^{''}(\xtil_{i})^{-1} = \lr{\eta_s^{-1} \frac{1}{2}\xtil_{i}^{-3/2} + \gamma_s^{-1} \xtil_{i}^{-1}}^{-1}$. The next step for bounding  $\xtil_i$ is to bound $\frac{\sum_{j=1}^K  f^{''}(x_{s,j})^{-1} \elltil_j}{\sum_{j=1}^K  f^{''}(x_{s,j})^{-1}}$ in \eqref{eq:mainshift}, where $\elltil_j = \sum_{r \in A} \ellhat_{r,j}$ and $A = \lrc{r : s \leq r + \hat{d}_r < t}$.\\
 
If there exists $r \in A$, such that $r > s$ and $4\max(x_{r,i}, \lambda_{r,r+\hat{d}_r}) \leq x_{s,i}$, then combining it with the induction assumption for $(r+\hat{d}_r,r)$, where we have $x_{r+\hat{d}_r,i} \leq 4\max(x_{r,i}, \lambda_{r,r+\hat{d}_r})$, leads to $x_{r+\hat{d}_r,i} \leq x_{s,i}$. On the other hand, by the induction assumption for pair $(r+\hat{d}_r, t)$, we have
\[
x_{t,i} \leq 4\max(x_{r+\hat{d}_r,i}, \lambda_{r+\hat{d}_r,t}).
\] 
So using $x_{r+\hat{d}_r,i} \leq x_{s,i}$ and $\lambda_{r+\hat{d}_r,t} \leq \lambda_{s,t}$ we can derive $x_{t,i} \leq 4\max(x_{s,i}, \lambda_{s,t})$. This inequality satisfies the condition we wanted to prove in the drift lemma. Therefore, we assume that for all $r \in A$ we have either $r \leq s$ or $x_{s,i} \leq 4\max(x_{r,i}, \lambda_{r,r+\hat{d}_r})$.  If $r \leq s$, using the the induction assumption for $(s,r)$ together with the fact that $\lambda_{r,s} \leq \lambda_{r, r+\hat{d}_r}$, results in $x_{s,i} \leq 4\max(x_{r,i}, \lambda_{r,s})$. Consequently, in either case, the following inequality holds for all $r \in A$
\begin{equation}\label{eq:driftinduction-assumption}
x_{s,i} \leq 4\max(x_{r,i}, \lambda_{r,r+\hat{d}_r}).
\end{equation}

Thus, inequality in \eqref{eq:driftinduction-assumption} satisfies the condition of Lemma \ref{lemma:aux1}, and for all $r \in A$ we get:
 \begin{equation}\label{eq:inductionshift}
      \frac{\sum_{j=1}^K  f^{''}(x_{s,j})^{-1} \ellhat_{r,j}}{\sum_{j=1}^K  f^{''}(x_{s,j})^{-1}} \leq 8(K-1)^\frac{1}{3}.
 \end{equation}
 We proceed by summing both sides of the inequality \eqref{eq:inductionshift} over all $r \in A$ and obtain $\frac{\sum_{j=1}^K  f^{''}(x_{s,j})^{-1} \elltil_j}{\sum_{j=1}^K  f^{''}(x_{s,j})^{-1}} \leq 4|A|(K-1)^\frac{1}{3}$. Now it suffices to plug this result into  \eqref{eq:mainshift}:
 \begin{align}
     \xtil_{i} &\leq x_{s,i} + 8 |A| \gamma_s \xtil_{i} (K-1)^\frac{1}{3} \Rightarrow\notag\\
      \xtil_{i} &\leq x_{s,i} \times \lr{\frac{1}{1 - 8 |A| \gamma_s (K-1)^{1/3}}}\label{eq:usefulldeviation}\\
     &\leq x_{s,i} \times \lr{\frac{1}{1 - 24\gamma_s \dmax^s (K-1)^{1/3}}}\notag\\
     &\leq x_{s,i} \times \lr{\frac{1}{1 - 1/2}}= 2 x_{s,i}\label{eq:lossdeviationresult},
 \end{align}
 where the third inequality uses $|A| \leq \dmax^{s} + t -  s \leq \dmax^t + \dmax^s$, and that $\dmax^t \leq 2\dmax^s$ by Lemma \ref{lem:dmax}, and for the last inequality we use the definitions of $\gamma_s$ and $\dmax^s$.
 
 Combining \eqref{eq:lossdeviationresult} and \eqref{eq:deviationregularizerresult} completes the induction step.

\section{Self-Bounding Analysis}\label{sec:self-bounding}

In this section we show the details of how to apply self-bounding analysis to bound the right hand side of \eqref{eq:combinationarbmain}.

We start from \eqref{eq:combinationarbmain} and decompose it as follows
\begin{align}
 \Reg &\leq \EE\lrs{a\underbrace{\sum_{t=1}^T \sum_{i \neq i^*}  \eta_t x_{t,i}^{1/2}}_{A} + b\underbrace{\sum_{t=1}^T \sum_{i \neq i^*} \gamma_{t+d_t} (\upsilon_{t+d_t}-1) x_{t,i}\Delta_i}_B + c\underbrace{\sum_{t=2}^{T}\sum_{i=1}^K  \frac{\hat\sigma_t \gamma_t x_{t,i}\log(1/x_{t,i})}{\log K}}_C}\notag\\
    &+  \underbrace{\Ocal\lr{K \sum_{t=1}^T \lr{\lambda_{t,t+\hat{d}_t} + \lambda_{t,t+\hat{d}_t + \sigmamax^t}} + \sigmamax + S^*}}_{D}\notag.
\end{align}
We rewrite the pseudo-regret as $\Reg = 4\Reg - 3 \Reg$, and then based on the decomposition above we have
\begin{equation}\label{eq:mainbound}
    \Reg \leq \EE\lrs{4aA - \Reg} + \EE\lrs{4bB - \Reg} + \EE\lrs{4cC - \Reg} + 4 D.
\end{equation}
\citet{masoudian2022} provide the following three lemmas that give the bounds for the first three terms in \eqref{eq:mainbound}. 
\begin{lemma}\citep[Lemma~6]{masoudian2022}\label{lemma:Abound}
For any $a \geq 0$, we have:
\begin{equation}\label{eq:Abound}
        4aA - \Reg \leq \sum_{i \neq i^*} \frac{4a^2}{\Delta_i}\log (T + 1) +1.
\end{equation}
\end{lemma}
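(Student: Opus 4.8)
The plan is to establish the inequality pointwise, i.e.\ for an arbitrary realization of the play distributions $x_1,\dots,x_T$; taking expectations afterwards yields exactly the bound on $\EE[4aA-\Reg]$ used in \eqref{eq:mainbound}. The starting point is that $A=\sum_{t=1}^T\sum_{i\neq i^*}\eta_t x_{t,i}^{1/2}$ already omits $i^*$, and that $\Delta_{i^*}=0$, so in this self-bounding context $\Reg=\sum_{t=1}^T\sum_{i=1}^K\Delta_i x_{t,i}=\sum_{t=1}^T\sum_{i\neq i^*}\Delta_i x_{t,i}$. Hence
\[
4aA-\Reg=\sum_{t=1}^T\sum_{i\neq i^*}\lr{4a\,\eta_t\,x_{t,i}^{1/2}-\Delta_i x_{t,i}},
\]
and it suffices to bound each summand separately.

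For a single summand I would invoke the elementary inequality $\kappa u-\Delta u^2\le \kappa^2/(4\Delta)$, valid for all $u\ge 0$, $\kappa\ge 0$, $\Delta>0$ (complete the square in $u$). Taking $u=x_{t,i}^{1/2}$, $\Delta=\Delta_i$ (positive for $i\neq i^*$ by uniqueness of $i_T^*$), and $\kappa=4a\eta_t$, and recalling $\eta_t=t^{-1/2}$, this gives
\[
4a\,\eta_t\,x_{t,i}^{1/2}-\Delta_i x_{t,i}\le \frac{(4a\eta_t)^2}{4\Delta_i}=\frac{4a^2}{\Delta_i\,t}.
\]
Summing over $i\neq i^*$ and $t=1,\dots,T$ yields $4aA-\Reg\le \sum_{i\neq i^*}\frac{4a^2}{\Delta_i}\sum_{t=1}^T\frac1t$, and the proof concludes with the harmonic-sum estimate $\sum_{t=1}^T\frac1t\le 1+\log T\le 1+\log(T+1)$. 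If one wants the additive $1$ to sit outside the $i$-sum exactly as in \eqref{eq:Abound}, one peels off the $t=1$ term (bounded by $4a^2/\Delta_i$ via the same inequality) and uses $\sum_{t=2}^T\frac1t\le\int_1^T\frac{dx}{x}=\log T\le\log(T+1)$; this is merely a cosmetic regrouping of the $O(1)$ constant.

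I do not anticipate a genuine obstacle: the lemma is a one-line Young-inequality argument applied term by term. The only things to be careful about are (i) keeping the $i^*$ terms excluded consistently so that the subtracted $\Reg$ matches the summation range of $A$, and (ii) tracking the harmonic-series constant honestly — this matters only because the three lemmas of this shape (for $A$, $B$, $C$) are added together with the residual term $D$ in \eqref{eq:mainbound}, so the $O(1)$ bookkeeping ultimately feeds into the additive constant of \cref{theorem:main}, but each individual step, including this one, is routine.
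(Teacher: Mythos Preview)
Your argument is correct and is exactly the standard self-bounding step: complete the square in $u=x_{t,i}^{1/2}$ to get $4a\eta_t x_{t,i}^{1/2}-\Delta_i x_{t,i}\le 4a^2/(\Delta_i t)$, then sum the harmonic series. Note that the present paper does not actually prove this lemma; it simply cites it from \citet{masoudian2022}, so there is no in-paper proof to compare against, but your derivation is the expected one. The only quibble is that your ``peel off $t=1$'' remark does not literally recover the stated form with a single additive $+1$ outside the $i$-sum (peeling still leaves $\sum_{i\neq i^*}4a^2/\Delta_i$), so the exact constant in \eqref{eq:Abound} is not reproduced; as you say, this is cosmetic and irrelevant for the $\Ocal(\cdot)$ conclusion of \cref{theorem:main}.
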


\begin{lemma}\citep[Lemma~7]{masoudian2022}\label{lemma:Bbound}
Let $\upsilon_{max} = \max_{t \in [T]} \upsilon_t$, then for any $b \geq 0$:
   \begin{equation}\label{eq:Bbound}
       4bB - \Reg \leq 64b^2\upsilon_{max}\log K.
   \end{equation} 
\end{lemma}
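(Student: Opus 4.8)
The plan is to treat this as a self-bounding (``absorbing'') inequality: play $4bB$ against a copy of $\Reg$ and show that the surplus is a delay-free, gap-free quantity. Since $B$ and $\Reg$ are both of the form $\sum_{t=1}^T\sum_{i\ne i^*}\Delta_i x_{t,i}\cdot(\text{coefficient})$ (reading $\Reg$ pathwise, as in the preceding self-bounding lemmas), I would first write $4bB-\Reg=\sum_{t=1}^T\sum_{i\ne i^*}(4b\gamma_{t+\hat{d}_t}(\upsilon_{t+\hat{d}_t}-1)-1)\Delta_i x_{t,i}$ and discard every summand whose coefficient is nonpositive. The skipping rule controls these coefficients: $\upsilon_{t+\hat{d}_t}-1$ is at most the number of still-outstanding observations, which the threshold forces to be at most $\dmax^{t+\hat{d}_t}=\sqrt{\Dcal_{t+\hat{d}_t}/(49K^{2/3}\log K)}$, so together with $\gamma_{t+\hat{d}_t}=\sqrt{\log K/(49\Dcal_{t+\hat{d}_t})}$ we get $\gamma_{t+\hat{d}_t}(\upsilon_{t+\hat{d}_t}-1)\le \tfrac{1}{49K^{1/3}}$. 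Hence whenever $4b\le 49K^{1/3}$ every coefficient is $\le 0$, so $4bB-\Reg\le 0\le 64b^2\upsilon_{max}\log K$; this already covers the regime in which the lemma is invoked later, where $b$ is an absolute constant.

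For the complementary regime I would group the surviving summands by arrival round $u=t+\hat{d}_t$, using that exactly $\upsilon_u$ rounds $t$ map to each $u$ and that $\sum_{i\ne i^*}\Delta_i x_{t,i}\le 1$; this gives $4bB-\Reg\le 4b\,\upsilon_{max}\sum_{u\in\mathcal U}\gamma_u(\upsilon_u-1)$ with $\mathcal U=\{u:4b\gamma_u(\upsilon_u-1)>1\}$ the set of ``active'' arrival rounds, and it then suffices to show $\sum_{u\in\mathcal U}\gamma_u(\upsilon_u-1)=\Ocal(b\log K)$. The observation driving this: on an active round $\upsilon_u-1>\tfrac{1}{4b\gamma_u}=\tfrac{7}{4b}\sqrt{\Dcal_u/\log K}$, while $\upsilon_u-1$ is also the size of a recent jump of the running count $\hat\sigma$ (the ``extra'' arrivals were outstanding at round $u-1$), and combining this with $\hat\sigma_{u-1}^2\le 2\Dcal_{u-1}$ — valid because the outstanding observations have distinct strictly-increasing ages, so $\Dcal_{u-1}=\sum_{r\le u-1}\hat\sigma_r\ge\binom{\hat\sigma_{u-1}+1}{2}$ — shows that $\sqrt{\Dcal_u}$ increases by at least a fixed amount of order $1/(b\sqrt{\log K})$ at every $u\in\mathcal U$. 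A telescoping/potential argument on $\sqrt{\Dcal_u}$, together with the per-round bound $\gamma_u(\upsilon_u-1)\le\tfrac{1}{49K^{1/3}}$, then caps $\sum_{u\in\mathcal U}\gamma_u(\upsilon_u-1)$, and multiplying by $4b\,\upsilon_{max}$ yields the claimed $64b^2\upsilon_{max}\log K$.

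The main obstacle is this last step — the potential argument and its constants. It is essential to restrict to the active rounds $\mathcal U$ before estimating: an unrestricted Young's inequality produces $4b^2\sum_u\gamma_u^2(\upsilon_u-1)^2\asymp b^2\log K\sum_u\tfrac{(\upsilon_u-1)^2}{\Dcal_u}$, which can be as large as $\Theta(b^2\log K\log T)$ and hence misses the bound. The point of the $\Dcal$-adaptive skipping threshold is precisely to force the active rounds to have geometrically spread-out $\Dcal$-values and therefore bounded total weight, eliminating the $T$-dependence; the rest is a routine rearrangement of the self-bounding decomposition.
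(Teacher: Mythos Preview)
The paper does not prove this lemma itself; it simply cites \citet[Lemma~7]{masoudian2022} as a black box, so there is no ``paper's own proof'' to compare against beyond whatever argument lives in that reference.

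Your Part~1 is correct and is, in fact, a nicer route than invoking the cited lemma: the skipping threshold forces $\upsilon_{t+\hat d_t}-1\le \hat\sigma_{(t+\hat d_t)-1}\lesssim d_{\max}^{t+\hat d_t}$, and multiplying by $\gamma_{t+\hat d_t}$ gives $\gamma_{t+\hat d_t}(\upsilon_{t+\hat d_t}-1)\le \tfrac{1}{49K^{1/3}}$, so for any absolute constant $b$ the coefficient $4b\gamma_{t+\hat d_t}(\upsilon_{t+\hat d_t}-1)-1$ is nonpositive and $4bB-\Reg\le 0$. This is exactly the regime in which the lemma is used, and it is a genuinely different (and shorter) argument than the cited self-bounding lemma, which does not rely on the $K^{1/3}$ headroom created by this paper's skipping rule.

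Your Part~2, however, has a real gap. From the active-round condition $\upsilon_u-1>\tfrac{1}{4b\gamma_u}$ together with $\upsilon_u-1\le\hat\sigma_{u-1}=\Dcal_{u-1}-\Dcal_{u-2}$ you do get $\sqrt{\Dcal_{u-1}}-\sqrt{\Dcal_{u-2}}\gtrsim \tfrac{1}{b\sqrt{\log K}}$ at each $u\in\mathcal U$, but telescoping this only yields $|\mathcal U|\lesssim b\sqrt{\log K}\,\sqrt{\Dcal_T}$. Combined with your per-round cap $\gamma_u(\upsilon_u-1)\le \tfrac{1}{49K^{1/3}}$ this gives $\sum_{u\in\mathcal U}\gamma_u(\upsilon_u-1)\lesssim b\sqrt{\log K}\,\sqrt{\Dcal_T}/K^{1/3}$, which is $\Dcal_T$-dependent, not $O(b\log K)$. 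The inequality $\hat\sigma_{u-1}^2\le 2\Dcal_{u-1}$ you quote, when combined with the active-round lower bound, collapses to a mere constraint on $b$ (namely $b\gtrsim 1/\sqrt{\log K}$) rather than a second usable growth condition on $\Dcal$. So the ``geometrically spread-out $\Dcal$-values'' picture does not materialize, and the large-$b$ case of the lemma as stated is not established by your sketch.
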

It is evident that $\upsilon_{max} \leq \sigmamax$, so the bound in Lemma~\ref{lemma:Bbound} is dominated by  $\Ocal(K \sigmamax)$ term in the regret bound.
\begin{lemma}\citep[Lemma~8]{masoudian2022}\label{lemma:Cbound}
For any $c \geq 0$:
   \begin{equation}\label{eq:Cbound}
       4cC - \Reg \leq \sum_{i\neq i^*} \frac{128c^2\sigmamax}{\Delta_i \log K}.
   \end{equation} 
\end{lemma}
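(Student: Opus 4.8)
This is \citet[Lemma~8]{masoudian2022}; I sketch the self-bounding argument behind it. The plan is to bound $4cC-\Reg$ by pairing $C$ against the pseudo-regret round by round, rather than bounding the two separately. Recall $\Reg=\EE\lrs{\sum_{t=1}^T\sum_{i\neq i^*}\Delta_i x_{t,i}}$ by uniqueness of $i^*$. Since $\sum_{i\neq i^*}\Delta_i x_{1,i}\geq 0$, dropping the $t=1$ contribution to $\Reg$ only weakens the inequality, so it suffices to upper bound, for each $t\geq 2$,
\[
\frac{4c\,\hat\sigma_t\gamma_t}{\log K}\sum_{i=1}^K x_{t,i}\log(1/x_{t,i})\;-\;\sum_{i\neq i^*}\Delta_i x_{t,i}.
\]
To remove the $i=i^*$ entropy summand I would use $-x\log x\leq 1-x$, i.e. $x_{t,i^*}\log(1/x_{t,i^*})\leq 1-x_{t,i^*}=\sum_{i\neq i^*}x_{t,i}$; folding this in, every surviving summand is indexed by $i\neq i^*$ and equals $\beta_t\,x_{t,i}\log(e/x_{t,i})-\Delta_i x_{t,i}$ with $\beta_t:=\frac{4c\,\hat\sigma_t\gamma_t}{\log K}$, using $x+x\log(1/x)=x\log(e/x)$.

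Second, I would maximize each summand pointwise over $x\in(0,1]$. Since $\frac{d}{dx}\lr{x\log(e/x)}=\log(1/x)$, the map $x\mapsto \beta_t x\log(e/x)-\Delta_i x$ is concave and is maximized at $x=e^{-\Delta_i/\beta_t}\leq1$, with maximal value $\beta_t e^{-\Delta_i/\beta_t}$; then $e^{-y}\leq 1/y$ for $y>0$ gives the per-round, per-arm bound $\beta_t^2/\Delta_i$. Summing over $t=2,\dots,T$ and $i\neq i^*$ and taking expectations,
\[
4cC-\Reg\;\leq\;\sum_{i\neq i^*}\frac{1}{\Delta_i}\sum_{t=2}^T\beta_t^2 .
\]

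The main remaining task is to bound $\sum_{t=2}^T\beta_t^2$. Substituting $\gamma_t^{-1}=\sqrt{49\Dcal_t/\log K}$ yields $\beta_t^2=\frac{16c^2}{49\log K}\cdot\frac{\hat\sigma_t^2}{\Dcal_t}$, so the problem reduces to showing $\sum_t \hat\sigma_t^2/\Dcal_t=\Ocal(\sigmamax)$. Here one exploits $\hat\sigma_t\leq\sigmamax$, the telescoping structure of $\Dcal_t=\sum_{r\leq t}\hat\sigma_r$, and—in the skipping-based instantiation—the bound $\hat\sigma_t\leq\dmax^t$ forced by the skipping rule (the same fact used in the proof of \cref{lem:dmax}). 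Carrying through the constants accumulated in the two preceding steps then yields the stated factor $128$; this is precisely \citet[Lemma~8]{masoudian2022}, which we invoke.

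I expect this last summation to be the crux. A crude telescoping of $\sum_t\hat\sigma_t^2/\Dcal_t$ only gives an extra logarithmic-in-$T$ (or $\dmax$-dependent) factor, so squeezing out the clean $\Ocal(\sigmamax)$ dependence requires carefully controlling how fast the running count $\hat\sigma_t$ can grow relative to its running sum $\Dcal_t$—which is exactly where the skipping rule and the specific choice of learning rate $\gamma_t$ do their work.
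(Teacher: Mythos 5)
Your overall strategy --- dropping the $i^*$ entropy term via $x\log(1/x)\le 1-x$, rewriting each surviving summand as $\beta_t x\log(e/x)-\Delta_i x$ with $\beta_t=\frac{4c\hat\sigma_t\gamma_t}{\log K}$, and maximizing pointwise to get $\beta_t e^{-\Delta_i/\beta_t}$ --- is the right self-bounding skeleton (note the paper does not prove this lemma at all; it imports it from \citet{masoudian2022}, whose argument follows this skeleton). The gap is in your final step. Applying $e^{-y}\le 1/y$ round by round reduces the lemma to showing $\sum_t\beta_t^2=\Ocal(\sigmamax/\log K)$, i.e.\ $\sum_t\hat\sigma_t^2/\Dcal_t=\Ocal(\sigmamax)$, and this intermediate claim is \emph{false}: take every delay equal to $1$, so $\hat\sigma_t\equiv 1$, $\Dcal_t=t$ and $\sigmamax=1$, while $\sum_t\hat\sigma_t^2/\Dcal_t=\sum_t 1/t=\Theta(\log T)$. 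The skipping rule cannot rescue this --- it only guarantees $\hat\sigma_t\lesssim\sqrt{\Dcal_t}$, which caps each summand by a constant but still leaves the sum at $\Theta(\sigmamax\log T)$ in general, and in the unit-delay example no round is ever skipped. So your route can deliver at best $\sum_{i\ne i^*}\frac{c^2\sigmamax\log T}{\Delta_i\log K}$, strictly weaker than the statement.

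The fix is to not discard the exponential. Write $\beta_t=\frac{b\hat\sigma_t}{\sqrt{\Dcal_t}}$ with $b=\frac{4c}{7\sqrt{\log K}}$. Since $\beta\mapsto e^{-\Delta_i/\beta}$ is increasing, $\hat\sigma_t\le\sigmamax$ gives $\beta_t e^{-\Delta_i/\beta_t}\le\hat\sigma_t\cdot\frac{b}{\sqrt{\Dcal_t}}\,e^{-\Delta_i\sqrt{\Dcal_t}/(b\sigmamax)}=\hat\sigma_t f(\Dcal_t)$ for the nonincreasing function $f(x)=\frac{b}{\sqrt{x}}e^{-\Delta_i\sqrt{x}/(b\sigmamax)}$. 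Lemma~\ref{lem:orabona} with $a_t=\hat\sigma_t$ then yields $\sum_t\hat\sigma_t f(\Dcal_t)\le\int_0^\infty f(x)\,dx=\frac{2b^2\sigmamax}{\Delta_i}=\frac{32c^2\sigmamax}{49\Delta_i\log K}$. In other words, it is the exponential tail in $\sqrt{\Dcal_t}$ --- not the skipping rule or the learning-rate schedule per se --- that removes the logarithm; summing over $i\ne i^*$ gives the lemma with room to spare in the constant $128$.
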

By plugging \eqref{eq:Abound},\eqref{eq:Bbound},\eqref{eq:Cbound} into \eqref{eq:mainbound} we get the desired bound.

\section{A Proof of Lemma \ref{lem:summation}}\label{sec:summation-proof}

First we provide two facts and two auxiliary lemmas.

\begin{lemma}\label{lem:aux-summation}
For any $t$ we have
\[
2\Dcal_{t} \geq \sum_{s = 1}^t \hat{d}_s.
\]
\end{lemma}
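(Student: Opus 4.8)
The plan is to relate each waiting time $\hat d_s$ to the increments $\hat\sigma_r$ of $\Dcal_t=\sum_{r=1}^t\hat\sigma_r$, using the key observation that the waiting time $\hat d_s$ is exactly the number of rounds during which round $s$ is counted as an ``active'' outstanding observation. Concretely, recall that $\hat\sigma_r=\sum_{p\in[r-1]\setminus\Scal_{r-1}}\1(p+d_p>r)$, and that $\hat d_s=\min(d_s,\min\{(t-s):t-s\ge\dmax^t\})$ is the time the algorithm waits before round $s$ either produces an observation or is skipped. So for every $s$ and every $r$ with $s<r\le s+\hat d_s$, round $s$ contributes $1$ to $\hat\sigma_r$. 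Summing this contribution over $r$ shows that $\hat d_s\le \sum_{r=s+1}^{s+\hat d_s}\1(s\text{ active at }r)$, i.e. round $s$ alone accounts for $\hat d_s$ units spread across the counts $\hat\sigma_{s+1},\dots,\hat\sigma_{s+\hat d_s}$.

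The main step is then a double-counting / Fubini argument. I would write
\[
\sum_{s=1}^{t}\hat d_s \;=\; \sum_{s=1}^{t}\sum_{r=s+1}^{s+\hat d_s} 1 \;\le\; \sum_{r=1}^{\infty}\#\{s<r : r\le s+\hat d_s\},
\]
and observe that $\#\{s<r: r\le s+\hat d_s\}$ is at most the number of outstanding active observations counted at round $r$, which is $\hat\sigma_r$ (possibly $\hat\sigma_r$ evaluated at $r$ or $r$ capped appropriately). The only subtlety is the range of $r$: arrivals originating at rounds $s\le t$ may land at rounds $r>t$, so the outer sum over $r$ runs up to roughly $t+\dmax^t$ rather than $t$. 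To handle this I would use Lemma \ref{lem:dmax} (or the definition of the skipping threshold $\dmax^t=\sqrt{\Dcal_t/(49K^{2/3}\log K)}$), which controls how much $\Dcal$ can grow past time $t$: since each $\hat\sigma_r\le\dmax^r$ for $r$ in this tail and the tail has length at most $\dmax^t$, the extra contribution $\sum_{r>t}\hat\sigma_r$ restricted to these arrivals is at most $(\dmax^t)^2=\Dcal_t/(49K^{2/3}\log K)$, which is negligible compared to $\Dcal_t$. Combining, $\sum_{s=1}^t\hat d_s\le \Dcal_t + (\text{small tail term}) \le 2\Dcal_t$.

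The step I expect to be the main obstacle is getting the bookkeeping exactly right at the boundary: precisely which rounds $s$ are ``active'' at round $r$ in the sense counted by $\hat\sigma_r$, versus which have already been skipped (skipped rounds drop out of $\hat\sigma$), and making sure that the waiting time $\hat d_s$ — which is itself defined via the skipping threshold — is consistent with the indicator $\1(s+d_s>r)$ that appears in the definition of $\hat\sigma_r$ for $r\le t$. A skipped round $s$ stops contributing to $\hat\sigma$ the moment $t-s$ reaches $\dmax^t$, but by definition $\hat d_s$ is capped at exactly that point, so the contribution count $\hat d_s$ still matches the number of rounds $s$ was active; I would make this matching precise and then the double-counting goes through. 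The factor $2$ (rather than $1$) in the statement is exactly the slack absorbing the post-$t$ tail of arrivals, so I would not try to optimize it.
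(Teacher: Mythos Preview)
Your proposal is correct and follows essentially the same route as the paper. Both arguments hinge on the double-counting identity that each round $s$ contributes (up to boundary effects) $\hat d_s$ units to $\Dcal_t=\sum_{r\le t}\hat\sigma_r$, so that $\sum_{s\le t}\hat d_s-\Dcal_t$ reduces to a tail term coming from rounds $s$ whose effective arrival $s+\hat d_s$ lands after $t$; both then bound that tail by $O((\dmax^t)^2)=\Dcal_t/(49K^{2/3}\log K)\le \Dcal_t$, using that skipping keeps the number of such $s$ and each $\hat d_s$ at most of order $\dmax^t$. Your write-up is somewhat more explicit about the Fubini step and the boundary bookkeeping than the paper's terse version, but there is no substantive difference in the argument.
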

\begin{proof}
We show that for any $t \in [T]$ we have  $\sum_{s = 1}^{t} \hat{d}_s - \Dcal_{t} \leq \Dcal_t$:
\begin{align*}
        \sum_{s = 1}^{t} \hat{d}_s - \Dcal_{t} &= \sum_{(s\leq t) \wedge (s+\hat{d}_s > t)} (\hat{d}_s - \hat\sigma_s)\\
    &\leq  \sum_{(s\leq t) \wedge (s+\hat{d}_s > t)} \hat{d}_s \\
    &\leq \lr{\dmax^t}^2 = \frac{\Dcal_t}{49K^{\frac23}\log K}\leq \Dcal_t,
\end{align*}
where the second inequality holds because $\hat{d}_s \leq \dmax^t$, and the total number of steps that satisfy $(s\leq t) \wedge (s+\hat{d}_s > t)$ is less than the skipping threshold at time $t$, which is again $\dmax^t$. Rearranging the inequality completes the proof.
\end{proof}

\begin{lemma}[{\citep[Lemma 4.13]{orabona2022}}]
\label{lem:orabona}
Let $a_0 \geq 0$ and $f:[0;+\infty) \to [0;+\infty)$ be a nonincreasing function. Then
\[
\sum_{t=1}^T a_t f\lr{a_0+\sum_{i=1}^t a_i} \leq \int_{a_0}^{\sum_{t=0}^T a_t} f(x)dx.
\]
\end{lemma}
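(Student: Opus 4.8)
The plan is to realize the sum on the left as a lower (right-endpoint) Riemann sum for $f$ and then invoke the telescoping of adjacent integrals. Introduce the partial sums $S_t = a_0 + \sum_{i=1}^t a_i$ for $t = 0,1,\dots,T$, so that $S_0 = a_0$, $S_T = \sum_{t=0}^T a_t$, and $S_t - S_{t-1} = a_t \geq 0$. The intervals $[S_{t-1},S_t]$ for $t=1,\dots,T$ tile $[a_0,\sum_{t=0}^T a_t]$, so it is enough to prove the per-term estimate $a_t\, f(S_t) \leq \int_{S_{t-1}}^{S_t} f(x)\,dx$ and sum it.

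First I would record well-definedness: since $f$ is real-valued and nonincreasing on $[0,+\infty)$, it is measurable and bounded on every compact interval, hence Riemann integrable there, so all integrals appearing are finite. Then, for the per-term estimate, I would use monotonicity: for every $x \in [S_{t-1},S_t]$ we have $f(x) \geq f(S_t)$ because $S_t$ is the right (largest) endpoint of the interval and $f$ is nonincreasing; integrating this pointwise inequality over an interval of length $S_t - S_{t-1} = a_t$ gives $\int_{S_{t-1}}^{S_t} f(x)\,dx \geq (S_t - S_{t-1})\, f(S_t) = a_t\, f(S_t)$. If $a_t = 0$ the interval is degenerate and both sides vanish, so the inequality still holds.

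Finally I would sum the per-term estimate over $t = 1,\dots,T$ and use additivity of the integral over abutting intervals, $\sum_{t=1}^T \int_{S_{t-1}}^{S_t} f(x)\,dx = \int_{S_0}^{S_T} f(x)\,dx = \int_{a_0}^{\sum_{t=0}^T a_t} f(x)\,dx$, which is exactly the claimed bound, since $a_0 + \sum_{i=1}^t a_i$ is precisely $S_t$ and the telescoped limits $S_0, S_T$ are $a_0$ and $\sum_{t=0}^T a_t$. I do not expect any genuine obstacle: the only points needing care are the degenerate case $a_t = 0$ (handled above) and bookkeeping of the integration limits, both of which are immediate from the definition of $S_t$. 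The mathematical content is simply the standard comparison of a monotone function's right-endpoint sum with its integral.
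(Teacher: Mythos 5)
Your proof is correct: it is the standard right-endpoint comparison of a nonincreasing function with its integral, which is exactly the argument behind the cited result (the paper itself does not reprove this lemma but imports it from \citet[Lemma 4.13]{orabona2022}). The only point worth noting is that the argument requires $a_t \geq 0$ for all $t\geq 1$ (so that $S_{t-1}\leq S_t$), an assumption implicit in the lemma statement and satisfied in all of the paper's applications, and which you correctly invoke.
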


\begin{fact}\label{fact:1}
For any $x \geq 0$, we have $e^{-x} \leq \frac{1}{x}$.
\end{fact}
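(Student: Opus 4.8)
The plan is to deduce this from the elementary tangent-line inequality $e^y \geq 1 + y$, which holds for every real $y$ by convexity of the exponential (the line $1+y$ is the tangent to $e^y$ at $y=0$, and $e^y$ lies above all its tangents). Concretely, I would first record $e^x \geq 1 + x$ for all $x \geq 0$, either by citing convexity or, if a self-contained argument is preferred, by noting that $g(x) = e^x - 1 - x$ satisfies $g(0) = 0$ and $g'(x) = e^x - 1 \geq 0$ on $[0,\infty)$, so $g$ is nondecreasing and hence nonnegative there.

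Next I would restrict to $x > 0$ (the value $x = 0$ being vacuous under the convention $1/0 = +\infty$, or simply excluded). For $x > 0$ we have $1 + x \geq x > 0$, so chaining with the previous step gives $e^x \geq 1 + x \geq x > 0$. Since all three quantities are positive, taking reciprocals reverses the inequalities and yields $e^{-x} = 1/e^x \leq 1/x$, which is exactly the claim.

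There is no real obstacle here; the only thing to be slightly careful about is the degenerate endpoint $x = 0$, where the right-hand side is not finite, so the statement should be read for $x > 0$ (which is the only regime in which Fact~\ref{fact:1} is invoked in the proof of Lemma~\ref{lem:summation}, since the arguments of $\lambda_{s,t}$ there are of the form $\tfrac{\Dcal_t}{\Dcal_t - \Dcal_s}$ with $\Dcal_t > \Dcal_s$, hence strictly positive). I would phrase the statement and proof accordingly, or simply note the convention, and keep the whole argument to one or two lines.
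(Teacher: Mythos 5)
Your proof is correct: the chain $e^x \geq 1+x \geq x > 0$ for $x>0$, followed by taking reciprocals, is the standard argument, and your remark about the degenerate endpoint $x=0$ (where the fact is only used with strictly positive arguments in Lemma~\ref{lem:summation}) is apt. The paper states this fact without proof, so there is no alternative argument to compare against; your one-line derivation is exactly what is intended.
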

\begin{fact}\label{fact:2}
For any $x \geq 1$, we have $e^{-x} \leq \frac{1}{x \log^2(x)}$.
\end{fact}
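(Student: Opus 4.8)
The statement to establish is \cref{fact:2}: for every $x \ge 1$, $e^{-x} \le \frac{1}{x\log^2 x}$. For $x = 1$ the right-hand side is $+\infty$ (since $\log 1 = 0$), so there is nothing to prove; hence one may assume $x > 1$, where the claim is equivalent to $e^x \ge x\log^2 x$. The plan is to prove this equivalent form by sandwiching $x\log^2 x$ and $e^x$ around multiples of $x^3$.

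First I would record the elementary bound $\log x \le x/e$, valid for all $x > 0$ with equality at $x = e$: the function $g(x) = x/e - \log x$ is convex (since $g''(x) = 1/x^2 > 0$), has its only critical point at $x = e$ where $g'(x) = 1/e - 1/x$ vanishes, and $g(e) = 1 - 1 = 0$, so $g \ge 0$ everywhere. Because $x > 1$ forces $\log x > 0$, squaring gives $\log^2 x \le x^2/e^2$, hence $x\log^2 x \le x^3/e^2$. Next I would use the power-series bound $e^x \ge x^3/6$ for $x \ge 0$, which follows by retaining only the cubic term of $e^x = \sum_{k\ge 0} x^k/k!$ and discarding the remaining nonnegative summands. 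Since $e^2 > 6$, these estimates combine into $x\log^2 x \le x^3/e^2 \le x^3/6 \le e^x$, which is exactly the desired inequality.

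I do not expect any genuine obstacle here: the argument is a short chain of one convexity fact and one truncated-series estimate, with plenty of slack (the inequality $e^x \ge x\log^2 x$ is tightest near $x \approx e$, where $e^x \approx 15$ while $x\log^2 x \approx 2.7$). The only delicate points are the degenerate case $x = 1$ and the harmless numerical check $e^2 > 6$; the latter can be sidestepped by instead using the convexity bound $x + 2 \ge 3\log x$ for $x>0$ (minimum value $5 - 3\log 3 > 0$ at $x = 3$), which yields $e^{x+2} \ge x^3$ and hence $e^x \ge x^3/e^2 \ge x\log^2 x$ directly, or alternatively by splitting $e^{-x} = e^{-x/3}e^{-2x/3}$ and applying \cref{fact:1} to the first factor.
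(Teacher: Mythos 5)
Your proof is correct. The paper states Fact~\ref{fact:2} without any proof (it appears alongside Fact~\ref{fact:1} as an unproven elementary fact used in the proof of Lemma~\ref{lem:summation}), so there is no argument in the paper to compare against; your chain $x\log^2 x \leq x^3/e^2 \leq x^3/6 \leq e^x$, resting on the convexity bound $\log x \leq x/e$ and the truncated series $e^x \geq x^3/3!$, is a valid and self-contained justification, and your handling of the degenerate case $x=1$ is appropriate.
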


\begin{proof}[Proof of Lemma \ref{lem:summation}]
We have two summations as 
\[
\sum_{t=1}^T e^{-\frac{\Dcal_{t+\hat{d}_t}}{\Dcal_{t+\hat{d}_t} - \Dcal_{t}}} + \sum_{t=1}^T e^{-\frac{\Dcal_{t+\sigmamax^t+\hat{d}_t}}{\Dcal_{t+\sigmamax^t+\hat{d}_t} - \Dcal_{t}}}, 
\]
where we show an upper bound of $\Ocal(\hatsigmamax)$ for each of them.

\textbf{Bounding the First Summation:}
Let $T_0$ be the time satisfying $\sqrt{\Dcal_{T_0}} = \frac{\hatsigmamax}{K^{1/3}\log(K)}$, then using Facts \ref{fact:1} and \ref{fact:2} we have
\begin{align*}
    \sum_{t=1}^T e^{-\frac{\Dcal_{t+\hat{d}_t}}{\Dcal_{t+\hat{d}_t} - \Dcal_{t}}} \leq \underbrace{\sum_{t=1}^{T_0} \frac{\Dcal_{t+\hat{d}_t} - \Dcal_{t}}{\Dcal_{t+\hat{d}_t}}}_{A} + \underbrace{\sum_{t=T_0+1}^T \frac{\Dcal_{t+\hat{d}_t} - \Dcal_{t}}{\Dcal_{t+\hat{d}_t}\log^2\lr{\frac{\Dcal_{t+\hat{d}_t}}{\Dcal_{t+\hat{d}_t} - \Dcal_{t}}}}}_{B}.
\end{align*}
For $A$ we give the following bound

\begin{align*}
    A = \sum_{t=1}^{T_0} \sum_{s=t+1}^{t+\hat{d}_t} \frac{\hat\sigma_s}{\Dcal_{t+\hat{d}_t}}  &= \sum_{s=1}^{T_0} \sum_{t = 0}^{s-1} \frac{\hat\sigma_s \1(t+\hat{d}_t \geq s)}{\Dcal_{t+\hat{d}_t}}\\
    &\leq \sum_{s=1}^{T_0} \frac{\hat\sigma_s^2}{\Dcal_{s}}\\
    &\leq  \sum_{s=1}^{T_0} \frac{\hat\sigma_s \sqrt{\Dcal_{s}}}{K^{1/3} \log(K) \Dcal_{s}}\\
    &=  \sum_{s=1}^{T_0} \frac{\hat\sigma_s}{K^{1/3} \log(K) \sqrt{\Dcal_{s}}}\\
    &\leq \Ocal\lr{\frac{\sqrt{D_{T_0}}}{K^{1/3} \log(K)}} = \Ocal\lr{\frac{\hatsigmamax}{K^{2/3}\log^2(K)}},
\end{align*}

 where the second equality is by swapping the summations, the first inequality holds because $\Dcal_{t+\hat{d}_t} \geq \Dcal_s$, the third inequality uses $\hat\sigma_s \leq \dmax^s \leq  \frac{\sqrt{\Dcal_{s}}}{K^{1/3}\log K}$, and the last inequality uses Lemma \ref{lem:orabona}.

The bound for $B$ is as follows 
\begin{align*}
    B = \sum_{t=T_0+1}^T \sum_{s=t+1}^{t+\hat{d}_t}  \frac{\hat\sigma_s}{\Dcal_{t+\hat{d}_t}\log^2\lr{\frac{\Dcal_{t+\hat{d}_t}}{\Dcal_{t+\hat{d}_t} - \Dcal_{t}}}} &\leq \sum_{t=T_0+1}^T \sum_{s=t+1}^{t+\hat{d}_t} \frac{\hat\sigma_s}{\Dcal_{t+\hat{d}_t}\log^2\lr{\frac{7K^{1/3}\log(K)\Dcal_{t+\hat{d}_t}}{\hatsigmamax \sqrt{\Dcal_{t+\hat{d}_t}}}}}\\
    &= \sum_{s=T_0+1}^T \sum_{t = T_0+1}^{s-1} 
     \frac{\hat\sigma_s \1(t + \hat{d}_t \geq s)}{\Dcal_{t+\hat{d}_t}\log^2\lr{\frac{\sqrt{7K^{1/3}\log(K)\Dcal_{t+\hat{d}_t}}}{\hatsigmamax}}}\\
    &= \sum_{s=T_0+1}^T \sum_{t = T_0+1}^{s-1} 
     \frac{\hat\sigma_s \1(t + \hat{d}_t \geq s)}{4\Dcal_{t+\hat{d}_t}\log^2\lr{\frac{49K^{2/3}\log^2(K)\Dcal_{t+\hat{d}_t}}{\hatsigmamax^2}}}\\
    &\leq \sum_{s=T_0+1}^T \frac{\hat\sigma_s^2}{4\Dcal_{s}\log^2\lr{49K^{2/3}\log^2(K)\frac{\Dcal_s}{\hatsigmamax^2}}}\\
    &\leq  \hatsigmamax \sum_{s=T_0+1}^T \frac{\hat\sigma_s}{4\Dcal_{s}\log^2\lr{\frac{49K^{2/3}\log^2(K)\Dcal_s}{\hatsigmamax^2}}}\\
    &\leq \hatsigmamax \int_{\Dcal_{T_0}}^{\Dcal_T} \frac{1}{4x\log^2(\frac{49K^{2/3}\log^2(K)x}{\hatsigmamax^2})}\\
   &= \hatsigmamax \frac{-1}{4\log(\frac{49K^{2/3}\log^2(K)x}{\hatsigmamax^2})} \bigg|_{\Dcal_{T_0}}^{\Dcal_T} = \mathcal{O}(\hatsigmamax),
\end{align*}
where the first inequality follows by $\hat\sigma_{s} \leq \hatsigmamax$ and our skipping procedure that ensures $\hat{d}_t \leq \dmax^{t} \leq  \frac{\sqrt{\Dcal_{t+ \hat{d}_t}}}{K^{1/3}\log K}$, the second equality is by swapping the summations, the second inequality follows by $\Dcal_{t+\hat{d}_t} \geq \Dcal_s$ and $\sum_{t =1}^{s-1} \1(t+\hat{d}_t \geq s) = \hat\sigma_s$, the last inequality follows by Lemma \ref{lem:orabona} , and the last equality uses  $\int \frac{1}{x\log^2(x/\hatsigmamax^2)} dx = \frac{-1}{\log(x/\hatsigmamax^2)}$.

\textbf{Bound the Second Summation:} The bound for the second summation follows the same approach, but it requires additional care due to existence of $\sigmamax^t$ in it. Let $T_0$ to be the time satisfying $\sqrt{\Dcal_{T_0}} = \frac{\hatsigmamax}{K^{1/3}\log(K)}$, then using Facts \ref{fact:1} and \ref{fact:2} we have
\begin{align*}
    \sum_{t=1}^T e^{-\frac{\Dcal_{t+\sigmamax^t+\hat{d}_t}}{\Dcal_{t+\sigmamax^t+\hat{d}_t} - \Dcal_{t}}} \leq \underbrace{\sum_{t=1}^{T_0} \frac{\Dcal_{t+\sigmamax^t+\hat{d}_t} - \Dcal_{t}}{\Dcal_{t+\sigmamax^t+\hat{d}_t}}}_{A} + \underbrace{\sum_{t=T_0+1}^T \frac{\Dcal_{t+\sigmamax^t+\hat{d}_t} - \Dcal_{t}}{\Dcal_{t+\sigmamax^t+\hat{d}_t}\log^2\lr{\frac{\Dcal_{t+\sigmamax^t+\hat{d}_t}}{\Dcal_{t+\sigmamax^t+\hat{d}_t} - \Dcal_{t}}}}}_{B}.
\end{align*}
For $A$ we give the following bound

\begin{align*}
    A = \sum_{t=1}^{T_0} e^{-\frac{\Dcal_{t+\sigmamax^t+\hat{d}_t}}{\Dcal_{t+\sigmamax^t+\hat{d}_t} - \Dcal_{t}}} &\leq \sum_{t=1}^{T_0} \frac{\Dcal_{t+\sigmamax^t+\hat{d}_t} - \Dcal_{t}}{\Dcal_{t+\sigmamax^t+\hat{d}_t}}\\
    &= \sum_{t=1}^{T_0} \sum_{s=t+1}^{t+\sigmamax^t+\hat{d}_t} \frac{\hat\sigma_s}{\Dcal_{t+\sigmamax^t+\hat{d}_t}}\\
    &\leq \sum_{s=1}^{T_0} \sum_{t = 0}^{s-1} \frac{\hat\sigma_s \1(t+\sigmamax^t+\hat{d}_t \geq s)}{\Dcal_{s}}\\
    &\leq \sum_{s=1}^{T_0} \frac{(2\sigmamax^{s} + \hat\sigma_{s - \sigmamax^s})\hat\sigma_s}{\Dcal_{s}}\\
    &\leq \sum_{s=1}^{T_0} \frac{3\sqrt{\Dcal_s} \hat\sigma_s}{K^{1/3}\log(K)\Dcal_{s}}\\
    &=  \sum_{s=1}^{T_0} \frac{3\hat\sigma_s}{K^{1/3}\log(K)\sqrt{\Dcal_{s}}}\\
    &\leq \Ocal\lr{\frac{\sqrt{\Dcal_{T_0}}}{K^{1/3}\log(K)}}  = \Ocal(\frac{\hatsigmamax}{K^{2/3}\log^2(K)}),
\end{align*}

where the first inequality is by Fact \ref{fact:1}, the second inequality holds by swapping the summations and that 
$\Dcal_{t+\sigmamax^t+\hat{d}_t} \geq \Dcal_s$, third inequality use the following derivation
\begin{align}\label{eq:outstanding-sigmamax}
    \1(t+\sigmamax^t+\hat{d}_t \geq s) &\leq \1(t+\hat{d}_t \geq s) + \1(s > t+\hat{d}_t \geq s-\sigmamax^t)\notag\\
    &\leq \1(t+\hat{d}_t \geq s) + \1(t \in [s-\sigmamax^t, s-1]) + \1(t < s-\sigmamax^t \wedge t+\hat{d}_t \geq s-\sigmamax^t),
\end{align} 
the third equality is by swapping the summations, the third inequality uses $\hat\sigma_s \leq \dmax^s \leq  \frac{\sqrt{\Dcal_{s}}}{K^{1/3}\log K}$, and finally the last inequality uses Lemma \ref{lem:orabona}.

The bound for $B$ is as follows 
\begin{align*}
    B &= \sum_{t=T_0+1}^T \frac{\sum_{s=t+1}^{t+\sigmamax^t+\hat{d}_t} \hat\sigma_s}{\Dcal_{t+\sigmamax^t+\hat{d}_t}\log^2\lr{\frac{\Dcal_{t+\sigmamax^t+\hat{d}_t}}{\sum_{s=t+1}^{t+\sigmamax^t+\hat{d}_t} \hat\sigma_s}}} \\
    &\leq \sum_{t=T_0+1}^T \sum_{s=t+1}^{t+\sigmamax^t+\hat{d}_t} \frac{\hat\sigma_s}{\Dcal_{t+\sigmamax^t+\hat{d}_t}\log^2\lr{\frac{7K^{1/3}\log(K)\Dcal_{t+\sigmamax^t+\hat{d}_t}}{2\hatsigmamax \sqrt{\Dcal_{t+\sigmamax^t+\hat{d}_t}}}}}\\
    &= \sum_{s=T_0+1}^T \sum_{t = T_0+1}^{s-1} \frac{\hat\sigma_s \1(t +\sigmamax^t+ \hat{d}_t \geq s)}{\Dcal_{t+\sigmamax^t+\hat{d}_t}\log^2\lr{\frac{3K^{1/3}\log(K)\sqrt{\Dcal_{t+\sigmamax^t+\hat{d}_t}}}{\hatsigmamax}}}\\
    &= \sum_{s=T_0+1}^T \sum_{t = T_0+1}^{s-1} \frac{4\hat\sigma_s \1(t +\sigmamax^t+ \hat{d}_t \geq s)}{\Dcal_{t+\sigmamax^t+\hat{d}_t}\log^2\lr{\frac{9K^{2/3}\log^2(K)\Dcal_{t+\sigmamax^t+\hat{d}_t}}{\hatsigmamax^2}}}\\
    &\leq \sum_{s=T_0+1}^T \frac{4(2\sigmamax^s + \hat\sigma_{s-\sigmamax^s})\hat\sigma_s}{\Dcal_{s}\log^2\lr{\frac{\Dcal_s}{4\hatsigmamax^2}}}\\
    &\leq  \hatsigmamax \sum_{s=T_0+1}^T \frac{12\hat\sigma_s}{\Dcal_{s}\log^2\lr{\frac{9K^{2/3}\log^2(K)\Dcal_s}{\hatsigmamax^2}}}\\
    &\leq \hatsigmamax \int_{\Dcal_{T_0}}^{\Dcal_T} \frac{12}{x\log^2(\frac{9K^{2/3}\log^2(K)x}{\hatsigmamax^2})}\\
   &= \hatsigmamax \frac{-12}{\log(\frac{9K^{2/3}\log^2(K)x}{\hatsigmamax^2})} \bigg|_{\Dcal_{T_0}}^{\Dcal_T} = \mathcal{O}(\hatsigmamax),
\end{align*}
where the first inequality is due to our skipping procedure that ensures $\max\lrc{\sigmamax^t, \hat{d}_t} \leq \dmax^{t} \leq  \sqrt{\Dcal_{t+\sigmamax^t + \hat{d}_t}}$, the second equality is by swapping the summations, the second inequality follows by $\Dcal_{t+\hat{d}_t} \geq \Dcal_s$ and \eqref{eq:outstanding-sigmamax}, the last inequality follows by Lemma \ref{lem:orabona}, and the last equality uses $\int \frac{1}{x\log^2(x/\hatsigmamax^2)} dx = \frac{-1}{\log(x/\hatsigmamax^2)}$.  
\end{proof}
\section{A proof of Lemma \ref{lem:greedy-rearrangement}}\label{sec:rearrangement}
\begin{proof}
We use the term \emph{free round} to refer to a round $r$ such that $\upsilon_{r}^{new}$ is zero. By applying induction on the time step $t$, we show that if the algorithm is currently at time $t$ and intends to rearrange the $\upsilon_t$ arrivals, there exist $\upsilon_t$ free rounds in the interval $[t, t + \sigmamax^t - \hat\sigma_t + \upsilon_t]$ to which the algorithm can push the arrivals. This ensures that the arrival from round $s$, will be rearranged to round $\pi(s) \geq s + \hat{d}_s$, such that $\pi(s) - (s + \hat{d}_s) 
\leq \sigmamax^t$. To this end, we assume the induction assumption holds for all $r < t$, and then proceed with induction step for $t$.

\textbf{Induction Base:}\\
The induction base corresponds to the first arrival time, denoted as $t_0$. At this time step, all $\upsilon_{t_0}$ arrivals can be rearranged to the free rounds in the interval $[t_0, t_0 + \upsilon_{t_0} - 1]$, which is a subset of $[t_0, t_0 + \sigmamax^{t_0} - \hat\sigma_{t_0} + \upsilon_{t_0} - 1]$. Therefore, the induction base holds.

\textbf{Induction step:}\\
Assume that we are at round $t$, and our aim is to rearrange the arrivals of round $t$. We define $t_1$ as the last occupied round, where $t_1 \geq t$. So it suffices to prove $t_1 - t \leq \sigmamax^{t} - \hat\sigma_t$. We first note that since the algorithm is greedy, all rounds $t, t+1, \ldots, t_1-1$ must also be occupied by some arrivals from the past.

Let $t_0 < t$ be the first round where one of its arrivals has been rearranged to $t$, and let $\upsilon_{t_0}^{'}$ be the number of arrivals at time $t_0$ that are rearranged to some rounds before $t$. Then by induction assumption we know 
\begin{equation}\label{eq:inductionassumption}
    t - t_0 \leq \sigmamax^{t_0} - \hat\sigma_{t_0} + \upsilon_{t_0}^{'} + 1= \sigmamax^{t_0}  - \sum_{r=1}^{t_0-1} \1(r + \hat{d}_r \geq t_0) + \upsilon_{t_0}^{'} + 1.
\end{equation}
On the other hand, by the choice of $t_0$, each occupied round $t, t+1, \ldots, t_1$ must be occupied by exactly one arrival among the arrivals of rounds $t_0, \ldots, t-1$, except for the $\upsilon_{t}^{'}$ arrivals of $t_0$ that are rearranged to some rounds before $t$. So we have
\begin{align*}
t_1 - t + 1 &\leq \sum_{r = 1}^{t-1} \1(t_0 \leq r + \hat{d}_r \leq t-1)  - \upsilon_{t_0}^{'}\\
&= \sum_{r = 1}^{t_0-1} \1(t_0 \leq r + \hat{d}_r \leq t-1) + \sum_{r = t_0}^{t-1} \1(t_0 \leq r + \hat{d}_r \leq t-1) - \upsilon_{t_0}^{'}\\
&= \sum_{r = 1}^{t_0-1} \1(t_0 \leq r + \hat{d}_r \leq t-1) + t - t_0 - \sum_{r = t_0}^{t-1} \1(r + \hat{d}_r \geq t) - \upsilon_{t_0}^{'},
\end{align*}
where the second equality holds because $\sum_{r = t_0}^{t-1} \1(r + \hat{d}_r \geq t_0) = t - t_0$. We use \eqref{eq:inductionassumption} to bound $t - t_0$ in the above inequality and get
\begin{align}
t_1 - t &\leq \sigmamax^{t_0} + \sum_{r = 1}^{t_0-1} \1(t_0 \leq r + \hat{d}_r \leq t-1) - \sum_{r=1}^{t_0-1} \1(r + \hat{d}_r \geq t_0) - \sum_{r = t_0}^{t-1} \1(r + \hat{d}_r \geq t)\notag\\
&= \sigmamax^{t_0} - \sum_{r = 1}^{t_0-1} \1(r + \hat{d}_r \geq t) - \sum_{r = t_0}^{t-1} \1(r + \hat{d}_r \geq t)\notag\\
&=  \sigmamax^{t_0} - \sum_{r = 1}^{t-1} \1(r + \hat{d}_r \geq t) \leq \sigmamax^{t} -  \hat\sigma_{t} \label{eq:inductionstep},
\end{align}
where the last inequality follows by the fact that $\lrc{\sigmamax^{r}}_{r\in[T]} $ is a non-decreasing sequence.  So if the algorithm rearranges the $\upsilon_{t}$ arrivals at round $t$ to rounds $t_1+1, \ldots, t_1+\upsilon_{t}$, then, using the inequality \eqref{eq:inductionstep}, we can conclude that these rounds fall within the interval $[t, t + \sigmamax^{t} - \hat\sigma_{t} + \upsilon_{t}]$.
\end{proof}

\section{Adversarial bounds with $\dmax$ cannot benefit from skipping}\label{sec:skipping-benefit}

In this section we show that adversarial regret bounds that involve terms that are linear in $\dmax$, such as the bounds of \citet{masoudian2022}, cannot benefit from skipping. We prove the following lemma.

\begin{lemma}
\[
\sqrt{D}\leq \min_{\Scal}\lr{|\Scal| + \sqrt{D_{\bar\Scal}}} + \dmax.
\]
\end{lemma}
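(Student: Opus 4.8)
The plan is to fix an arbitrary skipped set $\Scal \subseteq [T]$ and establish the pointwise inequality $\sqrt{D} \le |\Scal| + \sqrt{D_{\bar\Scal}} + \dmax$; since $\dmax$ does not depend on $\Scal$, taking the minimum over $\Scal$ on the right-hand side then yields the claim $\sqrt{D}\le \min_{\Scal}\lr{|\Scal| + \sqrt{D_{\bar\Scal}}} + \dmax$.

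First I would split the total delay according to $\Scal$, writing $D = D_{\Scal} + D_{\bar\Scal}$ with $D_{\Scal} = \sum_{t\in\Scal} d_t$, and apply subadditivity of the square root, $\sqrt{a+b}\le\sqrt a + \sqrt b$ for $a,b\ge 0$, to get $\sqrt{D}\le \sqrt{D_{\Scal}} + \sqrt{D_{\bar\Scal}}$. Next I would bound the skipped contribution crudely by $D_{\Scal}\le |\Scal|\,\dmax$, since each of the $|\Scal|$ skipped delays is at most $\dmax$, and then use AM--GM, $\sqrt{xy}\le \tfrac{x+y}{2}\le x+y$ with $x=|\Scal|$ and $y=\dmax$, to convert the geometric-mean term into additive form: $\sqrt{D_{\Scal}}\le \sqrt{|\Scal|\,\dmax}\le |\Scal| + \dmax$. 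Combining the two displays gives $\sqrt{D}\le |\Scal| + \dmax + \sqrt{D_{\bar\Scal}}$ for every $\Scal$, and pulling $\dmax$ out of the minimization over $\Scal$ finishes the argument.

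There is essentially no obstacle here; the statement is elementary. The only points requiring (minor) care are the order of the two inequalities — first subadditivity of $\sqrt{\cdot}$ to separate the skipped and non-skipped delay masses, then AM--GM to replace $\sqrt{|\Scal|\dmax}$ by $|\Scal|+\dmax$ — and the observation that $\dmax$ is constant with respect to the minimization and hence may be moved outside of $\min_{\Scal}$. The lemma then makes precise the claim in the introduction that a $\dmax$-additive term already dominates $\sqrt{D}$ up to the skipping-type quantity $\min_{\Scal}\lr{|\Scal|+\sqrt{D_{\bar\Scal}}}$, so bounds carrying an additive $\dmax$ cannot be improved by skipping.
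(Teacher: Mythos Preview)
Your proof is correct and follows essentially the same approach as the paper's: both use $D_{\Scal}\le |\Scal|\,\dmax$, subadditivity of the square root, and an AM--GM/Young-type inequality to pass from $|\Scal|\,\dmax$ to $|\Scal|+\dmax$, then minimize over $\Scal$. The only cosmetic difference is the order of operations---the paper first bounds $D\le D_{\bar\Scal}+|\Scal|^2+\dmax^2$ and then applies $\sqrt{a+b+c}\le\sqrt a+\sqrt b+\sqrt c$, whereas you apply subadditivity first and AM--GM second---but the underlying argument is identical.
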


\begin{proof}
For any split of the rounds $[T]$ into $\Scal$ and $\bar\Scal$ we have
\[
D = D_{\bar\Scal} + D_\Scal \leq D_{\bar\Scal} + |\Scal|\dmax \leq D_{\bar\Scal} + |\Scal|^2 + \dmax^2.
\]
Thus
\[
\sqrt{D} \leq \sqrt{D_{\bar\Scal} + |\Scal|^2 + \dmax^2} \leq |\Scal| + \sqrt{D_{\bar\Scal}} + \dmax,
\]
and since the above holds for any $\Scal$, we obtain the statement of the lemma.
\end{proof}

We remind that skipping allows to replace a term of order $\sqrt{D}$ by a term of order $\min_{\Scal}\lr{|\Scal| + \sqrt{D_{\bar\Scal}}}$ (for simplicity we ignore factors dependent on $K$). Thus, it may potentially replace a bound of order $\sqrt{D} + \dmax$ by a bound of order $\min_{\Scal}\lr{|\Scal| + \sqrt{D_{\bar\Scal}}} + \dmax$, but since by the lemma $\min_{\Scal}\lr{|\Scal| + \sqrt{D_{\bar\Scal}}} + \dmax = \Omega(\sqrt{D})$, this would not improve the order of the bound.


\section{Details of the Adversarial Analysis}\label{sec:proof-lemma8}
The only difference between our algorithm and the algorithm of \citet{zimmert2020} is the implicit exploration and the slightly modified skipping rule.
Let $\ell_t$ be the original loss sequence, then the adversary can create an adaptive sequence $\tilde\ell_t$ that forces the player to play according to the implicit exploration rule by simply down-scaling all the losses by 
\begin{align*}
    \tilde\ell_{ti} = \frac{x_{ti}\ell_{ti}}{\max\lrc{x_{t,i}, \lambda_{t,t+\hat{d}_t}}}\,.
\end{align*}
Our regret bound decomposes now into
\begin{align*}
    \Reg &= \max_{i^*_T}\E\left[\sum_{t=1}^T\ip{x_t,\ell_t}-\ell_{t,i^*_T}\right]\\
    &\leq\max_{i^*_T}\E\left[\sum_{t=1}^T\ip{x_t,\tilde\ell_t}-\tilde\ell_{t,i^*_T}\right]+\E\left[\sum_{t=1}^T\ip{x_t,\ell_t-\tilde\ell_t}\right]\,.
\end{align*}
For the second term we have 
\begin{align*}
    \sum_{t=1}^T\ip{x_t,\ell_t-\tilde\ell_t} \leq \sum_{i=1}^K\sum_{t=1}^T (1-\frac{x_{ti}}{x_{ti}+\lambda_{t,t+\hat{d}_t}})x_{ti}\leq K\sum_{t=1}^T\lambda_{t,t+\hat{d}_t}\,,
\end{align*}
which can be controlled via Lemma~\ref{lem:summation}.

The first term is bounded by \citet[Theorem 3]{zimmert2020} (since the player plays their algorithm on the modified loss sequence)
by 
\begin{align*}\max_{i^*_T}\E\left[\sum_{t=1}^T\ip{x_t,\ell_t}-\ell_{t,i^*_T}\right] &\leq 4\sqrt{KT} + \sum_{t=1}^T \gamma_t \hat\sigma_t  + \gamma_T^{-1}\log K  +  S^*\\
&\leq 4\sqrt{KT} + \sum_{t =1}^T \frac{\hat\sigma_t \sqrt{\log K}}{7\sqrt{\Dcal_t}} + 7\sqrt{\Dcal_T \log K} + S^*\\
&= 4\sqrt{KT} +  \sqrt{\log K} \sum_{t =1}^T \frac{\Dcal_t - \Dcal_{t-1}}{7\sqrt{\Dcal_t}} + 7\sqrt{\Dcal_T \log K} + S^*\\
&\leq 4\sqrt{KT} +  \frac{2\sqrt{\log K}}{7} \sum_{t =1}^T \sqrt{\Dcal_t} - \sqrt{D_{t-1}} + 7\sqrt{\Dcal_T \log K} + S^*\\
&= 4\sqrt{KT} + \frac{51}{7}\sqrt{\Dcal_T \log K} + S^*\\
&\leq 4\sqrt{KT} + \frac{51}{7}\min_{\Scal \subseteq [T]} \lrc{|\Scal| + \sqrt{\Dcal_{\bar \Scal} \log K }} + S^*,
\end{align*}
where the first equality uses the definition of $\gamma_t$, the third inequality follows by $\forall a, b > 0: \frac{a-b}{\sqrt{a}} \leq 2(\sqrt{a} - \sqrt{b})$, and the last inequality uses the following lemma
\begin{lemma}\label{lem:totaldelaybound} The skipping technique guarantees the following bound
    \[
    \sqrt{\Dcal_T K^{\frac23} \log K} \leq  \min_{\Scal \subseteq [T]} \lrc{|\Scal| + \sqrt{\Dcal_{\bar \Scal} K^{\frac23} \log K }}. 
    \]
\end{lemma}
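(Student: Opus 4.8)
The plan is to bound the algorithm's cumulative outstanding count $\Dcal_T=\sum_{t=1}^T\hat\sigma_t$ against an arbitrary split of $[T]$ into a skipped set $\Scal$ and its complement $\bar\Scal$, obtaining an inequality in which $\dmax^T$ — and hence $\Dcal_T$ itself — appears, and then to close the loop by solving a quadratic.

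First I would fix any $\Scal\subseteq[T]$ and write $\Dcal_T=\sum_{s=1}^T\rho_s$, where $\rho_s$ is the number of rounds $t$ at which the observation of round $s$ is simultaneously outstanding ($s<t<s+d_s$) and still being counted ($s\notin\Scal_{t-1}$). Two bounds on $\rho_s$ are needed. The trivial one is $\rho_s\le d_s$, from the constraint $t<s+d_s$. The non-trivial one, $\rho_s=O(\dmax^T)$, is the heart of the proof and uses the skipping rule: whenever $s$ is still counted at round $t$, the skip test for $s$ at round $t-1$ must have failed, i.e.\ $\min\{d_s,(t-1)-s\}<\dmax^{t-1}$; since the observation of $s$ has not yet arrived we have $d_s>(t-1)-s$, so the test reduces to $(t-1)-s<\dmax^{t-1}\le\dmax^T$, using that $t\mapsto\dmax^t$ is nondecreasing (because $\Dcal_t$ is). Hence every round counted in $\rho_s$ lies within distance $\dmax^T$ of $s$, so $\rho_s=O(\dmax^T)$. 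Splitting $\sum_s\rho_s$ and bounding $\rho_s\le d_s$ on $\bar\Scal$ (whose sum is $\Dcal_{\bar\Scal}$) and $\rho_s\le 2\dmax^T$ on $\Scal$, I obtain
\[
\Dcal_T\;\le\;\Dcal_{\bar\Scal}+2|\Scal|\,\dmax^T,
\]
where the constant $2$ covers the off-by-one in counting integer rounds in an interval of length $\dmax^T$.

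Then I would substitute $\dmax^T=\sqrt{\Dcal_T/(49K^{2/3}\log K)}$. Writing $c:=K^{2/3}\log K$ and $X:=\sqrt{c\,\Dcal_T}$, the previous display becomes $X^2\le c\,\Dcal_{\bar\Scal}+\frac{2|\Scal|}{7}X$; solving this quadratic inequality for $X\ge 0$ and using $\sqrt{u+v}\le\sqrt u+\sqrt v$ gives $X\le\frac{2|\Scal|}{7}+\sqrt{c\,\Dcal_{\bar\Scal}}\le|\Scal|+\sqrt{c\,\Dcal_{\bar\Scal}}$. Finally, since $\Scal$ was arbitrary, taking the minimum over $\Scal$ on the right-hand side yields $\sqrt{\Dcal_T K^{2/3}\log K}\le\min_{\Scal\subseteq[T]}\{|\Scal|+\sqrt{\Dcal_{\bar\Scal}K^{2/3}\log K}\}$, as claimed.

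I expect the only real obstacle to be the bound $\rho_s=O(\dmax^T)$: it requires a careful reading of the skipping rule (handling the $\min\{d_s,t-s\}$ correctly for a still-outstanding observation) and of the exact round on which an index leaves the running count — here one can lean on the fact, already established for the skipping analysis, that at most one index is removed per round — so that the interval-length argument produces an $O(\dmax^T)$ free of any spurious $K$-dependent factor. The quadratic step is routine, and the large constant $49$ in the definition of $\dmax^t$ comfortably absorbs the rounding losses along the way.
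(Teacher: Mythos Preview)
Your proposal is correct and follows essentially the same route as the paper. The paper's proof uses the truncated delay $\hat d_t$ in place of your $\rho_s$ (they coincide up to $\pm 1$), invokes $\hat d_t\le \dmax^T$ and $d_t\ge \hat d_t$ to obtain the same core inequality $\Dcal_{\bar\Scal}\ge \Dcal_T-|\Scal|\,\dmax^T$, and then finishes by a continuous relaxation and concavity in $r=|\Scal|$ rather than by solving your quadratic; the two closing arguments are algebraically equivalent.
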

Combining the bounds on the first and the second terms provides the regret bound in Section \ref{sec:adversarial_analysis}. It only remains to provide a proof for Lemma \ref{lem:totaldelaybound}.

\begin{proof}[Proof of Lemma~\ref{lem:totaldelaybound}]
    For any $t \in [T]$ we have $\hat d_t\leq \sqrt{\Dcal_T/(49K^\frac{2}{3}\log(K))}$, therefore for any $R\subset[T]$:
    \begin{align*}
        \sum_{t\in [T]\setminus R}d_t \geq \sum_{t\in [T]\setminus R}\hat d_t &\geq \Dcal_{T} - |R|\sqrt{\Dcal_T/(49K^\frac{2}{3}\log(K))}
    \end{align*}
    Hence we can dereive the following lower bound, 
    \begin{align*}
        \min_{R \subseteq [T]} |R| + \sqrt{\sum_{s\in[T]\setminus R}d_s K^\frac{2}{3}\log(K)} &\geq \min_{r\in\lrs{0,\sqrt{49\Dcal_TK^\frac{2}{3}\log(K)}}}r+\sqrt{\Dcal_TK^\frac{2}{3}\log(K)-\frac17 r\sqrt{\Dcal_T K^\frac{2}{3}\log(K)}}\\
        &\geq \sqrt{\Dcal_TK^\frac{2}{3}\log(K)},
    \end{align*}
    where the second inequality uses the concavity in $r$. 

\section{A Bound on $S^*$}\label{sec:s-starbound}

Next, we reason about the nature of skips. The following lemma is an adaptation of \citet[Lemma 5]{zimmert2020} to our skipping threshold. To this end we provide two lemmas and then conclude then proof.
\begin{lemma}
\label{lem:elimination}
Algorithm~\ref{alg:FTRL-delay} will not skip more than 1 point at a time.
\end{lemma}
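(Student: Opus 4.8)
The plan is to argue by contradiction. Suppose that at some round $t$ the skipping loop of Algorithm~\ref{alg:FTRL-delay} adds two distinct rounds $s_1<s_2$ to $\Scal_t$; that is, $s_1,s_2\in[t-1]\setminus\Scal_{t-1}$ and both satisfy the if-condition $\min\{d_{s_i},\,t-s_i\}\ge\dmax^t$ for $i=1,2$. (For $t\le 2$ the skipping loop ranges over a set of size at most one, so the claim is trivial, and we may assume $t\ge 3$, hence $1\le s_1\le s_2-1\le t-2$.) The first ingredient I would record is the monotonicity of the skipping threshold: since $\Dcal_t=\Dcal_{t-1}+\hat\sigma_t$ with $\hat\sigma_t\ge 0$, the sequence $(\Dcal_t)_t$ is non-decreasing, and therefore so is $\dmax^t=\sqrt{\Dcal_t/(49K^{2/3}\log K)}$; in particular $\dmax^{t-1}\le\dmax^t$.

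Next I would exploit that $s_1$ survived the skipping loop at round $t-1$. Because $s_1\le t-2$ and $s_1\notin\Scal_{t-1}\supseteq\Scal_{t-2}$, round $s_1$ is among the indices examined by the skipping loop at round $t-1$, and since it was not added to $\Scal_{t-1}$ its if-condition there failed, i.e.\ $\min\{d_{s_1},\,(t-1)-s_1\}<\dmax^{t-1}$. On the other hand, the round-$t$ if-condition for $s_1$ gives $d_{s_1}\ge\dmax^t\ge\dmax^{t-1}$, so the failed minimum cannot be attained by $d_{s_1}$; hence $(t-1)-s_1<\dmax^{t-1}\le\dmax^t$, which I would rewrite as $s_1>t-1-\dmax^t$.

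Finally I would combine this with the information coming from $s_2$: its round-$t$ if-condition gives in particular $t-s_2\ge\dmax^t$, i.e.\ $s_2\le t-\dmax^t$, and since $s_1<s_2$ are integers, $s_1\le s_2-1\le t-1-\dmax^t$. This contradicts $s_1>t-1-\dmax^t$ from the previous paragraph, so at round $t$ at most one index satisfies the skipping if-condition, proving the lemma.

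The argument is essentially a discretized version of \citet[Lemma~5]{zimmert2020} adapted to the threshold $\dmax^t=\sqrt{\Dcal_t/(49K^{2/3}\log K)}$, and I do not expect a substantive obstacle. The only points that need care are the bookkeeping of which indices lie in the range of the skipping loops at rounds $t-1$ and $t$ (so that one may legitimately conclude that $s_1$'s if-condition failed at round $t-1$), and the observation — used twice — that since $d_{s_1}\ge\dmax^t\ge\dmax^{t-1}$, the inner minimum in $s_1$'s if-condition is always realized by the elapsed-time term rather than by $d_{s_1}$.
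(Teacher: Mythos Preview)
Your proof is correct and follows essentially the same approach as the paper's: assume two indices are skipped at round $t$, use monotonicity of $\dmax^t$ (via $\Dcal_t$ nondecreasing), and derive that the smaller of the two would already have satisfied the skipping condition at round $t-1$, contradicting its membership in $[t-1]\setminus\Scal_{t-1}$. Your version is in fact more carefully written than the paper's, explicitly checking that the smaller index lies in the range of the round-$(t-1)$ skipping loop and that the $d_{s_1}$ branch of the minimum cannot be the one violating the threshold.
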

\begin{proof}
We prove the lemma by contradiction.
Assume that $s_1,s_2$ are both deactivated at time $t$.
W.l.o.g.\ let $s_2 \leq s_1 -1$.
Skipping of $s_1$ at time $t$ means $t-s_1 \geq \sqrt{\mathcal{D}_t/(K^\frac{2}{3}\log(K))}\geq \sqrt{\mathcal{D}_{t-1}/(K^\frac{2}{3}\log(K))}$.
At the same time we assumed $t-1-s_2 \geq t-s_1$, which means that $s_2$ would have been deactivated at round $t-1$ or earlier.
\end{proof}
Recall that $\hat d_t $ is the contribution of a timestep $t$ to the sum $\Dcal_T$. Let $(t_1,\dots,t_{S^*})$ be an indexing of $\Scal$ and $c = 49K^\frac{2}{3}\log(K)$. We bound the number of skips by 
\begin{equation}\label{eq:inductionbound}
S^* \leq 2c \hat d_{t_S^*}.
\end{equation}
The above bound together with the fact that incurred delay $\hat d_{t_S^*}$ must be less than the the skipping threshold and the maximal delay $\dmax$ give us
\begin{align*}
    S^* &\leq \Ocal\lr{K^{\frac23} \log K \hat d_{t_S^*}}\\
    &\leq  \Ocal\lr{\min \lrc{\dmax K^{\frac23} \log K, \sqrt{\Dcal_T K^{\frac23} \log K}}}\\
    &\leq \Ocal\lr{ \min \lrc{\dmax K^{\frac23} \log K , \min_{\Scal \subseteq [T]} \lrc{|\Scal| + \sqrt{\Dcal_{\bar \Scal} K^{\frac23} \log K }}}},
\end{align*}
 where the last inequality follows by Lemma \ref{lem:totaldelaybound}.
\end{proof}

\begin{proof}[Proof of bound \eqref{eq:inductionbound}]

By Lemma~\ref{lem:elimination} we skip at most one
outstanding observation per round. Thus, we have that
\begin{align*}
&\hat d_{t_m} \geq \sqrt{\Dcal_{t_m+\hat d_{t_m}}/c}\geq \sqrt{\sum_{i=1}^m\hat{d}_{t_i}/c} = \frac{\sqrt{\hat d_{t_m} + \sum_{i=1}^{m-1} \hat d_{t_i}}}{\sqrt{c}}\,.
\end{align*}
By solving the quadratic inequality in $\hat d_{t_m}$ we obtain
\begin{align*}
\hat{d}_{t_m} \geq \frac{1 + \sqrt{1+4c\sum_{i=1}^{m-1}\hat{d}_{t_i}}}{2c} \,.
\end{align*}
Now we prove by induction that $\hat{d}_{t_m} \geq \frac{m}{2c}$.
The induction base holds since $\hat{d}_{t_1} = 1$.
For the inductive step we have
\begin{align*}
\hat{d}_{t_m} \geq \frac{1 + \sqrt{1+4c\sum_{i=1}^{m-1}\hat{d}_{t_i}}}{2c}
\geq \frac{1 + \sqrt{1+m(m-1)}}{2c} \geq \frac{m}{2c}\,.
\end{align*}
Then the induction step is satisfied. 
\end{proof}

\medskip



\end{document}